\newif\ifarxiv
\arxivtrue

\newif\ificlr
\iclrfalse

\documentclass{article} 

\ificlr
\usepackage{iclr2027_conference,times}
\else
\ifarxiv
\usepackage[margin=2.5cm]{geometry}
\usepackage{palatino}
\usepackage{natbib}
\usepackage{parskip}
\fi\fi

\usepackage{tikz}
\usetikzlibrary{arrows.meta,positioning,calc,decorations.pathreplacing}
\usepackage{preamble}

\usepackage{math_commands}

\title{
Statistical Benefits of Fine-Tuning from Pretrained Initialization in Diagonal Linear Networks
}

\author{
Alexandre Decl\`eves\\
TML, EPFL, Lausanne, Switzerland\\
\texttt{alexandre.decleves@gmail.com}
\ificlr\And\else\ifarxiv\and\fi\fi
Etienne Boursier \\
INRIA, LMO, Université Paris-Saclay, \\
Orsay, France \\
\texttt{etienne.boursier@inria.fr}
\ificlr\And\else\ifarxiv\and\fi\fi
Nicolas Flammarion \\
TML, EPFL, Lausanne, Switzerland\\
\texttt{nicolas.flammarion@epfl.ch}
}

\ifarxiv\date{}\fi

\begin{document}

\setcounter{tocdepth}{3}

\doparttoc 
\faketableofcontents 

\maketitle

\begin{abstract}
Adapting pretrained models to downstream tasks with limited data has become a central paradigm in modern deep learning. Yet, despite its widespread practical success, how fine-tuning leverages information from pretraining remains poorly understood theoretically.
We study fine-tuning from pretrained weights through the lens of sparse linear regression and two-layer diagonal linear networks. In our setting, pretraining provides information through the support (and signs) of the initialization predictor, which may contain coordinates relevant to the downstream task. We show how pretrained information reshapes the implicit bias and training dynamics, and can thereby reduce the sample complexity of recovering the target parameters and support.
In particular, for a clean initialization with correctly inherited signs, we show that the required sample size is comparable to that of a weighted Lasso estimator that explicitly exploits the pretrained support through a suitably chosen regularizer. 
Our results thus show how information encoded in pretrained weights can be implicitly exploited by gradient-based fine-tuning, reducing the amount of data needed to recover a downstream task.
\end{abstract}

\section{Introduction}

Fine-tuning has become a standard approach for leveraging pretrained models on new tasks, from transferring visual representations to adapting language models to instructions~\citep{kornblith2019better,wei2022finetuned}. Rather than starting from scratch, fine-tuning uses information acquired from previous data to guide learning on the downstream task. In practice, this information is encoded in the pretrained parameters, which serve as the initialization for a gradient-based optimization procedure. 
Yet how, and under what conditions, fine-tuning can exploit this initialization to reduce the data required for the downstream task remains poorly understood theoretically. In particular, two fundamental questions arise: \emph{What information encoded in a pretrained model can reduce the amount of data required to learn a downstream task? And when can fine-tuning exploit this information without simultaneously learning spurious features?}

A large body of work has studied the implicit bias of gradient descent: how the model parameterization and optimization dynamics favor particular solutions even in the absence of explicit regularization. 
This perspective has revealed how gradient-based training can favor simple predictors and how initialization shapes the selected solution~\citep{chizat2019lazy,woodworth2020kernel,boursier2022gradient}.
Much of this theory, however, considers small and uninformative initializations, where training begins without knowledge inherited from a previous task. 
Fine-tuning instead starts from a structured predictor that may already identify features relevant to the downstream task. Understanding its statistical benefits therefore requires characterizing how the resulting implicit bias allows useful inherited information to be retained while the remaining signal is learned.

We make this theoretical study of fine-tuning concrete in the setting of sparse linear regression with two-layer diagonal linear networks. Here, pretraining provides a nonzero predictor whose support may already identify relevant coordinates, although their downstream coefficients need not be accurately estimated. Without support information, estimating an $s$-sparse target in $d$ dimensions from $n$ noisy observations has minimax squared error of order $\sigma^2s\log(d/s)/n$ under suitable design conditions~\citep{wainwright2019high}. Suppose instead that pretraining correctly identifies $s-m$ of the relevant coordinates, leaving only $m$ to be discovered. This suggests a statistical gain: the ambient-dimensional logarithmic cost should only be incurred for coordinates whose membership in the target support is unknown, while the cost of estimating the coefficients on all $s$ coordinates remains. We first formalize this benefit through a weighted Lasso benchmark, which explicitly favors the pretrained support and achieves a corresponding support-dependent guarantee.

The weighted Lasso benchmark, however, does not answer the central question of whether gradient-based fine-tuning can extract the same statistical benefit from initialization alone. We show that pretraining indeed modifies the implicit bias through the inherited signed support. In the limit of vanishing layer imbalance, with fixed initial predictor, we characterize the resulting implicit regularizer in terms of the inherited signed support. The penalty favors reusing inherited coordinates with their pretrained signs rather than preserving their coefficient magnitudes: coordinates absent from the pretrained support incur the usual $\ell_1$ cost, whereas inherited coordinates incur no cost when their signs are preserved.  This asymmetry explains why correctly inherited signs are favored, and why incorrect signs and inherited false positives require separate treatment. In the noiseless setting, it yields an exact-recovery guarantee: for a clean, correctly signed initialization, a sample size of order $s+m\log(d)$ suffices, so only the $m$ missing coordinates incur the ambient-dimensional log cost.

The main difficulty arises with noisy observations. Characterizing the interpolating solution selected at convergence is no longer enough: it does not establish whether the trajectory recovers the missing signal before fitting the noise and activating irrelevant coordinates.  We therefore study the fine-tuning path through its limiting saddle-to-saddle dynamics, building on the construction of \citet{pesme2023saddle} while retaining the nonzero pretrained predictor.
Under a Gaussian design and suitable sample-size and signal-strength conditions, we show that the trajectory preserves correctly inherited true coordinates, corrects initially wrong signs, and recovers the remaining signal before introducing any new false positive. We also derive a gradient-based stopping rule, calibrated using an upper bound on the number of missing or incorrectly signed true coordinates and inherited false positives. The stopped trajectory achieves exact support recovery for a clean initialization; with an imperfect initialization, its support contains the true support and can differ from it only through inherited false positives. 
We complement our theoretical results with synthetic experiments illustrating the role of early stopping and the dependence of support recovery on initialization quality (Appendix~\ref{app:experiments}).

\subsection{Related work}

\paragraph{Theoretical analyses of fine-tuning.}
\citet{shachaf2021theoretical} relate the sample complexity and inductive bias of fine-tuning to source--target similarity in linear-teacher models.
\citet{wu2022power} establish excess risk bounds for pretraining and fine-tuning with SGD in linear regression under covariate shift.
\citet{jones-mccormick2025provable} prove sample-complexity improvements from unsupervised pretraining and transfer learning in single-index models.
Other analyses study the distortion of pretrained features under distribution shift \citep{kumar2022finetuning} and characterize language-model fine-tuning through neural tangent kernels \citep{malladi2023kernel,tomihari2024understanding}.
Beyond the fixed-kernel regime, \citet{lauditi2026transfer} analyze transfer learning in infinite-width networks with feature learning during both pretraining and adaptation.

\paragraph{Sparse recovery with prior support information.}
Prior support information can improve sparse recovery through weighted $\ell_1$ minimization, which penalizes likely support coordinates less heavily~\citep{zou2006adaptive,von2007compressed,khajehnejad2009weighted,oymak2012recovery}. \citet{vaswani2010modified,jacques2010short} analyzed a compressed sensing algorithm, assigning zero weights to an estimated support. \citet{friedlander2011recovering} allowed nonzero weights and established recovery conditions depending on the size and accuracy of the estimate.
%
\citet{mansour2017recovery} established weighted null-space conditions and uniform Gaussian recovery guarantees for sufficiently accurate support estimates. \citet{rauhut2016interpolation} developed a weighted-sparsity framework for function interpolation, while \citet{bah2016sample} derived nonuniform Gaussian sample-complexity bounds reflecting the alignment between the weights and the true support. \citet{flinth2016optimal,lian2018weighted} further study the choice of
weights from prior support information, including optimal weighting and
statistical prior-support models. 
Our weighted-Lasso benchmark specializes this literature to the pretrained-support setting and provides a statistical reference for the fine-tuning guarantees.

\paragraph{Implicit bias in diagonal linear networks.}

Diagonal linear networks (DLN) provide a tractable setting for studying implicit bias.
For least-squares regression, \citet{woodworth2020kernel} show that the implicit bias interpolates between $\ell_2$- and $\ell_1$-norm minimization as the initialization scale decreases.
The connection between multiplicative parameterizations and mirror descent provides a useful framework for analyzing this bias and the resulting sparsity-inducing optimization dynamics \citep{ghai2020exponentiated,vaskevicius2020statistical,azulay2021initialization}.
%
%
Beyond characterizing the terminal solution, \citet{vaskevicius2019implicit,zhao2022high} establish
sparse-estimation guarantees for early-stopped gradient descent
from small initialization under restricted isometry assumptions.
In a complementary direction,
\citet{berthier2023incremental,pesme2023saddle,berthier2026incremental} describe the limiting gradient-flow trajectory as a sequence of saddles
in the vanishing-initialization regime. 
In the fine-tuning setting, \citet{lippl2024inductive,anguita2026a}
characterize how pretrained weights shape implicit bias and feature
reuse when the end-to-end predictor is reset to zero before fine-tuning. 
We instead retain a nonzero pretrained predictor and quantify how its support affects the implicit bias and fine-tuning trajectory.


\section{A minimal model of fine-tuning}
\label{sec:setup}

\subsection{Sparse regression and pretrained model}
\label{subsec:regression}

We study fine-tuning in sparse linear regression, where pretraining can provide partial information about the relevant coordinates. 
We observe a design matrix $\mathbf X\in \mathbb R^{n\times d}$ and a response vector $\mathbf y \in \mathbb R^n$ generated as
\[\textstyle
    \mathbf y
    =
    \mathbf X\beta^\star
    +
    \frac{\sigma}{\sqrt n}\varepsilon,
    \qquad
    \mathbf X_j\overset{\mathrm{i.i.d.}}{\sim}
    \mathcal N \big(0,\frac1n I_n\big),  \qquad
    \varepsilon \sim \mathcal N(0,I_n),
\]
where $\mathbf X_j$ denotes the $j$-th column of $\mathbf X$, the noise $\varepsilon$ is independent of $\mathbf X$, $\sigma \geq 0$ and the target $\beta^\star\in\mathbb R^d$ is sparse. We write
$S^\star=\operatorname{supp}(\beta^\star)$ and $s=|S^\star|$. 

The downstream quadratic loss is
\begin{equation}\textstyle
    L(\beta)
    \coloneqq
    \frac12\|\mathbf y-\mathbf X\beta\|_2^2.
\end{equation}
We model pretraining through a predictor $\beta^0 \in\mathbb R^d$, treated as fixed independently of the design matrix and noise, and write $S_{\rm init}=\operatorname{supp}(\beta^0)$. Our analysis takes this predictor as given rather than modeling the pretraining phase itself. 
Intuitively, $\beta^0$ is inherited from a previous pretraining phase on a different but related data distribution, so that its support provides partial information about $S^\star$.

The overlap $S^\star\cap S_{\rm init}$ represents relevant coordinates already identified by pretraining. Their downstream coefficients are not
assumed to be known or accurately estimated, while the  coordinates in $S^\star\setminus S_{\rm init}$ still need to be identified.
In the clean case, $S_{\rm init} \subseteq S^\star$.
More generally, the pretrained support may contain \emph{null} coordinates, i.e., coordinates in $(S^\star)^c$, which are irrelevant to the downstream task.

\subsection{Weighted Lasso}
To quantify the statistical advantage of the pretrained support, we consider the estimation rate of an explicit regularization benchmark, the weighted Lasso~\citep{zou2006adaptive}, defined for $\lambda\geq 0$ and $\alpha\geq1$ by
\[
    \widehat\beta^{\rm WL}
    \in
    {\textstyle\argmin_{\beta\in\mathbb R^d}}
    \big\{ L(\beta) 
        + \lambda \big( \| \beta_{S^c_{\rm init}} \|_1 + \frac{1}{\alpha} \| \beta_{S_{\rm init}} \|_1 \big) 
    \big\}.
\]
The parameter $\alpha$ controls how strongly the pretrained support is favored. 
\begin{proposition}[Weighted-Lasso benchmark]
\label{prop:wl-benchmark}
Let $\delta\in(0,1)$ and suppose $1\leq |S_{\rm init}|\leq d/2$. Choose
$
\alpha_\star^2
=
\frac{\log(4|S_{\rm init}^c|/\delta)}
     {\log(4|S_{\rm init}|/\delta)}$ and $
\lambda
=
C\sigma
\sqrt{
\frac{\log(4|S_{\rm init}^c|/\delta)}{n}
}$\footnote{For $\sigma=0$, $\widehat\beta^{\mathrm{WL}}$
minimizes the same weighted $\ell_1$ penalty subject to $X\beta=y$. On the stated high-probability event, this minimizer is unique
and equals the $\lambda\to0$ limit of the penalized
solutions. 
}, 
for a sufficiently large universal constant $C$. Then there exists a universal constant $C_0$ such that 
if
\[
n
\geq C_0 \left(1+\frac{\sigma^2}{\min_{i\in S^*}(\beta_i^\star)^2}\right)\cdot\left(
|S^\star\setminus S_{\rm init}|
\log\frac{4|S_{\rm init}^c|}{\delta}
+
|S^\star\cap S_{\rm init}|
\log\frac{4|S_{\rm init}|}{\delta}\right),
\]
then with probability at least $1-\delta$, both 
\begin{gather*}
\|\widehat\beta^{\rm WL}-\beta^\star\|_2
\leq C_0
\frac{\sigma}{\sqrt n}
\sqrt{
|S^\star\setminus S_{\rm init}|
\log\frac{4|S_{\rm init}^c|}{\delta}
+
|S^\star\cap S_{\rm init}|
\log\frac{4|S_{\rm init}|}{\delta}
}\\\text{and}\qquad
\operatorname{supp}(\widehat\beta^{\rm WL})=S^\star.
\end{gather*}
\end{proposition}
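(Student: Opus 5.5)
We plan to prove the statement with a primal--dual witness (PDW) argument adapted to the weighted penalty. Throughout, write $S=S^\star$, $w_j=1$ for $j\in S_{\rm init}^c$ and $w_j=1/\alpha_\star$ for $j\in S_{\rm init}$, so that the penalty is $\lambda\sum_j w_j|\beta_j|$. Set $L_1=\log(4|S_{\rm init}^c|/\delta)$, $L_2=\log(4|S_{\rm init}|/\delta)$ and $N=|S\setminus S_{\rm init}|L_1+|S\cap S_{\rm init}|L_2$. With $\alpha_\star^2=L_1/L_2$ we have $\lambda w_j=C\sigma\sqrt{L_2/n}$ on $S_{\rm init}$ and $\lambda w_j=C\sigma\sqrt{L_1/n}$ off it. Hence $\lambda^2\|w_S\|_2^2=C^2\sigma^2N/n$. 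This identity is what turns the generic PDW error bound into the claimed rate.

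\textbf{Oracle step.} We solve the restricted problem on $S$. Its solution is
\[
\widehat\beta_S=\beta^\star_S+(\mathbf X_S^\top\mathbf X_S)^{-1}\Bigl(\tfrac{\sigma}{\sqrt n}\mathbf X_S^\top\varepsilon-\lambda\, w_S\odot z_S\Bigr),
\]
with $z_S=\operatorname{sign}(\beta^\star_S)$.
\begin{itemize}
\item Since $n\ge C_0 s$, standard Gaussian singular-value bounds give $\|(\mathbf X_S^\top\mathbf X_S)^{-1}\|_{\rm op}\le 2$.
\item The noise term $\tfrac{\sigma}{\sqrt n}(\mathbf X_S^\top\mathbf X_S)^{-1}\mathbf X_S^\top\varepsilon$ is, conditionally on $\mathbf X$, Gaussian with covariance at most $2\sigma^2/n$ times the identity.
\end{itemize}
We need coordinatewise control of this term, to get sign consistency, and not just $\ell_2$ control. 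A union bound over $S$ gives entries of size $\sigma\sqrt{\log(4s/\delta)/n}$ on coordinates outside $S_{\rm init}$; the same bound holds with $\log(4s/\delta)\le L_1$. On $S\cap S_{\rm init}$ we want the finer level $L_2$; here $|S\cap S_{\rm init}|\le|S_{\rm init}|$ suffices for the union bound. The bias term $\lambda(\mathbf X_S^\top\mathbf X_S)^{-1}(w_S\odot z_S)$ has $\ell_2$ norm at most $2C\sigma\sqrt{N/n}$.

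These bounds give the $\ell_2$ rate $C_0\sigma\sqrt{N/n}$. For sign consistency we need $\|\widehat\beta_S-\beta^\star_S\|_\infty<\min_i|\beta^\star_i|$. Rather than bound the bias coordinatewise, we crudely use $\|\cdot\|_\infty\le\|\cdot\|_2\lesssim\sigma\sqrt{N/n}$. The hypothesis $n\ge C_0(\sigma^2/\beta_{\min}^2)N$ then gives exactly $\sigma\sqrt{N/n}\le\beta_{\min}/\sqrt{C_0}$. So this crude bound is enough, and the factor $\sigma^2/\beta_{\min}^2$ in the sample size is what it costs.

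\textbf{Dual certificate.} For $j\notin S$ we must show $|\mathbf X_j^\top(\mathbf y-\mathbf X\widehat\beta)|<\lambda w_j$. The residual is $r=\Pi^\perp\tfrac{\sigma}{\sqrt n}\varepsilon+\lambda\mathbf X_S(\mathbf X_S^\top\mathbf X_S)^{-1}(w_S\odot z_S)$, where $\Pi^\perp$ projects onto the orthogonal complement of the span of $\mathbf X_S$. The key fact is that, for $j\notin S$, the column $\mathbf X_j$ is independent of $(\mathbf X_S,\varepsilon)$. Hence $\mathbf X_j^\top r\sim\mathcal N(0,\|r\|_2^2/n)$ conditionally. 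We bound
\[
\|r\|_2\le \sigma\|\varepsilon\|_2/\sqrt n+2\lambda\|w_S\|_2\lesssim\sigma+C\sigma\sqrt{N/n}.
\]
Since $n\ge C_0N$, this is at most $c\sigma$ with $c$ a universal constant not depending on $C$. Two union bounds then finish the certificate:
\begin{itemize}
\item over $S_{\rm init}^c\setminus S$, at level $\sigma\sqrt{L_1/n}\cdot c'$, which is below $\lambda=C\sigma\sqrt{L_1/n}$ once $C$ is large;
\item over $S_{\rm init}\setminus S$, at level $c'\sigma\sqrt{L_2/n}$, which is below $\lambda/\alpha_\star=C\sigma\sqrt{L_2/n}$.
\end{itemize}
This is exactly where the choice of $\alpha_\star$ balances the two logarithms. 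Strict dual feasibility, together with injectivity of $\mathbf X_S$, gives uniqueness and $\operatorname{supp}(\widehat\beta^{\rm WL})=S$. The failure probabilities are split as $\delta/4$ per event, which explains the $4$'s in the logarithms. The noiseless case follows in the same way with $\lambda\to0$: the certificate then is $\mathbf X_j^\top\mathbf X_S(\mathbf X_S^\top\mathbf X_S)^{-1}(w_S\odot z_S)/\lambda$, handled identically, with $\|r\|$ replaced by the bias term alone.

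\textbf{Main obstacle.} The delicate point is keeping the constants decoupled in the certificate step. The residual norm contains $\lambda\|w_S\|$, which scales with $C$. For $j\in S_{\rm init}\setminus S$, its contribution must be compared with $\lambda/\alpha_\star$, which is smaller than $\lambda$. I would handle this by writing the bias contribution as $C\sigma\sqrt{N/n}\cdot\sqrt{L_2/n}$ and absorbing it via $N/n\le1/C_0$ with $C_0\gg C^2$. A second point is the per-coordinate sign bound; the crude $\ell_2$ bound suffices there under the stated sample size, so no finer analysis is needed.
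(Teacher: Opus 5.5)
Your proposal is correct. On support recovery it essentially coincides with the paper's argument. The paper also builds the primal--dual witness on $S^\star$ with signs $\rho$, and checks sign consistency with the same crude bound $\|H_{S^\star,S^\star}^{-1}W_{S^\star}\rho\|_\infty\le 2\sqrt{s_\alpha}$; this is exactly where the $(1+\sigma^2/\beta_{\min}^2)$ factor enters. It then verifies strict dual feasibility by conditioning on $(\mathbf X_{S^\star},\varepsilon)$ and taking separate union bounds over $S_{\rm init}\setminus S^\star$ and $S_{\rm init}^c\cap(S^\star)^c$. The paper splits your residual into an irrepresentability term and a projected-noise term, but that is cosmetic. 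Your identity $\lambda^2\|w_S\|_2^2=C^2\sigma^2N/n$ is the paper's $s_{\alpha_\star}L_{\alpha_\star,\delta}^2=N$.

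Where you genuinely depart is the $\ell_2$ bound. The paper proves it separately, with a weighted-cone restricted-eigenvalue argument: it bounds the Gaussian width of $\mathcal C_\alpha(S^\star)$ via Lemma~3.5 of Bah--Ward, applies Gordon's escape-through-the-mesh, and then uses the basic inequality. You instead read the bound off the explicit oracle solution $\beta^\star_S+(\mathbf X_S^\top\mathbf X_S)^{-1}(\tfrac{\sigma}{\sqrt n}\mathbf X_S^\top\varepsilon-\lambda w_S\odot z_S)$, which equals $\widehat\beta^{\rm WL}$ on the witness event. This route is much more elementary, and it suffices here because the proposition asks for both conclusions simultaneously under the SNR-inflated sample size.

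What the paper's route buys is an estimation bound that does not depend on the witness succeeding. It holds for the weighted-Lasso minimizer under $n\gtrsim s+N+\log(1/\delta)$, without the $(1+\sigma^2/\beta_{\min}^2)$ prefactor and without any beta-min condition; this is the remark made after the proposition. The paper also uses the cone property $\mathcal C_{\alpha_\star}(S^\star)\cap\ker\mathbf X=\{0\}$ for uniqueness in the noiseless footnote. You use a strict dual certificate instead, which is equally valid: the dual vector is $\mathbf X_S(\mathbf X_S^\top\mathbf X_S)^{-1}(w_S\odot z_S)$, of norm at most $\sqrt2\|w_S\|_2$, and the condition reduces to $n\gtrsim\|w_S\|_2^2L_1=N$.

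Two small bookkeeping points. First, the Gram bound needs $n\gtrsim s+\log(1/\delta)$; this follows from your hypothesis because $N\ge\log(4/\delta)$ whenever $s\ge1$. Second, the bound $\log(4s/\delta)\le L_1+\log 2$ uses $|S_{\rm init}^c|\ge d/2$.
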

The bound separates the statistical costs associated with the two regions defined by the pretrained support. 
The coordinates in $S^\star\setminus S_{\mathrm{init}}$ must be identified among the $|S_{\mathrm{init}}^c|$ coordinates not selected by pretraining, and therefore incur a logarithmic factor in $|S_{\mathrm{init}}^c|$. By contrast, coordinates already contained in $S_{\mathrm{init}}$ only incur a logarithmic factor in the size of this smaller candidate set. This rate improves over the usual Lasso rate in which all $s$ active coordinates pay the ambient-dimensional logarithmic cost.
Thus, although the estimator is a single weighted Lasso, its rate reflects two distinct support-identification problems inside and outside the pretrained support. In sparse regimes, this agrees, up to logarithmic refinements, with the natural statistical complexity of sparse estimation with such two-block support information. 
The signal-to-noise prefactor in the sample-size condition accounts for detecting the smallest nonzero coefficient in exact support recovery; the estimation bound alone holds without this prefactor (Appendix~\ref{app:wl-proofs}).

An oracle choice of $\alpha$ depending on $S^\star$ yields a sharper bound (see \Cref{eq:weightlassoideal} in Appendix~\ref{app:wl-proofs}), requiring only $$n\gtrsim |S^\star\setminus S_{\rm init}|\log_+\frac{
4|S_{\rm init}^c\cap(S^\star)^c|
}{\delta}
+
|S^\star\cap S_{\rm init}|\log_+\frac{
4|S_{\rm init}\setminus S^\star|
}{\delta}$$ samples for recovery, where $\log_+(t) = \max(1, \log(t))$. For a clean initialization ($S_{\rm init}\subseteq S^\star$), this oracle choice (given by $\alpha=\infty$) removes the logarithmic cost on inherited true coordinates, leaving only the $|S^\star\setminus S_{\rm init}|$ missing coordinates to incur the ambient-dimensional logarithmic cost.

Proposition~\ref{prop:wl-benchmark} specializes weighted-$\ell_1$ recovery with prior support information to our pretrained-support model. Its proof combines the Gaussian weighted-cone bounds of \citet{bah2016sample} with standard weighted-Lasso estimation
and primal-dual support-recovery arguments. We include a self-contained derivation in Appendix~~\ref{app:wl-proofs} to make the dependence on the pretrained support explicit.

Weighted Lasso exploits the pretrained support through explicit regularization and provides good guarantees for support recovery. In practical fine-tuning of deep learning models, however, one typically relies on gradient-based optimization initialized at the pretrained weights. We therefore use weighted Lasso as a statistical benchmark against which we compare the solution obtained through such gradient-based fine-tuning.

\section{Fine-tuning diagonal linear networks}
\label{sec:imp_bias}
Motivated by practical fine-tuning, we now study gradient-based optimization initialized at pretrained weights. More precisely, we consider diagonal linear networks (DLNs), a simple neural network architecture that nevertheless exhibits nonconvex optimization dynamics. 
\subsection{Parameterization and gradient flow}
\label{subsec:diag}
We parameterize the predictor as a two-layer DLN  $\beta_w=u\odot v$, where $w = (u,v) \in \mathbb R^{2d}$ and $\odot$ denotes the (Hadamard) componentwise multiplication. The fine-tuning training objective is 
\begin{equation}\textstyle
   F(w)\coloneqq L(u\odot v). 
\end{equation}
While the loss $L$ is convex in the end-to-end predictor $\beta_w=u\odot v$,  $F$ is
non-convex in the network parameters~$w$. This simple reparameterization already produces a rich, non-trivial training trajectory. 
We model fine-tuning by training both layers from an initialization representing the pretrained predictor $\beta^0$. As the limiting dynamics of the (stochastic) gradient descent with infinitesimal
step-sizes, we study gradient flow
\[\textstyle
\dot w^\mu_t=-\nabla F(w^\mu_t).
\]
For $\mu>0$, we initialize the weights so that 
\[\textstyle
u^{\mu}(0)\odot v^{\mu}(0)=\beta^0, \qquad  u_i^{\mu}(0)^2-v_i^{\mu}(0)^2=2\mu, \qquad i\in [d]. 
\]
Equivalently, $
u_i^{\mu}(0)^2=\sqrt{(\beta_i^0)^2+\mu^2}+\mu$, and $v_i^{\mu}(0)^2=\sqrt{(\beta_i^0)^2+\mu^2}-\mu$, for $i\in [d]$, with signs chosen so that $u_i^{\mu}(0)v_i^{\mu}(0)=\beta_i^0$.
The first condition ensures that fine-tuning starts from the pretrained predictor, retaining the information acquired before the downstream task. The second controls the imbalance between the two layers. 

This imbalance $u_i^{\mu}(t)^2-v_i^{\mu}(t)^2$ is preserved along the flow and plays a key role in the implicit bias of the dynamics. We study the regime $\mu \to 0$ with $\beta^0$ fixed: the layer imbalance vanishes, not the pretrained predictor. The next section characterizes the limiting dynamics obtained in that regime. 

\subsection{Mirror flow and the limiting fine-tuning path}
\label{sec:mirror-s2s} 

To study how pretraining affects support recovery during fine-tuning, we first describe the mirror-flow dynamics and their limiting saddle-to-saddle trajectory.

\paragraph{Mirror flow and implicit bias.}
Although the network parameter $w^\mu$ follows a nonconvex gradient flow, the end-to-end predictor $\beta^\mu= u^\mu \odot v^\mu$ evolves according to a mirror flow for the convex loss~$L$~\citep{azulay2021initialization}. In our parametrization, the conserved layer imbalance gives
\begin{equation}\label{eq:mirror}
\frac{\df}{\df t}\nabla\phi_\mu(\beta^\mu(t))=-\nabla L(\beta^\mu(t)), \quad \text{where} \quad
\phi_\mu(\beta)\coloneqq
\frac12\sum_{i=1}^d
\Big[
\beta_i\operatorname{arcsinh}\!\Big(\frac{\beta_i}{\mu}\Big)
-\sqrt{\beta_i^2+\mu^2}
\Big]
\end{equation}
is the hyperbolic entropy~\citep{ghai2020exponentiated}. The mirror flow structure provides a direct characterization of the implicit bias of gradient flow and  makes the role of initialization explicit. If the flow converges to an interpolator $\beta_\infty^\mu$, the mirror identity implies
\begin{equation}\label{eq:implicitbias}
        \beta_\infty^\mu = \argmin_{\beta\in\mathbb R^d: \mathbf X \beta = \mathbf y} D_{\phi_\mu}(\beta,\beta^0),
\end{equation}
where $D_{\phi_\mu}(\beta,\beta^0)\coloneqq \phi_\mu(\beta)-\phi_\mu(\beta^0) -\langle\nabla\phi_\mu(\beta^0),\beta-\beta^0\rangle$ is the associated Bregman divergence.
Thus initialization affects the selected interpolator through the reference point of the divergence. The mirror-descent representation also allows us to characterize the limiting fine-tuning trajectory as $\mu\to0$, while keeping the pretrained predictor $\beta^0$ fixed.

\paragraph{Limiting saddle-to-saddle dynamics.}
To obtain a nondegenerate limit of the dynamics as $\mu\to0$, we rescale time. Indeed, writing $\lambda_\mu:=\frac12\log(1/\mu)$, the mirror potential satisfies $\phi_\mu/\lambda_\mu\to\|\cdot\|_1$. We therefore consider the accelerated predictor
$\widetilde\beta^\mu(\tau):=\beta^\mu(\lambda_\mu\tau)$. Integrating \Cref{eq:mirror} gives
\begin{equation}
    \frac{\nabla\phi_\mu(\widetilde\beta^\mu(\tau))}{\lambda_\mu}
    =
    \frac{\nabla\phi_\mu(\beta^0)}{\lambda_\mu}
    -
    \int_0^\tau \nabla L(\widetilde\beta^\mu(s))\,\df s.
\end{equation}
Following the construction of \citet{pesme2023saddle}, we use the formal limit of this identity to describe a piecewise-constant trajectory $\beta^\circ$ satisfying
\begin{equation}\label{eq:limiting-inclusion}
    q(\tau)
    \coloneqq
    q^0-\int_0^\tau\nabla L(\beta^\circ(s))\,\df s
    \in \partial\|\beta^\circ(\tau)\|_1,
    \qquad
    q_i^0=
    \begin{cases}
        \operatorname{sign}(\beta_i^0), & i\in S_{\rm init},\\
        0, & i\notin S_{\rm init}.
    \end{cases}
\end{equation}
The difference from zero initialization is the nonzero initial dual state $q^0$: inherited coordinates start at the boundary of $[-1,1]$ with their pretrained signs, whereas the other coordinates start at its center.

\Cref{eq:limiting-inclusion} determines the constraints on the predictor: if $|q_i|<1$, then $\beta_i=0$; if $q_i=\pm1$ then $\beta_i$ is either zero or has the corresponding sign. The saddle-to-saddle algorithm alternates between least-squares fits subject to these constraints and linear evolution of $q$. Specifically, define
\[
    \mathcal F(q)
    :=\{\beta\in\mathbb R^d:q\in\partial\|\beta\|_1\},
    \qquad
    \beta(q)\in\argmin_{\beta\in\mathcal F(q)}L(\beta).
\]
Starting from $q^0$, the algorithm first computes $\beta(q^0)$: a least-squares refit on the pretrained support, constrained to preserve its signs, but allowed to set coordinates to zero. The predictor then remains constant while the dual variable  evolves according to
$\dot q=-\nabla L(\beta)$. When a moving coordinate of $q$ reaches either boundary, the predictor is refitted on the resulting signed face, and the procedure repeats. These refits may activate or deactivate coordinates. Algorithm~\ref{alg:saddle-to-saddle} gives the complete recursion defining the piecewise constant limit path $\beta^\circ$.  A precise statement of convergence from the accelerated gradient
flow to this path, together with the required assumptions, is given in Appendix~\ref{app:dynamics}. 
\begin{algorithm}[t]
\caption{Saddle-to-saddle dynamics with pretrained initialization}
\label{alg:saddle-to-saddle}
\begin{algorithmic}[1]
\State Initialize $q\gets q^0$ as in \Cref{eq:limiting-inclusion}, and $\tau\gets0$
\State $\beta\gets\argmin_{\beta\in \mathcal F(q)}L(\beta)$
\While{$\nabla L(\beta)\neq0$ and the stopping criterion is not satisfied}
    \State $\displaystyle
    \Delta\gets
    \inf\left\{
        \rho>0:
        \exists i,\; [\nabla L(\beta)]_i \neq 0 \text{ and }
        q_i-\rho[\nabla L(\beta)]_i\in\{-1,+1\}
    \right\}$
    \State $(\tau,q)\gets(\tau+\Delta,\;q-\Delta\cdot\nabla L(\beta))$
    \State $\displaystyle
    \beta\gets\argmin_{\beta\in \mathcal F(q)}L(\beta)$
\EndWhile
\State \Return successive values of $(\tau,\beta,q)$
\end{algorithmic}
\end{algorithm}

\section{Statistical guarantees for fine-tuning}
\label{sec:early-stopped}

Having characterized the fine-tuning dynamics, we now study the solutions they select. We first identify how pretraining modifies the terminal implicit bias, before establishing recovery guarantees under early stopping. 
Let $F_0\coloneqq S_{\rm init}\setminus S^\star$ be the inherited false positive coordinates, and define
\[
R_0\coloneqq
(S^\star\setminus S_{\rm init})
\cup
\left\{
i\in S^\star\cap S_{\rm init}:
\operatorname{sign}(\beta_i^0)\neq\operatorname{sign}(\beta_i^\star)
\right\}.
\]
The set $R_0$ contains the true coordinates that must be learned or relearned due to an initial wrong sign. We write $m\coloneqq|R_0|$ and $f_0\coloneqq|F_0|$. 
\subsection{Recovery in the noiseless setting}
We first consider the noiseless setting $ \mathbf y=\mathbf X\beta^\star$. We can characterize the  bias in the small-$\mu$ regime.
\begin{proposition}[Leading implicit bias]
\label{prop:implicit-bias}
For fixed $\beta,\beta^0\in\R^d$, let $\lambda_\mu\coloneqq\frac12\log(1/\mu)$. As $\mu\to 0$ 
\[\textstyle
\frac{D_{\phi_\mu}(\beta,\beta^0)}{\lambda_\mu} \
{\longrightarrow} \ R_{\beta^0}(\beta)\coloneqq 
\sum_{j\notin S_{\rm init}}|\beta_j|
+
\sum_{i\in S_{\rm init}}
\bigl(|\beta_i|-\operatorname{sign}(\beta_i^0)\beta_i\bigr).
\]
\end{proposition}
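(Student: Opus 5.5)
Since $\phi_\mu$ is separable, the Bregman divergence splits as $D_{\phi_\mu}(\beta,\beta^0)=\sum_{i=1}^d d_\mu(\beta_i,\beta_i^0)$ with scalar divergences
\[
d_\mu(b,b_0)=\phi_\mu^{(1)}(b)-\phi_\mu^{(1)}(b_0)-\tfrac12\operatorname{arcsinh}(b_0/\mu)\,(b-b_0),
\qquad
\phi_\mu^{(1)}(b)=\tfrac12\bigl[b\operatorname{arcsinh}(b/\mu)-\sqrt{b^2+\mu^2}\bigr].
\]
Here we use $(\phi_\mu^{(1)})'(b)=\frac12\operatorname{arcsinh}(b/\mu)$; the terms $\pm\frac{b}{2\sqrt{b^2+\mu^2}}$ cancel. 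It therefore suffices to compute $\lim_{\mu\to0} d_\mu(b,b_0)/\lambda_\mu$ for fixed scalars $b,b_0$ and sum the limits.

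The first ingredient is the asymptotics of $\operatorname{arcsinh}(x/\mu)$ for fixed $x$. If $x\neq0$, then
\[
\operatorname{arcsinh}(x/\mu)=\operatorname{sign}(x)\log\bigl(2|x|/\mu+o(1/\mu)\bigr)=\operatorname{sign}(x)\bigl(2\lambda_\mu+\log(2|x|)+o(1)\bigr),
\]
using $\log(1/\mu)=2\lambda_\mu$. If $x=0$, the value is $0$. Dividing by $\lambda_\mu$ gives $\frac12\operatorname{arcsinh}(x/\mu)/\lambda_\mu\to\operatorname{sign}(x)$, with the convention $\operatorname{sign}(0)=0$. This is exactly the dual state $q^0$ of \Cref{eq:limiting-inclusion} when $x=b_0$.

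Next I treat the potential itself. Since $\sqrt{b^2+\mu^2}$ stays bounded as $\mu\to0$, we get $\phi_\mu^{(1)}(b)/\lambda_\mu\to b\operatorname{sign}(b)=|b|$. This also covers $b=0$, where $\phi_\mu^{(1)}(0)=-\mu/2\to0$. The same limit holds at $b_0$. Combining the three pieces yields
\[
\frac{d_\mu(b,b_0)}{\lambda_\mu}\to |b|-|b_0|-\operatorname{sign}(b_0)(b-b_0)=|b|-\operatorname{sign}(b_0)\,b,
\]
because $\operatorname{sign}(b_0)b_0=|b_0|$. For $b_0=0$ (indices $j\notin S_{\rm init}$) the limit is $|b|$. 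For $b_0\neq0$ (indices $i\in S_{\rm init}$) it is $|b|-\operatorname{sign}(b_0)b$. Summing over coordinates gives $R_{\beta^0}(\beta)$.

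The only delicate step is the $O(1)$ bookkeeping. The terms $\phi_\mu^{(1)}(b_0)$ and $\frac12 b_0\operatorname{arcsinh}(b_0/\mu)$ are each of order $\lambda_\mu$ and their leading parts cancel. Since we only need the limit after dividing by $\lambda_\mu$, we only have to check that every remainder is $o(\lambda_\mu)$. The remainders are $\log(2|x|)$, $o(1)$, and the bounded square-root terms, so this holds and no finer expansion is needed. I would state the scalar asymptotic as a short lemma, taking care with the $x=0$ cases separately, and then conclude by finite summation.
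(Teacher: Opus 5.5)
Your proof is correct and follows essentially the same route as the paper: split the divergence coordinatewise, use $\operatorname{arcsinh}(x/\mu)=\operatorname{sign}(x)\bigl(\log(1/\mu)+\log(2|x|)\bigr)+o(1)$ for $x\neq0$, and divide by $\lambda_\mu$. The only difference is that the paper keeps the $O(1)$ terms explicitly in order to display the second-order correction. You instead note, correctly, that $o(\lambda_\mu)$ control of the remainders suffices, and you treat the $x=0$ cases a bit more carefully than the paper does.
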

Proposition~\ref{prop:implicit-bias} shows that the leading implicit bias induced by pretraining depends on the inherited signed support, rather than on the pretrained coefficient magnitudes. Pretrained magnitudes only appear in the second-order correction given in Appendix~\ref{app:dynamics}. 
Coordinates outside $S_{\rm init}$ incur the usual $\ell_1$ cost. In contrast, on an inherited coordinate, any coefficient preserving the pretrained sign has zero leading-order cost, while reversing the sign incurs a penalty $2|\beta_i|$. The leading penalty therefore favors the inherited signs without requiring inherited coordinates to remain active. 
This sign asymmetry contrasts with the diagonal-network setting of \citet{lippl2024inductive}, where fine-tuning starts from a zero predictor and induces a penalty invariant to coordinate sign changes.
Importantly, this bias does not distinguish between inherited true coordinates and inherited false positives: both receive the same treatment at leading order. In particular, the leading penalty alone does not favor removing false positives inherited from initialization. 

The following result gives conditions under which $\beta^\star$ is its unique minimizer, quantifying how the pretrained initialization affects exact recovery. 
We study the associated interpolation problem~$
    \min_{\beta:\,\mathbf X\beta=\mathbf y} R_{\beta^0}(\beta)$. 
In this case, the leading implicit bias is sufficient for exact recovery.

\begin{theorem}[Noiseless recovery]
\label{thm:noiseless-recovery}
Let $\delta\in(0,1)$ and $\sigma=0$. 
There exists a universal constant $C$ such that if 
\[\textstyle
    n \geq C\big( s + f_0
    + m\log\!\left(d\right) + \ln(1/\delta)\big),
\]
then with probability at least $1-\delta$:
\begin{enumerate}[itemsep=0pt,topsep=0pt]
    \item $\beta^\star$ is the unique minimizer of
$\min_{\beta:\,\mathbf X\beta=\mathbf y} R_{\beta^0}(\beta)$;
\item $\lim_{\mu\to 0} \beta_\infty^\mu = \beta^\star$.
\end{enumerate}
\end{theorem}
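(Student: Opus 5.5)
The plan is to prove exact recovery for the convex program $\min_{\mathbf X\beta=\mathbf y}R_{\beta^0}(\beta)$ through a descent-cone condition. Then we pass to the limit of the Bregman minimizers using Proposition~\ref{prop:implicit-bias}.

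\textbf{Descent cone of $R_{\beta^0}$ at $\beta^\star$.} For a direction $h$, the directional derivative of $R_{\beta^0}$ at $\beta^\star$ splits coordinatewise.
\begin{itemize}
\item On $S^\star\cap S_{\rm init}$ with correct sign, the contribution is $0$.
\item On $R_0$, it is bounded below by $-2|h_i|$, or $-|h_i|$ for the missing coordinates.
\item On $F_0=S_{\rm init}\setminus S^\star$, the penalty $|h_i|-\operatorname{sign}(\beta^0_i)h_i\ge 0$ vanishes on a half-line, so these coordinates are essentially free.
\item On the remaining null coordinates $T\coloneqq (S^\star\cup S_{\rm init})^c$, the contribution is $|h_i|$.
\end{itemize}
So $h$ lies in the descent cone $\mathcal D$ only if $\|h_T\|_1\le 2\|h_{R_0}\|_1$, with $h$ otherwise unconstrained on the block $A\coloneqq (S^\star\setminus R_0)\cup F_0$ of size at most $s+f_0$. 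Exact recovery with uniqueness is then equivalent to $\ker\mathbf X\cap\mathcal D=\{0\}$. Strictness needs a little care on the zero-cost coordinates, which I handle by showing the null space misses the closed cone $\mathcal K\coloneqq\{h:\|h_T\|_1\le 2\|h_{R_0}\|_1\}$ entirely.

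\textbf{Gaussian width of $\mathcal K\cap\mathbb S^{d-1}$.} By Gordon's escape-through-the-mesh theorem (the $\mathbf X$ columns are i.i.d.\ $\mathcal N(0,I_n/n)$, so $\sqrt n\mathbf X$ is standard Gaussian), it suffices that $\sqrt{n-1}\ge w(\mathcal K\cap\mathbb S^{d-1})+\sqrt{2\log(1/\delta)}$. I bound the width by splitting $h=(h_A,h_{R_0},h_T)$:
\[
\mathbb E\sup\langle g,h\rangle\le \mathbb E\|g_A\|_2+\mathbb E\sup\big(\langle g_{R_0},h_{R_0}\rangle+\langle g_T,h_T\rangle\big).
\]
The first term is at most $\sqrt{s+f_0}$. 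For the second I use $\langle g_T,h_T\rangle\le\|g_T\|_\infty\|h_T\|_1\le 2\|g_T\|_\infty\sqrt m\|h_{R_0}\|_2$. Together with $\mathbb E\|g_T\|_\infty\lesssim\sqrt{\log d}$ and $\mathbb E\|g_{R_0}\|_2\le\sqrt m$, this gives a width of order $\sqrt{s+f_0}+\sqrt{m\log d}$. Squaring yields the stated condition on $n$ with a universal $C$. This width computation is the main technical step; the standard sharper approach via the polar cone (the subdifferential-distance bound of Chandrasekaran et al.) would give the same order, and I would fall back to it if the crude Hölder bound loses constants.

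\textbf{Limit $\mu\to0$.} The Bregman minimizer $\beta^\mu_\infty$ minimizes $D_{\phi_\mu}(\cdot,\beta^0)/\lambda_\mu$ over the affine set $\{\mathbf X\beta=\mathbf y\}$. I first show that these rescaled divergences are convex and converge to $R_{\beta^0}$ locally uniformly. Proposition~\ref{prop:implicit-bias} gives pointwise convergence, and convexity upgrades it to local uniform convergence. Next I show $\beta^\mu_\infty$ stays bounded: the divergences are uniformly coercive, since the leading term grows like $|\beta_i|$ for large $|\beta_i|$ on the relevant sign, and the recovery event makes $R_{\beta^0}$ coercive on the affine set. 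A $\Gamma$-convergence or epi-convergence argument then shows that every cluster point minimizes $R_{\beta^0}$ on the affine set. By the uniqueness established above, $\beta^\mu_\infty\to\beta^\star$.

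The delicate part of this last step is coercivity. $R_{\beta^0}$ is not coercive along the free directions on $S_{\rm init}$, so boundedness must come from the affine constraint. Because $\ker\mathbf X\cap\mathcal K=\{0\}$ and $R_{\beta^0}$ is positively homogeneous, $R_{\beta^0}(\beta^\star+h)-R_{\beta^0}(\beta^\star)\ge c\|h\|$ for $h\in\ker\mathbf X$. This yields uniform level-set boundedness, and the second-order correction from the appendix is of lower order, so it is absorbed.
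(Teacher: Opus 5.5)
Your proposal is correct, and it takes a more self-contained route than the paper. The first step is the same as the paper's: both reduce to $R_{\beta^0}(\beta^\star+h)-R_{\beta^0}(\beta^\star)\ge\|h_T\|_1-2\|h_{R_0}\|_1$ with $T=(S^\star\cup S_{\rm init})^c$. The paper computes this bound directly on the difference rather than through directional derivatives.

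The routes differ in how the Gaussian design is used.
\begin{itemize}
\item The paper invokes Theorem~5 of \citet{mansour2017recovery}, a null space property relative to a support estimate, with constant $1/3$. This gives $\|h_{R_0}\|_1\le\frac13\|h_T\|_1$ on $\ker\mathbf X$. That inequality says nothing about kernel vectors supported on $S^\star\cup S_{\rm init}$, so the paper adds a separate almost-sure argument, using $s+f_0\le n$ and rotation invariance of $\ker\mathbf X$, to rule those out.
\item Your Gordon argument on the closed cone $\mathcal K$ handles both issues at once. The $\sqrt{s+f_0}$ term in the width is exactly what excludes those directions.
\item Your H\"older split $\langle g_T,h_T\rangle\le 2\sqrt m\,\|g_T\|_\infty\|h_{R_0}\|_2$ already gives a width of order $\sqrt{s+f_0}+\sqrt{m\log d}$, so you do not need the polar-cone fallback.
\item The paper's route yields the explicit margin $\frac13\|h_T\|_1$. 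Yours yields some $c>0$ by compactness of $\ker\mathbf X\cap\mathbb S^{d-1}$, which is all that is needed.
\end{itemize}

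For the second item, the paper only asserts that limit points of $\beta^\mu_\infty$ minimize $R_{\beta^0}$ on the affine set. It does not show that $\beta^\mu_\infty$ stays bounded or justify passing to the limit in the minimization. Your ingredients fill this in: convexity upgrades pointwise convergence to local uniform convergence, and $R_{\beta^0}$ grows linearly, $R_{\beta^0}(\beta^\star+h)-R_{\beta^0}(\beta^\star)\ge c\|h\|_2$ for $h\in\ker\mathbf X$.

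One caution on how you close this step. Do not try to get boundedness from global coercivity of $D_{\phi_\mu}(\cdot,\beta^0)/\lambda_\mu$, or by ``absorbing'' the second-order correction. The $o(1)$ terms in the expansion of Proposition~\ref{prop:implicit-bias} are not uniform in $\beta$. Argue by convexity instead:
\begin{itemize}
\item Fix $r>0$. By uniform convergence on the ball of radius $r$, for small $\mu$ the rescaled divergence on $\{\beta^\star+h:\ h\in\ker\mathbf X,\ \|h\|_2=r\}$ exceeds its value at $\beta^\star$ by at least $cr/2$.
\item Convexity along the segment from $\beta^\star$ to any farther affine point then shows that point has a strictly larger value.
\item So every minimizer lies within distance $r$ of $\beta^\star$, which gives $\beta^\mu_\infty\to\beta^\star$ directly.
\end{itemize}
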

Theorem~\ref{thm:noiseless-recovery} establishes exact recovery in the noiseless setting. The true predictor uniquely minimizes the leading term of the implicit-bias objective, and the gradient-flow endpoints converge to it as $\mu\to0$. 
The sample-size requirement reveals that the benefit of pretraining depends on the accuracy of the inherited signed support. 
Indeed, $m=|R_0|$ counts both true coordinates missing from the pretrained support and true coordinates present with an incorrect sign. 
Both contribute to the ambient-dimensional logarithmic term $m\log(d)$, whereas correctly signed inherited true coordinates avoid this cost. This differs from the weighted-Lasso bound, whose ambient-dimensional logarithmic term depends only on $|S^\star\setminus S_{\rm init}|$: its weights exploit support information without using the pretrained signs. 
The remaining term $s+f_0=|S^\star\cup S_{\rm init}|$ represents the dimension cost associated with fitting coefficients on the union of the true and pretrained supports, analogous to least squares once this set is known. Its dependence on $f_0$ also highlights a limitation of the inherited bias. On $S_{\rm init}$, coefficients preserving the pretrained signs incur no penalty, so the regularizer does not encourage sparsity within this candidate set. Accordingly, the bound depends on its full size, including inherited false positives. Weighted Lasso, by retaining a nonzero sparsity penalty within $S_{\rm init}$, can instead exploit sparsity among the inherited coordinates.

With noisy observations, the interpolating solution reached at convergence overfits the training data and therefore generalizes poorly to unseen data. Early stopping is thus necessary: ideally, at some point along the fine-tuning trajectory, the model has recovered the remaining signal without yet activating spurious coordinates. The implicit-bias characterization in \Cref{eq:implicitbias}, however, does not determine whether such an intermediate iterate is reached, as it only characterizes the terminal point of the trajectory.

\subsection{Recovery in the noisy setting: ideal early stopping} \label{subsec:ideal-stopping}

We first consider an ideal stopping rule which stops the first time, if finite, $S^\star$ is included in the estimated support; and returns the corresponding saddle. This rule is not available in practice, but isolates the main statistical question: does the trajectory recover the remaining signal before proposing a new null coordinate?

We state the main result in a balanced regime, for the sake of presentation. The more general conditions and their proofs are deferred to Appendix~\ref{app:s2s-recovery}.
\begin{assumption}[Balanced low-noise regime]
\label{ass:balanced-low-noise}
There exist $a>0$ and $L\ge1$ such that
\[\textstyle
a\le|\beta_i^\star|\le La,\qquad i\in S^\star,
\qquad
\sigma\le La\sqrt m.
\]
\end{assumption}
\ificlr\vspace{-0.5em}\fi
The lower bound on the non-zero coefficients rules out arbitrarily weak signals for which exact support recovery is statistically ill-posed. The bounded range of coordinates and noise level place us in a regime where the remaining support is detectable.
\begin{theorem}[Noisy recovery under ideal early stopping]
\label{thm:oracle-recovery}
Consider \Cref{ass:balanced-low-noise}. There exists $C_L>0$, depending only on $L$, such that, if
\[\textstyle
    n
    \ge
    C_L\Big(
        s+f_0+m^2+mf_0
        +m\log\frac{d}{\delta}
    \Big),
\]
then, with probability at least $1-\delta$, there exists an ideal stopping time $\tau_{\rm oracle}\in\R_+$ such that, for some universal constant $C$,
\ificlr\vspace{-0.5em}\fi
\[
    S^\star
    \subseteq
    \operatorname{supp}(\beta^\circ(\tau_{\rm oracle}))
    \subseteq
    S^\star\cup F_0,
    \qquad
    \|\beta^\circ(\tau_{\rm oracle})-\beta^\star\|_2
    \leq
    C\sigma
    \sqrt{\frac{s+f_0+\log(2/\delta)}{n}}.
\]
\end{theorem}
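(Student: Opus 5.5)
The plan is to follow the saddle-to-saddle recursion of Algorithm~\ref{alg:saddle-to-saddle} step by step and maintain an invariant on the trajectory up to the first time $S^\star$ is contained in the active support. Write $A_k$ for the active set at the $k$-th saddle and $\beta^{(k)}$ for the constrained least-squares refit on the corresponding signed face. The invariant has four parts:
\begin{enumerate}[itemsep=0pt,topsep=0pt]
\item $A_k \subseteq S^\star\cup F_0$.
\item Every correctly signed inherited coordinate in $(S^\star\cap S_{\rm init})\setminus R_0$ stays active with the correct sign.
\item The dual variables on null coordinates outside $S_{\rm init}$ satisfy $|q_j|<1$, with a margin.
\item The refit error $\|\beta^{(k)}-\beta^\star\|_2$ is controlled by the coefficients of $R_0$ not yet recovered, plus a noise term.
\end{enumerate}
The argument proceeds by induction over saddles. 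At each saddle, the gradient is $\nabla L(\beta)=\mathbf X^\top\mathbf X(\beta-\beta^\star)-\frac{\sigma}{\sqrt n}\mathbf X^\top\varepsilon$. Since $\beta-\beta^\star$ is supported on $S^\star\cup F_0$, a set of size at most $s+f_0$, restricted-isometry and Gaussian concentration bounds on that fixed set apply. Conditioned on $\mathbf X_{S^\star\cup F_0}$ and $\varepsilon$, the null columns $\mathbf X_j$ for $j\notin S^\star\cup F_0$ are independent of the trajectory, so $[\nabla L(\beta)]_j$ is Gaussian with standard deviation of order $\|\mathbf X(\beta-\beta^\star)-\frac{\sigma}{\sqrt n}\varepsilon\|_2/\sqrt n$. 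A union bound then costs $\log(d/\delta)$.

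The first key step is the initial refit $\beta(q^0)$ on $S_{\rm init}$ with sign constraints. Using least-squares concentration on $S_{\rm init}$ (requiring $n\gtrsim s+f_0$), I would show two things. First, correctly signed true coordinates remain nonzero, because $a$ exceeds the estimation error, which is at most of order $La\sqrt m\cdot\sqrt{(s+f_0)/n}$ once $R_0$ is accounted for. Second, wrong-signed true coordinates are clamped at zero, so they join the set to be learned.

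The second step is to compare dual speeds. A coordinate $i\in R_0$ has $|[\nabla L]_i|\gtrsim a$ minus cross-talk, of order $\|\beta-\beta^\star\|_2\sqrt{(s+f_0)/n}$ plus noise, and pushes $q_i$ toward $\operatorname{sign}(\beta_i^\star)$. A null coordinate outside $S_{\rm init}$ moves at rate at most $C(\sqrt m La+\sigma)\sqrt{\log(d/\delta)/n}$. The distance each must travel is comparable (at most $2$ for $R_0$, at least $1$ for nulls). The total time to recover all of $R_0$ is at most about $2/(\text{min true speed})$, since with the invariant in force the speeds of missing true coordinates stay at least of order $a$ throughout. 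It then suffices that $\frac{La\sqrt m\sqrt{\log(d/\delta)/n}}{a}\ll 1$, i.e. $n\gtrsim L^2 m\log(d/\delta)$. Wrong-signed inherited coordinates start at $q_i=\pm1$ with the wrong sign and must travel distance $2$, but this only changes constants.

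The main obstacle is controlling the cumulative drift of the $q_j$ across many saddles while showing that deactivations never remove needed coordinates and that the number of saddles stays bounded. The dual increment on a null $j$ is $\int \mathbf X_j^\top r(s)\,\df s$, where the residuals $r$ vary along the path. I would bound it uniformly by taking a supremum over the convex set of possible residuals. That set lies in a span of dimension $s+f_0+1$, and its $\ell_2$ size is controlled in terms of the signal left to recover. Handling cross-terms among the $m$ moving coordinates and the $f_0$ inherited nulls is where I expect the $m^2+mf_0$ terms to appear: each of the $m$ activations perturbs the refit and hence the other speeds, requiring incoherence at level $1/m$ between $R_0$ and $F_0\cup R_0$.

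At the stopping saddle, $\beta^\circ$ is the sign-constrained least-squares fit on a set $A\supseteq S^\star$ with $A\subseteq S^\star\cup F_0$. The signs are strictly satisfied on $S^\star$ by the signal strength, so it coincides with unconstrained OLS on $A$ up to zeroed false positives. The standard bound $\|\beta-\beta^\star\|_2\le C\sigma\sqrt{(s+f_0+\log(2/\delta))/n}$ then gives the claim.
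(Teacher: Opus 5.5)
Your architecture matches the paper's: purify the wrongly signed inherited coordinates at the first refit, keep learned coordinates stable, run a race in which $R_0$ coordinates travel dual distance $1$ or $2$ while new nulls must travel $1$, and finish with OLS on a support between $S^\star$ and $S^\star\cup F_0$. But the estimates you propose do not reach the stated sample size.

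Three of your bounds pay a factor $\sqrt{(s+f_0)/n}$ against a residual of size about $La\sqrt m$:
\begin{itemize}
\item the RIP cross-talk bound $\|\beta-\beta^\star\|_2\sqrt{(s+f_0)/n}$ on true gradients;
\item the $\ell_2$ OLS error $La\sqrt m\sqrt{(s+f_0)/n}$ that you use for sign preservation at the initial refit;
\item the supremum of $|\mathbf X_j^\top r|$ over residuals of norm $\rho$ in an $(s+f_0+1)$-dimensional span $V$, which equals $\rho\|P_V\mathbf X_j\|_2\approx\rho\sqrt{(s+f_0)/n}$.
\end{itemize}
Each of these forces $n\gtrsim L^2m(s+f_0)$. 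That contains $ms\gg m^2$ when $m\ll s$, so the argument proves a strictly weaker theorem. Your suggestion that $m^2+mf_0$ comes from ``incoherence at level $1/m$'' is not how that term arises.

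Separately, your invariant only protects $G_0$. Nothing shows that a coordinate of $R_0$, once activated, stays active with the correct sign on every later face, whatever subset of $F_0$ is active and with whatever signs. You list this as an obstacle but do not resolve it, and the race collapses without it, since a deactivated true coordinate has to restart.

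The missing idea is that the path can only visit finitely many residuals, each with an exact Gaussian structure.
\begin{itemize}
\item \emph{Reduction to OLS.} After discarding the zero coordinates of $F_0$, the signed saddle is the OLS fit on $J=G_0\cup A\cup D$. To see this, drop the sign constraints on the true block; the relaxed minimizer is OLS on $J$, has the correct signs, and is therefore feasible. Hence $r=P^\perp_J(\mathbf X_R\beta^\star_R+\sigma\varepsilon/\sqrt n)$ depends only on $(A,D)$.
\item \emph{Fixed-face bounds.} On a fixed face, $\operatorname{sign}(\beta_i^\star)g_i=|\beta_i^\star|\mathbf X_i^\top P_J^\perp\mathbf X_i+(\text{conditionally Gaussian term})$. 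The first term is about $\nu/n$, the second is of order $\sqrt\nu(\|\beta^\star_{R\setminus\{i\}}\|_2+\sigma)\sqrt{\log(1/\eta)}/n$, and null gradients are of order $\sqrt\nu(\|\beta^\star_R\|_2+\sigma)\sqrt{\log(d/\eta)}/n$. No $\sqrt{s+f_0}$ factor appears.
\item \emph{Uniformity.} Covering the data-dependent path then costs a union bound over at most $2^m3^{f_0}$ signed faces, i.e.\ an additive $m+f_0$ inside the logarithm. Multiplied by the signal-to-noise factor $\lesssim L^2m$, this gives exactly $m^2+mf_0$.
\item \emph{Stability.} Stability of all of $G_0\cup A$ follows the same way, from coordinatewise OLS sign consistency via $[(\mathbf X_J^\top\mathbf X_J)^{-1}]_{ii}=n/\chi^2_{n-|J|+1}$, uniformly over the $s2^{m+f_0}$ triples $(A,D,i)$.
\end{itemize}
This uniform-over-faces device also removes the need for your claim that the null columns are independent of the trajectory. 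That claim is not literally true, because the event times depend on the null duals.
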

Under noisy observations, ideal early stopping allows the limiting trajectory to recover all true coordinates before proposing any new null coordinate. 
At this stopping time, the estimation error has the least-squares scaling associated with the union $S^\star\cup S_{\rm init}$, with no ambient-dimensional logarithmic factor in the error bound.
In particular, when $f_0=0$, the stopped trajectory achieves exact support recovery and the estimation rate associated with knowing the true support. With inherited false positives, the recovered support can differ from $S^\star$ only through coordinates already present at initialization, and the error bound accounts for these additional coordinates through $f_0$.
As in Theorem~\ref{thm:noiseless-recovery}, the ambient-dimensional logarithmic term in the sample-size requirement involves $m=|R_0|$, counting both missing and incorrectly signed true coordinates. 
The additional terms $m^2+mf_0$ arise in our current analysis of the trajectory. We conjecture that these terms can be removed through a sharper analysis, while retaining the same recovery and estimation guarantees.

When $S_{\rm init}=\emptyset$, Theorem~\ref{thm:oracle-recovery} gives exact support recovery and squared $\ell_2$ error of order $\sigma^2s/n$ for the early-stopped S2S trajectory, provided $n\gtrsim s^2+s\log d$.
For comparison, \citet{vaskevicius2019implicit} establish looser estimation rates $\sigma^2s\log d/n$, improving to $\sigma^2s\log s/n$ at high signal-to-noise ratio, for early-stopped gradient descent under RIP.
For bounded signal condition number, their RIP assumption is ensured by $n\gtrsim s^2\log(ed/s)$ Gaussian samples. 
Our guarantees concern the limiting S2S trajectory under Gaussian design, whereas theirs apply to finite-step gradient descent on designs satisfying RIP.

{
\setlength{\intextsep}{1pt}
\setlength{\textfloatsep}{2pt}

\begin{figure}[t]
    \centering
    \setlength{\abovecaptionskip}{1pt}
    \setlength{\belowcaptionskip}{0pt}

    \ificlr\vspace{-0.6em}\fi

\definecolor{goodgreen}{RGB}{36,145,90}
\definecolor{wrongred}{RGB}{198,67,63}
\definecolor{learnblue}{RGB}{49,105,190}
\definecolor{falseorange}{RGB}{225,135,45}
\definecolor{nullgray}{RGB}{105,105,115}


\begin{tikzpicture}[
    x=1cm,
    y=1.05cm,
    >=Latex,
    font=\small,
    trajectory/.style={
        line width=1.15pt,
        line cap=round,
        line join=round
    },
    active/.style={
        line width=2.0pt,
        line cap=round
    },
    event/.style={
        circle,
        draw,
        fill=white,
        inner sep=1.1pt,
        line width=0.8pt
    }
]


\draw[->,line width=0.7pt]
    (0,-1.12) -- (0,1.22)
    node[above] {$q_i(\tau)$};

\draw[->,line width=0.7pt]
    (0,0) -- (7.65,0)
    node[right] {$\tau$};

\draw[dashed,line width=0.7pt]
    (0,1) -- (7.45,1);

\draw[dashed,line width=0.7pt]
    (0,-1) -- (7.45,-1);

\node[anchor=east,font=\small] at (-0.08,1) {$+1$};
\node[anchor=east,font=\small] at (-0.08,0) {$0$};
\node[anchor=east,font=\small] at (-0.08,-1) {$-1$};


\draw[active,goodgreen]
    (0,1) -- (7.20,1);

\node[circle,fill=goodgreen,inner sep=1.4pt]
    at (0,1) {};


\draw[trajectory,learnblue]
    (0,0)
    -- (0.98,0.22)
    -- (1.92,0.50)
    -- (3.00,0.98);

\node[event,draw=learnblue,fill=learnblue]
    at (3.00,0.98) {};

\draw[active,learnblue]
    (3.00,1.0) -- (7.20,1.0);


\draw[active,wrongred]
    (0,-1) -- (0.68,-1);

\node[event,draw=wrongred] at (0.68,-1.02) {};

\draw[trajectory,wrongred]
    (0.68,-1.02)
    -- (1.68,-0.72)
    -- (2.82,-0.32)
    -- (3.92,0.10)
    -- (5.05,0.58)
    -- (5.88,1);

\node[event,draw=wrongred,fill=wrongred]
    at (5.88,1) {};

\draw[active,wrongred]
    (5.88,1) -- (7.20,1);


\draw[active,falseorange]
    (0,1) -- (0.76,1);

\node[event,draw=falseorange]
    at (0.76,1) {};

\draw[trajectory,falseorange]
    (0.76,1)
    -- (1.05,0.86)
    -- (1.50,1);

\node[event,draw=falseorange,fill=falseorange]
    at (1.50,1) {};

\draw[active,falseorange]
    (1.50,1) -- (2.15,1);

\node[event,draw=falseorange]
    at (2.15,1) {};

\draw[trajectory,falseorange]
    (2.15,1)
    -- (3.02,0.74)
    -- (4.02,0.50)
    -- (5.28,0.29)
    -- (6.80,0.10);


\draw[trajectory,nullgray]
    (0,0)
    -- (1.42,0.08)
    -- (2.88,0.21)
    -- (4.28,0.39)
    -- (5.52,0.60)
    -- (6.48,0.79);

\draw[trajectory,nullgray,dashed,opacity=0.65]
    (6.48,0.79) -- (7.10,0.98);


\draw[<->,wrongred,line width=0.7pt]
    (5.9,-0.95) -- (5.9,0.91);

\node[
    fill=white,
    inner sep=1pt,
    text=wrongred,
    font=\scriptsize
] at (5.9,0)
    {$2$};

\draw[<->,learnblue,line width=0.7pt]
    (3.10,0.03) -- (3.10,0.91);

\node[
    fill=white,
    inner sep=1pt,
    text=learnblue,
    font=\scriptsize
] at (3.10,0.46)
    {$1$};


\begin{scope}[shift={(8.85,0.92)}]

    \draw[active,goodgreen]
        (0,0.00) -- (0.55,0.00);
    \node[anchor=west,font=\small]
        at (0.72,0.00)
        {True, right sign, in $S_{\rm init}$};

    \draw[active,wrongred]
        (0,-0.30) -- (0.55,-0.30);
    \node[anchor=west,font=\small]
        at (0.72,-0.30)
        {True, wrong sign, in $S_{\rm init}$};

    \draw[trajectory,learnblue,line width=1.7pt]
        (0,-0.60) -- (0.55,-0.60);
    \node[anchor=west,font=\small]
        at (0.72,-0.60)
        {Missing true coordinate};

    \draw[trajectory,falseorange,line width=1.7pt]
        (0,-0.90) -- (0.55,-0.90);
    \node[anchor=west,font=\small]
        at (0.72,-0.90)
        {Inherited false positive};

    \draw[trajectory,nullgray,line width=1.7pt]
        (0,-1.20) -- (0.55,-1.20);
    \node[anchor=west,font=\small]
        at (0.72,-1.20)
        {New null coordinate};

\end{scope}

\end{tikzpicture}
    \ificlr\vspace{-0.5em}\fi

    \caption{Schematic dual trajectories under non-zero initialization. Missing and wrongly signed true coordinates travel dual distances $1$ and $2$, respectively, before entering with the correct sign.}
    \label{fig:algorithmic-behavior}
    \ificlr\vspace{-0.3em}\fi
\end{figure}
}
\begin{proof}[Proof sketch.]
Our proof relies on the saddle-to-saddle dynamics described in \Cref{sec:mirror-s2s}. 
Figure~\ref{fig:algorithmic-behavior} illustrates the three ingredients of the recovery argument.
First, the initial signed least-squares refit removes wrongly signed inherited true coordinates. 
Second, a uniform stability property ensures that correctly inherited true coordinates, as well as those subsequently recovered, remain active with the correct signs.
Third, the remaining true coordinates must reach their correct dual boundary: missing coordinates start at zero and must travel a dual distance one, whereas wrongly signed coordinates start at the opposite boundary and, after their initial removal, must travel a dual distance two. Concentration bounds for the projected gradients ensure that every remaining true coordinate progresses toward its correct boundary faster than any new null coordinate progresses toward either boundary. Consequently, all true coordinates are recovered before any new null coordinate is activated.

These bounds hold uniformly over the possible signed faces, so the argument remains valid despite intervening activations and deactivations of inherited false positives. In the balanced regime, this uniform control contributes the $m^2+mf_0$ terms to the sample-size requirement. We believe it is an artifact of the analysis.
The complete proof is given in Appendix~\ref{app:s2s-recovery}.
\end{proof}
\paragraph{Inherited false positives.} Unlike new null coordinates, inherited false positives start on the dual boundary. Even after deactivation, their distance to a subsequent boundary may be arbitrarily small, allowing them to reactivate before the remaining true coordinates have been learned. The recovery argument accommodates these events without requiring the removal of inherited false positives. Accordingly, the guarantee excludes new false positives but allows those inherited from initialization to remain. This cannot in general be removed by imposing a larger sample-size condition. 

Another way to see this is directly from the hyperbolic entropy limit established in \Cref{prop:implicit-bias}. Its leading term does not penalize inherited false positives when they retain the sign of their initialization. Consequently, the implicit regularization does not encourage these coefficients to vanish. This is in contrast to weighted Lasso, which continues to penalize inherited false positives, albeit with a smaller weight, thereby driving their coefficients toward zero.

\subsection{Recovery with the null-gradient stopping rule}
\label{subsec:stopping}

To get a computable stopping time with similar statistical properties, we introduce a data-dependent criterion based on the gradient of the next coordinate proposed by the saddle-to-saddle path of Algorithm~\ref{alg:saddle-to-saddle}. 
Our stopping rule builds on the residual-correlation criteria
of \citet{osher2016sparse} for sparse recovery via Bregman
inverse-scale-space dynamics. Here, we test the next proposed coordinate only when it lies outside $S_{\rm init}$, using a threshold calibrated uniformly over the possible trajectory faces. 
At each saddle of the trajectory, let $j_{k+1}$ denote the next proposed coordinate. We compare the magnitude of its gradient coordinate with a threshold $G_{\rm null}$, chosen as a uniform high-probability upper bound for null coordinates over the possible faces of the trajectory.

The sharp theoretical calibration uses the path-complexity quantity
$m+f_0$ and is given by \Cref{eq:Gnull} in Appendix~\ref{app:stopping}. The
same rule can be implemented using any known upper bound
$B\geq m+f_0$; see \Cref{rem:stopping}. For a new coordinate
$j_{k+1}$, the trajectory is continued if either
$j_{k+1}\in S_{\rm init}$ or
\[\textstyle
    |[\nabla L(\beta^{(k)})]_{j_{k+1}}|>G_{\rm null},
\]
and is stopped otherwise. We state the result for the sharp calibration $B=m+f_0$.
\begin{theorem}[Recovery with null-gradient early stopping]
\label{thm:main}
Consider  \Cref{ass:balanced-low-noise}. Denoting by $\tau_{\rm stop}$ the stopping time associated to the above stopping rule, which is fully described in Appendix~\ref{app:stopping}, there exists $C_L>0$, depending
only on $L$, such that, if
\[\textstyle
    n
    \ge
    C_L\left(
        s+f_0+m^2+mf_0
        +m\log\frac{d}{\delta}
    \right),
\]
then, with probability at least $1-\delta$, for some universal constant $C$, 
\[
    S^\star
    \subseteq
    \operatorname{supp}(\beta^\circ(\tau_{\rm  stop}))
    \subseteq
    S^\star\cup F_0,
    \qquad
    \|\beta^\circ(\tau_{\rm stop})-\beta^\star\|_2
    \leq
    C\sigma
    \sqrt{\frac{s+f_0+\log(2/\delta)}{n}}.
\]
\end{theorem}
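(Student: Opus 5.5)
The plan is to reduce Theorem~\ref{thm:main} to the event used for Theorem~\ref{thm:oracle-recovery}, and to show that on this event the null-gradient rule stops exactly at the oracle saddle, or at least at a saddle with the same support. Concretely, I will work on a single high-probability event $\mathcal E$ on which three things hold. First, the uniform stability and progress bounds from the proof of Theorem~\ref{thm:oracle-recovery} hold. Second, for every signed face $(A,\epsilon)$ with $A\subseteq S^\star\cup S_{\rm init}$ reachable by the path (at most $m+f_0$ coordinates changing status relative to the initial face), the null gradients satisfy
\[ \max_{j\notin S^\star\cup S_{\rm init}} |[\nabla L(\beta_{A,\epsilon})]_j| \le G_{\rm null}. \]
Third, every true coordinate $i\in R_0$ that is still inactive has gradient magnitude strictly larger than $G_{\rm null}$.

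The second property follows from the fact that, conditionally on $\mathbf X_{S^\star\cup S_{\rm init}}$ and $\varepsilon$, a null column $\mathbf X_j$ is independent of the residual $r_{A,\epsilon}$. This gives $[\nabla L]_j \sim \mathcal N(0,\|r_{A,\epsilon}\|^2/n)$. A union bound over the $d$ null coordinates and over the at most $\binom{s+f_0}{\le m+f_0}2^{m+f_0}$ faces yields a threshold of order
\[ \max_{A,\epsilon}\|r_{A,\epsilon}\|\sqrt{\bigl(\log(d/\delta)+(m+f_0)\log(s+f_0)\bigr)/n}. \]
This is how I read the calibration \Cref{eq:Gnull}. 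The third property follows from the signal lower bound $|\beta_i^\star|\ge a$ in \Cref{ass:balanced-low-noise}. The gradient on a missing true coordinate is approximately $-\beta_i^\star$ times a restricted-eigenvalue factor, plus a fluctuation term. That fluctuation is controlled by the same concentration bounds, using $\|r\|\lesssim La\sqrt m+\sigma$. The sample-size condition $n\ge C_L(s+f_0+m^2+mf_0+m\log(d/\delta))$ is then what makes $a\gtrsim 2G_{\rm null}$.

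Given $\mathcal E$, the argument has two steps.

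Step 1 (no premature stop). Before all of $S^\star$ is active, the proof of Theorem~\ref{thm:oracle-recovery} shows that no new null coordinate is proposed. Every proposed $j_{k+1}$ is therefore either in $S_{\rm init}$, in which case the rule continues, or in $R_0$, in which case its gradient exceeds $G_{\rm null}$ and the rule continues. Stability also guarantees that activated true coordinates keep their signs.

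Step 2 (stop promptly). Once $S^\star\subseteq\operatorname{supp}\beta$, the path may still toggle inherited false positives; those proposals lie in $S_{\rm init}$ and do not trigger a stop. The first proposal outside $S_{\rm init}$ is necessarily a new null coordinate, and by the second property its gradient is at most $G_{\rm null}$, so the trajectory stops before activating it. Throughout this phase the support stays in $[S^\star,S^\star\cup F_0]$, by stability and because nothing outside $S^\star\cup S_{\rm init}$ enters. If instead $\nabla L=0$ is reached first, the iterate is a least-squares fit on a subset of $S^\star\cup S_{\rm init}$, and the same conclusion holds.

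For the estimation error, the stopped iterate is an unconstrained least-squares fit on its support $A$ with $S^\star\subseteq A\subseteq S^\star\cup F_0$; the sign constraints are inactive in the interior of the face. Hence $\beta_A-\beta^\star_A=(\mathbf X_A^\top\mathbf X_A)^{-1}\mathbf X_A^\top\sigma\varepsilon/\sqrt n$. A union bound over these $2^{f_0}$ sets, together with the bound $\sigma_{\min}(\mathbf X_{S^\star\cup S_{\rm init}})\gtrsim1$ for $n\gtrsim s+f_0$, gives an error of order $\sigma\sqrt{(s+f_0+\log(2/\delta))/n}$; the $f_0\log2$ from the union bound is absorbed into $s+f_0$. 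Step 1 needs care here: a face with a true coordinate pinned at zero by a sign constraint has a different gradient formula. The stability property excludes this case, since it guarantees that true coordinates with correct signs remain active.

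I expect the main obstacle to be the separation between the second and third properties, that is, the true-signal gradients versus $G_{\rm null}$, uniformly over faces. The residual norm on intermediate faces contains unrecovered signal of size up to $La\sqrt m$. This inflates $G_{\rm null}$ by a factor $\sqrt{m}$ times the $\sqrt{(m+f_0)\log}/\sqrt n$ union-bound cost, and that is exactly where the $m^2+mf_0$ terms are needed. My first attempt will be to bound $\|r_{A,\epsilon}\|\le \|\mathbf X_{R}\beta^\star_{R}\|+\sigma\|\varepsilon\|/\sqrt n$ over unrecovered sets $R\subseteq R_0$, and to reuse the projected-gradient concentration lemmas from Appendix~\ref{app:s2s-recovery}.
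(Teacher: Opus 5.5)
Your architecture is the paper's. You intersect the event of Conditions (C1)--(C3) from the oracle proof with a uniform two-sided separation around $G_{\rm null}$ over contaminated faces, exempt proposals in $S_{\rm init}$ from the test, and finish with least squares on a support between $S^\star$ and $S^\star\cup F_0$. Your direct identity $\beta_A-\beta^\star_A=(\mathbf X_A^\top\mathbf X_A)^{-1}\mathbf X_A^\top\sigma\varepsilon/\sqrt n$ is slightly cleaner than the paper's Schur-complement estimate. It does not even need the union bound over the $2^{f_0}$ sets, since $\|\mathbf X_A^\top\varepsilon\|_2\le\|\mathbf X_{S^\star\cup F_0}^\top\varepsilon\|_2$ for every such $A$.

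The step that fails as written is the union bound over faces. You let the up to $m+f_0$ coordinates that change status be chosen freely among the $s+f_0$ coordinates of $S^\star\cup S_{\rm init}$. That family has log-cardinality of order $(m+f_0)\log(s+f_0)$, as you note. The uniform true-gradient lower bound and its separation from the threshold would then require $n\gtrsim m(m+f_0)\log(s+f_0)+m\log(d/\delta)$, not the stated $m^2+mf_0+m\log(d/\delta)$. Moreover, the rule in the theorem is fixed by \Cref{eq:Gnull}, whose logarithm contains only the factor $2^m3^{f_0}$. With your family you cannot certify that every new null gradient stays below that specific $G_{\rm null}$ with probability $1-\eta$.

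The missing observation is that the coordinates that can change status are not arbitrary. By (C1)--(C2), $G_0$ is never touched and every learned true coordinate stays active with its correct sign. So only the \emph{fixed} set $R_0\cup F_0$ of $m+f_0$ coordinates varies. Moreover, at a saddle the residual equals $P^\perp_{G_0\cup A\cup D}\mathbf y$, where $D\subseteq F_0$ is the set of nonzero false positives (the paper's reduced-face argument).

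It therefore suffices to union-bound over the deterministic family indexed by $A\subseteq R_0$, $D\subseteq F_0$, which has at most $2^{m+f_0}\le 2^m3^{f_0}$ members, and then intersect with the stability event. This gives log-cardinality $O(m+f_0)$, matches \Cref{eq:Gnull}, and is exactly what produces the $m^2+mf_0$ term.

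Finally, $G_{\rm null}$ uses the current residual $\|r^{(k)}\|_2$, not a maximum over faces, so the comparison with the true gradients must be made face by face. Your bound $\|r^{A,B}\|_2\le\|\mathbf X_{R}\beta^\star_{R}\|_2+\sigma\|\varepsilon\|_2/\sqrt n\lesssim\|\beta^\star_{R}\|_2+\sigma$ handles this without further union bounds. Here $R=R_0\setminus A$, and the last inequality holds on the Gram event for $\mathbf X_{S^\star\cup F_0}$.
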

Thus, with high probability, the null-gradient rule matches the oracle stopping time of \Cref{thm:oracle-recovery}: the remaining true coordinates
are still detected, whereas a newly proposed false
coordinate is not.

\begin{remark}[Unknown path complexity]\label{rem:stopping}
The preceding theorem uses the oracle calibration $B=m+f_0$ only to display the sharpest theoretical sample complexity. 
More generally, replacing $m+f_0$ by any known upper bound $B\geq m+f_0$ in the definition of $G_{\rm null}$ (\Cref{eq:Gnull}) yields the same recovery guarantee under the sample-size requirement $n\gtrsim s+f_0+mB+m\log\frac{d}{\delta}$. 
%
\end{remark}

\subsection{Comparison with Weighted Lasso }

We compare in this section the fine-tuning guarantees with the weighted-Lasso benchmark of Proposition~\ref{prop:wl-benchmark}, using the prescribed weight $\alpha_\star$.
Both methods exploit the pretrained support, but fine-tuning also depends on the inherited signs.
Write $m_{\rm miss}:=|S^\star\setminus S_{\rm init}|$ and  $s_{\rm init}:=|S_{\rm init}|$. Table~\ref{tab:comparison-wl} summarizes the noisy sample-size scalings under the respective signal-strength assumptions.
\begin{table}[t]
    \centering
    \small
    \begin{tabular}{@{}lcc@{}}
        \toprule
        & Early-stopped S2S & Weighted Lasso \\
        \midrule
        Sample size
        & $n\gtrsim s+f_0+m^2+mf_0+m\log(d)$
        & $n\gtrsim m_{\rm miss}\log(d)
           +(s-m_{\rm miss})\log(s_{\rm init})$ \\
        Support guarantee
        & $S^\star\subseteq S_{\rm final}
           \subseteq S^\star\cup F_0$
        & $\operatorname{supp}(\widehat\beta^{\rm WL})=S^\star$ \\
        \bottomrule
    \end{tabular}
    \caption{Noisy recovery guarantees under the respective assumptions: Theorem~\ref{thm:main} with $B=m+f_0$ for S2S, and Proposition~\ref{prop:wl-benchmark} with
$\sigma/\min_{i\in S^\star}|\beta_i^\star|=O(1)$ for weighted Lasso.
Numerical constants and confidence dependence are suppressed.
    }
    \label{tab:comparison-wl}
\end{table}

%

When $F_0=\emptyset$, both methods recover the true support under their respective assumptions, even if some inherited signs are incorrect. For a clean, correctly signed initialization, we additionally have $m=m_{\rm miss}=s-s_{\rm init}$, so both bounds restrict the ambient-dimensional logarithmic cost to the missing coordinates. Beyond the common term $m\log(d)$, the S2S bound contains
$s+m^2$, whereas the weighted-Lasso bound with the prescribed weight $\alpha_\star$ contains $(s-m)\log(s-m)$. Thus, S2S avoids the inherited logarithmic term but incurs the additional cost of controlling the trajectory.
When $m\lesssim\log(d)$, its sample-size requirement simplifies to $n\gtrsim s+m\log(d)$. Neither displayed sample-size scaling uniformly dominates the other.

With an imperfect initialization, weighted Lasso is unaffected by inherited sign errors, whereas fine-tuning counts wrongly signed true coordinates among the $m$ coordinates to be relearned. Inherited false positives enlarge $s_{\rm init}$ in the weighted-Lasso bound and contribute through $f_0$ and $mf_0$ in the S2S bound. The guarantees also differ: weighted Lasso recovers $S^\star$ exactly, whereas the noisy S2S guarantee allows inherited false positives to remain, $S^\star\subseteq \operatorname{supp}(\beta^\circ(\tau_{\rm  stop}))\subseteq S^\star\cup F_0$. This gap between S2S and weighted-Lasso is here mostly due to the fact that S2S does not penalize at all the pretrained support, and thus the inherited false positives.

Without noise, Theorem~\ref{thm:noiseless-recovery} gives exact recovery of the gradient-flow endpoint as $\mu\to0$ under $n\gtrsim s+f_0+m\log d+\log(1/\delta)$. Thus, the additional $m^2+mf_0$ terms are absent from the noiseless endpoint guarantee,\footnote{This term indeed appears when bounding uniformly over the whole trajectory of the weights. In the noiseless setting, we only have to control for the final point of the trajectory, so that no loose union bound is used.} which also ensures the removal of inherited false positives.

Finally, for a common recovered support, least-squares refitting produces the same estimator regardless of how that support was selected. The comparison therefore concerns the support recovered and the sample requirements for recovering it, rather than the estimation mechanism after selection.

\section{Conclusion}
We studied how a pretrained initialization can reduce the statistical cost of learning a sparse downstream task in DLNs.
The pretrained initialization changes both the implicit bias and training trajectory,
allowing inherited coordinates to avoid the ambient-dimensional support-identification cost.
In the noiseless setting, this yields exact recovery with a sample requirement depending on the coordinates that remain to be learned, while with noise, an appropriately stopped saddle-to-saddle trajectory recovers the remaining signal before introducing new false positives.
These results provide a simple setting in which the statistical benefit of fine-tuning can be characterized precisely.

\paragraph{Limitations and extensions.}
Our analysis assumes an independent Gaussian design, and the simplified noisy guarantees use a balanced low-noise regime. Extending the recovery analysis to correlated designs would require additional control of the projected gradients along the data-dependent trajectory. The pathwise guarantees concern the vanishing-imbalance limit; quantitative guarantees for finite layer imbalance $\mu>0$ remain to be established.

\subsection*{AI use statement}

In this work, we used generative AI tools to assist with polishing the writing, coding, and identifying relevant literature references. Generative AI tools were also used to explore some mathematical arguments. All mathematical proofs presented in the paper were developed, verified, and written by the human authors. Where AI tools provided useful suggestions, the authors critically evaluated, adapted, clarified, and improved them before incorporating the resulting arguments into the paper.

\ifarxiv
\subsubsection*{Acknowledgments}
This work was partially funded by the Swiss National Science Foundation, grant number 212111. 
This work benefited from the support of the FMJH Program PGMO. 
\fi

%



\bibliography{iclr2027_conference}
\bibliographystyle{iclr2027_conference}

\newpage

\appendix

\addcontentsline{toc}{section}{Appendix} 
\part{Appendix} 
\parttoc 

\section{Additional Results on the Saddle-to-Saddle Dynamics}
\label{app:dynamics}

This section provides the proofs of the dynamical results stated in
\Cref{sec:imp_bias}. We first derive the small-$\mu$ expansion of the Bregman
divergence associated with the non-zero initialization. We then record the
properties of the constrained saddles needed for the saddle-to-saddle
reduction.

\subsection{Implicit-bias expansion}

\begin{proof}[Proof of \Cref{prop:implicit-bias}]
Write
$\phi_{\mu,i}(x)
=\frac12\left[
x\operatorname{arcsinh}(x/\mu)-\sqrt{x^2+\mu^2}
\right]$
and recall that $\lambda_\mu=\frac12\log(1/\mu)$.
For every fixed $x\neq0$,
\[
\operatorname{arcsinh}\left(\frac{x}{\mu}\right)
=
\operatorname{sign}(x)
\left(
\log\frac1\mu+\log(2|x|)
\right)
+o(1).
\]
Consequently,
\[
\phi_{\mu,i}(x)
=
\lambda_\mu |x|
+
\frac12\bigl(|x|\log(2|x|)-|x|\bigr)
+o(1),
\]
and
\[
\phi'_{\mu,i}(x)
=
\operatorname{sign}(x)
\left(
\lambda_\mu+\frac12\log(2|x|)
\right)
+o(1).
\]

If $j\notin S_{\rm init}$, then $\beta_j^0=0$ and
$\phi'_{\mu,j}(0)=0$. Hence
\[
D_{\phi_\mu,j}(\beta_j,0)
=
\lambda_\mu|\beta_j|
+
\frac12
\bigl(
|\beta_j|\log(2|\beta_j|)-|\beta_j|
\bigr)
+o(1).
\]
In particular,
$D_{\phi_\mu,j}(\beta_j,0)/\lambda_\mu\to|\beta_j|$.

Consider now $i\in S_{\rm init}$ and set
$\varepsilon_i\coloneqq\operatorname{sign}(\beta_i^0)$ and
$a_i\coloneqq|\beta_i^0|$. Using the previous expansions in
\[
D_{\phi_\mu,i}(\beta_i,\beta_i^0)
=
\phi_{\mu,i}(\beta_i)
-
\phi_{\mu,i}(\beta_i^0)
-
\phi'_{\mu,i}(\beta_i^0)(\beta_i-\beta_i^0),
\]
we obtain
\[
\begin{aligned}
D_{\phi_\mu,i}(\beta_i,\beta_i^0)
&=
\lambda_\mu
\bigl(
|\beta_i|-\varepsilon_i\beta_i
\bigr) \\
&\quad+
\frac12
\Bigl(
|\beta_i|\log(2|\beta_i|)
-\varepsilon_i\beta_i\log(2a_i)
-|\beta_i|+a_i
\Bigr)
+o(1).
\end{aligned}
\]
If $\operatorname{sign}(\beta_i)=\varepsilon_i$, the singular term
vanishes and the finite correction reduces to
\[
\frac12
\left[
|\beta_i|
\log\frac{|\beta_i|}{|\beta_i^0|}
-
|\beta_i|
+
|\beta_i^0|
\right].
\]
If $\operatorname{sign}(\beta_i)=-\varepsilon_i$, then
$|\beta_i|-\varepsilon_i\beta_i=2|\beta_i|$, so changing the inherited
sign incurs a cost of order $\lambda_\mu|\beta_i|$.

Dividing by $\lambda_\mu$ and summing over the coordinates gives
\[
\frac{D_{\phi_\mu}(\beta,\beta^0)}{\lambda_\mu}
\longrightarrow
\sum_{j\notin S_{\rm init}}|\beta_j|
+
\sum_{i\in S_{\rm init}}
\left(
|\beta_i|
-
\operatorname{sign}(\beta_i^0)\beta_i
\right),
\]
which proves \Cref{prop:implicit-bias}.
\end{proof}

\subsection{Signed constrained saddles}

We next record the properties of the constrained minimizers used in the
limiting dynamics. Recall that, for $q\in[-1,1]^d$,
\[
F(q)
=
\{\beta\in\R^d:q\in\partial\|\beta\|_1\} \hspace{2em}
I(q) = \{ j : |q_{j}| = 1\}
\]
We assume both of the following conditions:
\begin{assumption}\label{ass:generalposition}
The design satisfies the general-position condition of
\citet{pesme2023saddle}: for any $r\leq \min(n,d)$, any distinct
$j_1,\ldots,j_r$ and signs
$\varepsilon_1,\ldots,\varepsilon_r\in\{-1,1\}$, the affine span of
$\varepsilon_1X_{j_1},\ldots,\varepsilon_rX_{j_r}$ contains no other
signed column $\pm X_j$.
\end{assumption}
\begin{assumption}\label{ass:maximal-support-length}
Moreover, every signed face visited by the saddle-to-saddle trajectory
satisfies $|I(q)|<n$.
\end{assumption}
\citet{pesme2023saddle} uses a different assumption to prove the uniqueness of the constrained saddles. In the zero-initialization setting, the limiting dual process starts from $q^{(0)}=0$ and remains in $\operatorname{span}(X^\top)$. Combined with their general-position assumption, this implies that the signed columns associated with the active dual face are linearly independent, and hence that the constrained minimizer is unique.

With a general non-zero initialization, the limiting dual state $q^{(0)}$ need not belong to $\operatorname{span}(X^\top)$, so this argument cannot be used directly. We instead impose \Cref{ass:maximal-support-length}. Since every visited face satisfies $|I(q)|<n$, the Gaussian design ensures that $X_{I(q)}$ has full column rank almost surely, which yields the uniqueness required in \Cref{lem:constrained-saddles}.

As in \citet[Proposition~1]{pesme2023saddle}, the saddles of the diagonal parametrization correspond to minimizers of $L$ restricted to a set of active coordinates.

\begin{lemma}[Constrained saddles]
\label{lem:constrained-saddles}
Assume \Cref{ass:maximal-support-length}. Under the Gaussian design, almost surely, for every signed face visited by the saddle-to-saddle trajectory,
\[
\beta(q)=\argmin_{\beta\in F(q)}L(\beta)
\]
is uniquely defined.
\end{lemma}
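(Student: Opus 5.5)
The plan is to reduce the constrained least-squares problem on a signed face to a strictly convex problem in the active coordinates, and then use the Gaussian design to get the strict convexity almost surely. Fix a signed face visited by the trajectory, with dual state $q$. By definition of $\partial\|\cdot\|_1$, the feasible set $F(q)$ is a closed convex polyhedral cone:
\[
F(q)=\bigl\{\beta:\ \beta_j=0 \text{ for } j\notin I(q),\ \ q_j\beta_j\geq 0 \text{ for } j\in I(q)\bigr\}.
\]
On this cone, $L(\beta)=\frac12\|\mathbf y-\mathbf X_{I(q)}\beta_{I(q)}\|_2^2$ depends only on the restriction $\beta_{I(q)}$. The minimization is therefore a nonnegative-least-squares-type problem in the variables $z_j=q_j\beta_j\ge 0$, $j\in I(q)$, with signed design matrix $\mathbf X_{I(q)}\operatorname{diag}(q_{I(q)})$.

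\textbf{Existence.} A minimizer exists because the image of a polyhedral cone under a linear map is a closed convex cone. The projection of $\mathbf y$ onto this image therefore exists, and any preimage in $F(q)$ attains the minimum. Alternatively, coercivity follows once we know the matrix is injective, as in the next step.

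\textbf{Uniqueness.} By \Cref{ass:maximal-support-length}, $|I(q)|<n$. For a fixed index set $I\subseteq[d]$ with $|I|<n$, the Gaussian matrix $\mathbf X_I$ has full column rank almost surely: its columns are independent absolutely continuous vectors in $\mathbb R^n$, and a Gaussian column lies in the span of fewer than $n$ others with probability zero. Since $[d]$ has finitely many subsets, a union bound gives a single almost-sure event on which $\mathbf X_I$ has full column rank simultaneously for every $I$ with $|I|<n$.

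On this event, $\mathbf X_{I(q)}$ is injective for every visited face, so $\beta_{I(q)}\mapsto L$ is strictly convex with Hessian $\mathbf X_{I(q)}^\top\mathbf X_{I(q)}\succ0$. The minimizer of a strictly convex coercive function over a nonempty closed convex set is unique; $F(q)$ is nonempty since it contains $0$. Together with $\beta_{I(q)^c}=0$, this pins down $\beta(q)$ uniquely.

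\textbf{Main obstacle.} The delicate point is that the faces visited are data dependent, so "almost surely for each face" cannot be applied face by face to a random $I(q)$. This is exactly why I would take the union over all deterministic subsets $I$ with $|I|<n$ before conditioning on the trajectory. \Cref{ass:maximal-support-length} then guarantees that every random face falls into this good collection, and \Cref{ass:generalposition} is not even needed for this step.
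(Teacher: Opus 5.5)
Your proof is correct and is essentially the paper's argument. You take a union over the finitely many deterministic index sets $I$ with $|I|<n$ to get full column rank of $\mathbf X_I$ almost surely. This makes $L$ strictly convex and coercive on the coordinate subspace of $I(q)$, hence on the closed convex face $F(q)$, so the minimizer exists and is unique. Your separate existence argument via the closed image of a polyhedral cone is a harmless extra, and you are right that \Cref{ass:generalposition} is not needed for this step.
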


\begin{proof}
Fix a visited dual state $q$ and write $I=I(q)$. By \Cref{ass:maximal-support-length}, $|I|<n$. Since $X_I\in\R^{n\times |I|}$ has independent Gaussian columns, it has full column rank almost surely. As there are finitely many subsets of $\{1,\ldots,d\}$, this property holds simultaneously for every $I$ with $|I|<n$.

The restriction of $L$ to the coordinate subspace supported on $I$ has Hessian $X_I^\top X_I\succ0$. Hence it is strictly convex and coercive on this subspace, and therefore also strictly convex on the closed convex face $F(q)$. The constrained minimizer exists and is unique.
\end{proof}

\subsection{Convergence to the saddle-to-saddle dynamics}

We now justify the reduction of the continuous mirror flow to the saddle-to-saddle dynamics. The proof follows the convergence argument of \citet[Theorem~2 and Appendix~E]{pesme2023saddle}. The main difference is that the limiting dual process starts from the non-zero state $q^{(0)}$ induced by $\beta^0$.

\begin{theorem}[Convergence to the saddle-to-saddle dynamics]
\label{thm:s2s-convergence}
Let $\beta^\mu$ solve the mirror flow and set $\lambda_\mu=\frac12\log(1/\mu)$ and $q^\mu(\tau)=\nabla\phi_\mu(\beta^\mu(\lambda_\mu\tau))/\lambda_\mu$.
Assume \Cref{ass:generalposition,ass:maximal-support-length}, and let $(\beta^{(k)},q^{(k)},\tau_k)_k$ be the sequence generated by \Cref{alg:saddle-to-saddle}. Define the associated piecewise-constant trajectory by $\beta^\circ(\tau)=\beta^{(k)}$ for $\tau\in(\tau_k,\tau_{k+1})$.

Then, almost surely with respect to the Gaussian design, for every compact set $\mathcal K\subset(0,\infty)\setminus\{\tau_1,\tau_2,\ldots\}$,
\[
\sup_{\tau\in\mathcal K}
\left\|
\beta^\mu(\lambda_\mu\tau)-\beta^\circ(\tau)
\right\|_2
\longrightarrow 0
\qquad\text{as }\mu\downarrow0.
\]
\end{theorem}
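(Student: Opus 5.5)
The plan follows the architecture of \citet[Theorem~2, Appendix~E]{pesme2023saddle}, working on the accelerated dual variable $q^\mu(\tau)=\nabla\phi_\mu(\beta^\mu(\lambda_\mu\tau))/\lambda_\mu$. The identity obtained by integrating \Cref{eq:mirror} gives
\[
q^\mu(\tau)=q^\mu(0)-\int_0^\tau\nabla L(\beta^\mu(\lambda_\mu s))\,\df s .
\]
The argument then proceeds in three steps.

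First, I would establish the initial condition and compactness.
\begin{itemize}
\item From the expansion in the proof of \Cref{prop:implicit-bias}, $q^\mu_i(0)=\phi'_{\mu,i}(\beta^0_i)/\lambda_\mu\to\operatorname{sign}(\beta_i^0)$ on $S_{\rm init}$ and $q^\mu_i(0)=0$ off it. Hence $q^\mu(0)\to q^0$.
\item Along the flow, $L(\beta^\mu(t))$ is nonincreasing, since it is a gradient flow of $F$. So $\nabla L(\beta^\mu)$ is bounded uniformly in $\mu$ on bounded $\tau$-intervals, provided $\beta^\mu$ itself stays bounded. I would obtain this boundedness from the Bregman/Lyapunov estimate $D_{\phi_\mu}(\beta^\dagger,\beta^\mu(t))\le D_{\phi_\mu}(\beta^\dagger,\beta^0)$ for an interpolator or least-squares point $\beta^\dagger$, together with the $\lambda_\mu$-scaling of $\phi_\mu$.
\item It follows that $q^\mu$ is uniformly Lipschitz in $\tau$. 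By Arzelà–Ascoli, along subsequences $q^\mu\to\bar q$ uniformly on compacts.
\item Since $(\phi'_{\mu})^{-1}(\lambda_\mu x)$ tends to $0$ for $|x|<1$ and blows up for $|x|>1$, any limit satisfies $\bar q\in[-1,1]^d$. Moreover, $\beta^\mu_i\to0$ wherever $|\bar q_i|<1$.
\end{itemize}

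Second, I would identify the limit on each interval $(\tau_k,\tau_{k+1})$ by induction on $k$.
\begin{itemize}
\item The inductive hypothesis is that $q^\mu(\tau_k)\to q^{(k)}$. Suppose, in addition, that $\beta^\mu(\lambda_\mu\tau)$ stays close to $\beta^{(k)}$ on the interval.
\item Then $\bar q(\tau)=q^{(k)}-(\tau-\tau_k)\nabla L(\beta^{(k)})$, which is exactly the linear evolution of \Cref{alg:saddle-to-saddle}.
\item The converse, that $\beta^\mu$ must approach $\beta^{(k)}$, uses the optimality characterization from \Cref{lem:constrained-saddles}. On the face $F(q^{(k)})$, $\beta^{(k)}$ is the unique minimizer. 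Its KKT conditions say that $\nabla L(\beta^{(k)})$ vanishes on the support and has the sign pushing $q$ inward on the saturated inactive coordinates. This is precisely what keeps $\bar q$ in $[-1,1]^d$ and the face unchanged until $\tau_{k+1}$.
\item To show convergence of $\beta^\mu$, I would argue as follows. Any limit point $\bar\beta(\tau)$ is supported on $I(\bar q)$ with signs compatible with $\bar q$, so $\bar\beta\in F(q^{(k)})$. Integrating the dual identity over short windows forces the time-averaged gradient on coordinates with $|\bar q_i|=1$ to be consistent with the face constraints. Following Pesme–Flammarion, the loss $L(\beta^\mu)$ converges to $\min_{F(q^{(k)})}L$, and strict convexity on the face, supplied by \Cref{ass:maximal-support-length} and \Cref{lem:constrained-saddles}, upgrades this to $\beta^\mu\to\beta^{(k)}$ away from the jump times.
\end{itemize}

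Third, I would handle the transition at $\tau_{k+1}$. \Cref{ass:generalposition} ensures that only one coordinate hits the boundary at a time. Continuity of $q^\mu$ then transfers the induction hypothesis to $\tau_{k+1}$. Convergence is uniform only on compacts avoiding the $\tau_k$, because $\beta^\mu$ moves quickly near these times. There are finitely many saddles before any fixed horizon, since $L$ strictly decreases across distinct saddles and there are finitely many faces. Uniqueness of the limit then removes the need for subsequences.

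The main obstacle is the step replacing Pesme–Flammarion's span argument. Their proof uses $q\in\operatorname{span}(X^\top)$ to get linear independence of the active columns and nondegeneracy of the transitions, and this fails here because $q^0\notin\operatorname{span}(X^\top)$. For uniqueness I will rely on \Cref{ass:maximal-support-length}. The delicate point is controlling inherited coordinates that start exactly at $|q_i|=1$ with $\beta_i^0\ne0$ and may be deactivated by the first refit. I would treat $\tau=0$ separately, showing that $\beta^\mu(\lambda_\mu\tau)$ relaxes to $\beta(q^0)$ in time $o(1)$ in the $\tau$ scale. The mechanism is that the unaccelerated flow on the initial face converges on an $O(1)$ time scale, which is negligible after dividing by $\lambda_\mu$.
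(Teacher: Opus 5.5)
Your route differs from the paper's. The paper runs the arc-length reparametrization $a_\mu(\tau)=\tau+\int_0^\tau\|\dot{\widetilde\beta}{}^\mu\|_2$ and Arzel\`a--Ascoli on the primal path, importing the uniform path-length bound from \citet{pesme2023saddle}. You instead take compactness only in the dual $q^\mu$ and recover $\beta$ from loss convergence plus strict convexity on faces. That route is viable, and lighter, because the statement only concerns compacts avoiding the jump times. Note also that $q^\mu$ is uniformly Lipschitz without any bound on $\beta$, since $\|\nabla L(\beta)\|_2\le\|\mathbf X\|_{\rm op}\sqrt{2L(\beta^0)}$ along the flow.

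\textbf{Gap 1: the uniform bound on $\widetilde\beta^\mu$.} You need a $\mu$-uniform bound on $\widetilde\beta^\mu(\tau)=\beta^\mu(\lambda_\mu\tau)$ to extract limit points, to get $\bar q\in[-1,1]^d$, and to control the $O(1/\lambda_\mu)$ errors. Your Bregman argument does not give it. Since $D_{\phi_\mu}(a,b)=\phi_\mu(a)+\frac12\sum_i\sqrt{b_i^2+\mu^2}-\lambda_\mu\langle q,a\rangle$ with $q=\nabla\phi_\mu(b)/\lambda_\mu$, the inequality $D_{\phi_\mu}(\beta^\dagger,\widetilde\beta^\mu)\le D_{\phi_\mu}(\beta^\dagger,\beta^0)$ yields, even when $\|q^\mu\|_\infty\le1$, only
\[
\tfrac12\|\widetilde\beta^\mu(\tau)\|_1\le\lambda_\mu R_{\beta^0}(\beta^\dagger)+O(1).
\]
This is a bound of order $\log(1/\mu)$: the coercive part is $O(1)$ while the reference cost is $O(\lambda_\mu)$. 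Having $R_{\beta^0}(\beta^\dagger)=0$ would require a minimizer of $L$ supported in $S_{\rm init}$ with the inherited signs, which is not available in general.

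A bound that does work starts from $q^\mu(\tau)-q^\mu(0)=\mathbf X^\top\pi^\mu(\tau)$, where $\pi^\mu(\tau)=\int_0^\tau(\mathbf y-\mathbf X\widetilde\beta^\mu)$ and $\|\pi^\mu(\tau)\|_2\le\tau\sqrt{2L(\beta^0)}$. Hence $\sum_i(q^\mu_i(\tau)-q^\mu_i(0))\widetilde\beta^\mu_i(\tau)=\langle\pi^\mu(\tau),\mathbf X\widetilde\beta^\mu(\tau)\rangle$ is bounded on $[0,T]$. By monotonicity of $\operatorname{arcsinh}$, each summand is at least $-2|\beta^0_i|$. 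For $i\notin S_{\rm init}$, or $i$ with a flipped inherited sign, the summand is at least $\frac12|\widetilde\beta^\mu_i|-\sqrt\mu$, so these coordinates are $O(1+T)$. The same-signed inherited block is then bounded because $\mathbf X\widetilde\beta^\mu$ is bounded and $\mathbf X_{S_{\rm init}}$ is injective a.s., since $|S_{\rm init}|=|I(q^0)|<n$ by \Cref{ass:maximal-support-length}.

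\textbf{Gap 2: identifying the limit on each phase.} The step ``$L(\widetilde\beta^\mu)\to\min_{F(q^{(k)})}L$'' is only attributed to Pesme--Flammarion, whose identification runs through the arc-length limit you dropped. Integrating the dual identity over windows gives weak limits, not loss convergence. Your two inductive halves ($\widetilde\beta^\mu\approx\beta^{(k)}\Rightarrow\bar q$ linear, and conversely) also presuppose each other.

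The missing ingredient is the Lyapunov inequality with reference $\beta^{(k)}$:
\[
\frac{\mathrm d}{\mathrm d\tau}\frac{D_{\phi_\mu}(\beta^{(k)},\widetilde\beta^\mu(\tau))}{\lambda_\mu}
=\langle\nabla L(\widetilde\beta^\mu),\beta^{(k)}-\widetilde\beta^\mu\rangle
\le L(\beta^{(k)})-L(\widetilde\beta^\mu(\tau)).
\]
On bounded sets, $\lambda_\mu^{-1}D_{\phi_\mu}(\beta^{(k)},\beta)=\|\beta^{(k)}\|_1-\langle q,\beta^{(k)}\rangle+O(1/\lambda_\mu)$. This tends to $0$ at $\tau_k$ because $q^{(k)}\in\partial\|\beta^{(k)}\|_1$, and it is at least $-O(1/\lambda_\mu)$ afterwards. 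Integrating from $\tau_k$ and using that $\tau\mapsto L(\widetilde\beta^\mu(\tau))$ is nonincreasing gives $L(\widetilde\beta^\mu(\tau))\le L(\beta^{(k)})+o(1)/(\tau-\tau_k)$. This holds without knowing $\bar q$ on the phase.

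While $\bar q$ has not reached a new boundary, $\widetilde\beta^\mu$ lies asymptotically in $F(q^{(k)})$. First-order optimality of $\beta^{(k)}$ and injectivity of $\mathbf X_{I(q^{(k)})}$ then give $L(\widetilde\beta^\mu)\ge L(\beta^{(k)})+c\|\widetilde\beta^\mu-\beta^{(k)}\|_2^2-o(1)$. Hence $\widetilde\beta^\mu\to\beta^{(k)}$, $\bar q$ is linear, and a continuation argument reaches $\tau_{k+1}$.

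For $k=0$ this applies verbatim, because $\lambda_\mu^{-1}D_{\phi_\mu}(\beta^{(0)},\beta^0)\to R_{\beta^0}(\beta^{(0)})=0$. Your separate relaxation analysis of the unaccelerated flow near $\tau=0$ is therefore unnecessary. You also do not need single-coordinate hits, which the column general-position assumption does not obviously provide once $q^0\notin\operatorname{span}(\mathbf X^\top)$.
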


\begin{proof}
We first identify the initial dual state. For $i\in S_{\rm init}$, the expansion of $\partial_i\phi_\mu(\beta_i^0)$ gives $q_i^\mu(0)\to\operatorname{sign}(\beta_i^0)$, while $q_i^\mu(0)=0$ for $i\notin S_{\rm init}$. Hence $q^\mu(0)\to q^{(0)}$.

The accelerated mirror equation is
\[
q^\mu(\tau)
=
q^\mu(0)
-
\int_0^\tau
\nabla L\bigl(\beta^\mu(\lambda_\mu s)\bigr)\,\mathrm ds.
\]
We now use the arc-length compactness argument of \citet[Appendix~E]{pesme2023saddle}. Set $\widetilde\beta^\mu(\tau)=\beta^\mu(\lambda_\mu\tau)$ and define $a_\mu(\tau)=\tau+\int_0^\tau\|\dot{\widetilde\beta}^\mu(s)\| _2\,ds$.
Writing $t_\mu=a_\mu^{-1}$ and $\widehat\beta^\mu(r)=\widetilde\beta^\mu(t_\mu(r))$, one has $\dot t_\mu+\|\dot{\widehat\beta}^\mu\|_2=1$.

The uniform path-length bound and the Arzelà--Ascoli argument used in \citet[Proposition~6 and Proposition~8]{pesme2023saddle} therefore give, up to extraction, local uniform convergence $(t_\mu,\widehat\beta^\mu)\to(t,\widehat\beta)$. Passing to the limit in the equation above, the corresponding dual limit satisfies
\[
q(r)
=
q^{(0)}
-
\int_0^r
\dot t(s)\nabla L(\widehat\beta(s))\,\mathrm ds.
\tag{A.2}
\]
Moreover, the small-$\mu$ limit of the normalized mirror map gives
$q(r)\in\partial\|\widehat\beta(r)\|_1$, and therefore
$\widehat\beta(r)\in F(q(r))$.

We next identify the extracted limit. This follows the induction used in \citet[Theorem~3]{pesme2023saddle}. During a saddle phase the primal variable is constant. By \Cref{lem:constrained-saddles}, the constrained minimizer on the current signed face is unique, so this constant value is necessarily $\beta^{(k)}=\argmin_{\beta\in F(q^{(k)})}L(\beta)$. On the corresponding accelerated-time interval, the previous equation reduces to
\[
q(\tau)
=
q^{(k)}
-
(\tau-\tau_k)\nabla L(\beta^{(k)}).
\]
The phase ends exactly when an inactive dual coordinate first reaches one of the thresholds $\pm1$. The resulting hitting time and face update are therefore those of \Cref{alg:saddle-to-saddle}.

It follows inductively that every subsequential limit visits the same constrained saddles, with the same dual evolution and the same event times as the saddle-to-saddle trajectory $\beta^\circ$. Since the constrained saddle associated with every visited face is unique, the limiting process is unique. Consequently all convergent subsequences have the same limit.

Finally, mapping the arc-length parametrization back to accelerated time as in the proof of \citet[Theorem~2, Appendix~E.1]{pesme2023saddle} yields $\beta^\mu(\lambda_\mu\tau)\to\beta^\circ(\tau)$ uniformly on every compact set that does not contain an event time.
\end{proof}

\subsection{Noiseless recovery}

We here prove \Cref{thm:noiseless-recovery}. 
\begin{proof}
First notice that as we are in the noiseless setting, the feasible interpolating set is characterized as follows:
\begin{equation*}
    \left\{\beta\in\R^d \mid \mathbf{X}\beta = \mathbf{y}\right\} = \beta^\star + \ker(\mathbf{X}).
\end{equation*}
Thus, for any interpolating solution $\beta$, we can write it as $\beta=\beta^\star + h$ with $h\in\ker(\mathbf{X})$. For any such $h$, we then have:
\begin{multline*}
    R_{\beta^0}(\beta^\star+h) - R_{\beta^0}(\beta^\star) = \sum_{i\in S^\star\setminus S_{\rm init}} \left(|\beta_i^\star+h_i|-|\beta_i^\star|\right) + \sum_{i\in(S_{\rm init}\cup S^\star)^c}|h_i| + \\
    \sum_{i\in S^\star\cap S_{\rm init}} \left(|\beta_i^\star+h_i|-|\beta_i^\star|-\sign(\beta_i^0)h_i\right) + \sum_{i\in S_{\rm init}\setminus S^\star}\left(|h_i|-\sign(\beta_i^0)h_i\right).
\end{multline*}
The first sum is lower bounded by $-\|h_{S^\star\setminus S_{\rm init}}\|_1$, the second by $\|h_{(S_{\rm init}\cup S^\star)^c}\|_1$ and the fourth by $0$. For the third sum, we have the identity $$|\beta_i^\star + h_i| = \big| |\beta_i^\star| + \sign(\beta_i^\star)h_i\big| \geq |\beta_i^\star| + \sign(\beta_i^\star)h_i,$$
so that
\begin{align}
     R_{\beta^0}(\beta^\star+h) - R_{\beta^0}(\beta^\star) &\geq \|h_{(S_{\rm init}\cup S^\star)^c}\|_1-\|h_{S^\star\setminus S_{\rm init}}\|_1 - 2 \|h_{W_0}\|_1\notag\\
     &\geq \|h_{(S_{\rm init}\cup S^\star)^c}\|_1-2\|h_{R_0}\|_1 \label{eq:Rbetanoiseless}
\end{align}
where $W_0 = \left\{i\in S_{\rm init}\cap S^\star \mid \sign(\beta_i^0)\neq \sign(\beta_i^\star)\right\}$, and $R_0=W_0 \cup (S^\star\setminus S_{\rm init})$ as defined in \Cref{sec:early-stopped}.

From there, we can apply Theorem 5 of \citet{mansour2017recovery} with, following their notations, $T=S^\star\cup S_{\rm init}$, $\widetilde T=(S^\star\cup S_{\rm init})\setminus R_0$, $A=\mathbf{X}$, $w=0$ and $C=1/3$, so that if
\begin{equation*}
    n \gtrsim |S^\star \cup S_{\rm init}| + |R_0|\ln(d) + \ln(1/\delta),
\end{equation*}
then with probability at least $1-\delta$, uniformly over all $h\in\ker(\mathbf{X})$,
\begin{equation}\label{eq:NPprop}
    \|h_{R_0}\|_1 \leq \frac{1}{3} \|h_{( S_{\rm init}\cup S^\star)^c}\|_1.
\end{equation}
In the following, we assume that \Cref{eq:NPprop} holds, since our specified sample complexity matches the one above ($|S^\star \cup S_{\rm init}|=s+f_0$ and $|R_0|=m$). 
\Cref{eq:Rbetanoiseless} then implies that for any $h\in\ker(\mathbf{X})$,
\begin{equation}\label{eq:Rboundnoiseless}
    R_{\beta^0}(\beta^\star+h) \geq R_{\beta^0}(\beta^\star) + \frac{1}{3}\|h_{(S_{\rm init}\cup S^\star)^c}\|_1.
\end{equation}
Moreover, since $\mathbf{X}\in\R^{n\times d}$ has values drawn as i.i.d. standard Gaussian, its distribution is invariant by rotation. In consequence, $\ker(\mathbf{X})$ is a $d-n$ subspace\footnote{We here assume that $n<d$, since the case $n\geq d$ directly yields that, almost surely, $\ker(\mathbf{X})=\{\mathbf{0}\}$.} of $\R^d$ selected uniformly at random. Hence, for any fixed subspace of dimension at most $n$, its intersection with $\ker(\mathbf{X})$ is almost surely trivial. 
In particular, since $s+f_0\leq n$, it holds almost surely that
\begin{equation*}
    \ker(\mathbf{X}) \cap \left\{ h\in \R^d \mid h_{(S_{\rm init}\cup S^\star)^c}= \mathbf{0}\right\} = \{\mathbf{0}\}.
\end{equation*}
\Cref{eq:Rboundnoiseless} then implies that $R_{\beta^0}(\beta^\star+h)>R_{\beta^0}(\beta^\star)$ for any $h\in\ker(\mathbf{X})\setminus\{\mathbf{0}\}$, i.e., $\beta^\star$ is the unique minimizer of the considered problem.

The second point is a direct consequence of the first one and \Cref{prop:implicit-bias}, since by definition $\beta_\infty^\mu = \argmin_{\beta\in\mathbb R^d: \mathbf X \beta = \mathbf y} D_{\phi_\mu}(\beta,\beta^0)$. So in particular, any limit point of $\beta_\infty^\mu$ as $\mu\to 0$ minimizes, among the interpolating solutions, the dominating term of $D_{\phi_\mu}(\beta,\beta^0)$ as $\mu\to0$, which is exactly given by $R_{\beta_0}$.
\end{proof}

\section{Statistical Preliminaries for Saddle-to-Saddle Recovery}
\label{app:stat-prelim}

We collect here the Gaussian estimates used in the recovery analysis. All
results are first stated on a fixed contaminated face. Uniform control over
the possible faces of the trajectory is postponed to
\Cref{app:s2s-recovery}.

Throughout this section, let $T\coloneqq(S^\star)^c$. For
$H\subseteq S^\star$ and $B\subseteq F_0$, let $\beta^{H,B}$ denote the
constrained saddle whose active support is $H\cup B$, with the correct signs
on $H$ and the inherited signs on $B$. We write
$r^{H,B}\coloneqq\mathbf y-\mathbf X\beta^{H,B}$ and
$g^{H,B}\coloneqq\mathbf X^\top r^{H,B}$.

\subsection{Contaminated faces and residual representation}

For a pair $(H,B)$, define the orthogonal projector
\[
P_{H\cup B}^{\perp}
\coloneqq
I-
\mathbf X_{H\cup B}
\left(
    \mathbf X_{H\cup B}^{\top}\mathbf X_{H\cup B}
\right)^{-1}
\mathbf X_{H\cup B}^{\top},
\]
and let $\nu_{H,B}\coloneqq n-|H|-|B|$ denote its rank.

\begin{lemma}[Residual representation on a contaminated face]
\label{lem:contaminated-residual-representation}
Let $H\subseteq S^\star$ and $B\subseteq F_0$, and assume that the saddle
$\beta^{H,B}$ lies in the relative interior of its signed face. Then
$\mathbf X_{H\cup B}^{\top}r^{H,B}=0$ and
\begin{equation}
\label{eq:contaminated-residual-representation}
r^{H,B}
=
P_{H\cup B}^{\perp}
\left(
    \mathbf X_{S^\star\setminus H}
    \beta^\star_{S^\star\setminus H}
    +
    \frac{\sigma}{\sqrt n}\varepsilon
\right).
\end{equation}
In particular, $r^{H,B}$ is measurable with respect to
$\sigma(\mathbf X_{S^\star\cup F_0},\varepsilon)$ and is independent of the
columns $(\mathbf X_j)_{j\in T\setminus F_0}$.
\end{lemma}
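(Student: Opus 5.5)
The plan is to write the first-order optimality conditions of the constrained least-squares problem defining $\beta^{H,B}$. Then I will use the relative-interior assumption to turn the inequality constraints into equalities, and conclude with a direct computation using $\mathbf y=\mathbf X\beta^\star+\frac{\sigma}{\sqrt n}\varepsilon$.

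\textbf{Step 1 (stationarity on the active block).} The signed face associated with $(H,B)$ is the set of $\beta$ supported on $H\cup B$ whose entries carry the prescribed signs: the signs of $\beta^\star$ on $H$ and those of $\beta^0$ on $B$. Coordinates outside $H\cup B$ are forced to zero. Since $\beta^{H,B}$ lies in the relative interior, each of its coordinates in $H\cup B$ is nonzero with the prescribed sign. Consequently, the sign constraints are inactive in a neighbourhood of $\beta^{H,B}$ within the coordinate subspace $\mathbb R^{H\cup B}$.

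$\beta^{H,B}$ is therefore a local minimizer of the convex function $\beta_{H\cup B}\mapsto L$ on that open set. Hence its partial gradient vanishes: $\mathbf X_{H\cup B}^\top(\mathbf y-\mathbf X\beta^{H,B})=0$, i.e.\ $\mathbf X_{H\cup B}^\top r^{H,B}=0$.

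\textbf{Step 2 (projector form).} By \Cref{lem:constrained-saddles} (full column rank of $\mathbf X_{H\cup B}$, almost surely, since $|H\cup B|<n$), the normal equations give
$$\beta^{H,B}_{H\cup B}=(\mathbf X_{H\cup B}^\top\mathbf X_{H\cup B})^{-1}\mathbf X_{H\cup B}^\top\mathbf y,$$
so that $r^{H,B}=P^\perp_{H\cup B}\mathbf y$.

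Next I substitute $\mathbf y=\mathbf X_{S^\star}\beta^\star_{S^\star}+\frac{\sigma}{\sqrt n}\varepsilon$. I split $\mathbf X_{S^\star}\beta^\star_{S^\star}=\mathbf X_H\beta^\star_H+\mathbf X_{S^\star\setminus H}\beta^\star_{S^\star\setminus H}$. Because $H\subseteq H\cup B$, the projector annihilates the first term: $P^\perp_{H\cup B}\mathbf X_H=0$. This yields \eqref{eq:contaminated-residual-representation}.

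\textbf{Step 3 (measurability and independence).} The right-hand side of \eqref{eq:contaminated-residual-representation} is built from $\mathbf X_{H\cup B}$, $\mathbf X_{S^\star\setminus H}$ and $\varepsilon$. Since $H\cup B\cup(S^\star\setminus H)\subseteq S^\star\cup F_0$, it is a measurable function of $(\mathbf X_{S^\star\cup F_0},\varepsilon)$, taking values where the Gram inverse exists, i.e.\ almost surely.

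The columns indexed by $T\setminus F_0=(S^\star\cup F_0)^c$ are disjoint from $S^\star\cup F_0$. The columns of $\mathbf X$ are i.i.d.\ and independent of $\varepsilon$, so these columns are independent of $\sigma(\mathbf X_{S^\star\cup F_0},\varepsilon)$, and hence of $r^{H,B}$.

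\textbf{Main obstacle.} The only delicate point is Step 1. We must make sure that being in the relative interior really means that all coordinates of $H\cup B$ are strictly nonzero with the prescribed sign, so that no sign constraint contributes a multiplier. It also requires that the face is indexed exactly by $H\cup B$ (coordinates with $|q_i|=1$). I would state this explicitly and note that the representation is deliberately taken for the fixed, deterministic pair $(H,B)$. This is what makes the independence claim valid, with uniformity over visited faces handled later by a union bound, as announced.
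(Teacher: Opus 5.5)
Your proposal is correct and follows the paper's proof. The relative-interior assumption gives the active normal equations, and the decomposition $\mathbf y=\mathbf X_H\beta^\star_H+\mathbf X_{S^\star\setminus H}\beta^\star_{S^\star\setminus H}+\frac{\sigma}{\sqrt n}\varepsilon$ (using $\beta^\star_B=0$) gives the projected form. Independence of the Gaussian columns then gives the measurability and independence claims. The one cosmetic difference is that you go through the explicit OLS formula and full column rank, whereas the paper reads off $r^{H,B}=P^\perp_{H\cup B}\mathbf y$ directly from the orthogonality $\mathbf X_{H\cup B}^\top r^{H,B}=0$, which does not need invertibility.
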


\begin{proof}
The relative-interior assumption gives the active normal equations
$\mathbf X_{H\cup B}^{\top}r^{H,B}=0$. Since $B\subseteq T$,
$\beta_B^\star=0$, while
\[
\mathbf y
=
\mathbf X_H\beta_H^\star
+
\mathbf X_{S^\star\setminus H}
\beta_{S^\star\setminus H}^\star
+
\frac{\sigma}{\sqrt n}\varepsilon.
\]
The residual is therefore the orthogonal projection of the last two terms
onto $\operatorname{span}(\mathbf X_{H\cup B})^\perp$, which gives
\Cref{eq:contaminated-residual-representation}. The measurability and
independence statements follow from the independence of the Gaussian
columns.
\end{proof}

\begin{lemma}[Residual norm on a contaminated face]
\label{lem:contaminated-residual-norm}
Under the assumptions of
\Cref{lem:contaminated-residual-representation}, there exists a universal
constant $C>0$ such that, for every $\eta\in(0,1)$, with probability at least
$1-\eta$,
\[
\|r^{H,B}\|_2
\le
C
\sqrt{\frac{\nu_{H,B}}{n}}
\left(
    \|\beta^\star_{S^\star\setminus H}\|_2+\sigma
\right),
\]
provided $\nu_{H,B}\ge C\log(2/\eta)$.
\end{lemma}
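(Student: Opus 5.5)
Starting from the representation of \Cref{lem:contaminated-residual-representation}, we write $r^{H,B}=P_{H\cup B}^\perp z$ with
\[
z\coloneqq \mathbf X_{S^\star\setminus H}\beta^\star_{S^\star\setminus H}+\frac{\sigma}{\sqrt n}\varepsilon .
\]
The plan is to show that $z$ is, marginally, an isotropic Gaussian vector in $\mathbb R^n$, and that projecting it onto a $\nu_{H,B}$-dimensional subspace keeps a $\nu_{H,B}/n$ fraction of its squared norm. A chi-square tail bound then finishes the argument.

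\textbf{Step 1 (conditioning).} The columns $\mathbf X_{S^\star\setminus H}$ and the noise $\varepsilon$ are independent of $\mathbf X_{H\cup B}$, because $H\cup B$ is disjoint from $S^\star\setminus H$. Since $\mathbf X_j\sim\mathcal N(0,\frac1n I_n)$, the vector $z$ is independent of $\mathbf X_{H\cup B}$ and satisfies
\[
z\sim\mathcal N\!\left(0,\ \frac{\|\beta^\star_{S^\star\setminus H}\|_2^2+\sigma^2}{n}\,I_n\right).
\]
Set $\kappa^2\coloneqq \|\beta^\star_{S^\star\setminus H}\|_2^2+\sigma^2$. We condition on $\mathbf X_{H\cup B}$. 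Almost surely this matrix has full column rank, since $|H\cup B|<n$, as in \Cref{lem:constrained-saddles}. Hence $P_{H\cup B}^\perp$ is a fixed orthogonal projector of rank $\nu_{H,B}$.

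\textbf{Step 2 (chi-square reduction).} By rotation invariance of the isotropic Gaussian, conditionally on $\mathbf X_{H\cup B}$ we have
\[
\|P_{H\cup B}^\perp z\|_2^2 \overset{d}{=} \frac{\kappa^2}{n}\,\chi^2_{\nu_{H,B}} .
\]
This conditional law does not depend on $\mathbf X_{H\cup B}$, so the same identity holds unconditionally.

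\textbf{Step 3 (tail bound).} The Laurent--Massart inequality gives, with probability at least $1-\eta$,
\[
\chi^2_\nu\le \nu+2\sqrt{\nu\log(1/\eta)}+2\log(1/\eta).
\]
Under $\nu\ge C\log(2/\eta)$, the right-hand side is at most $C'\nu$. Therefore
\[
\|r^{H,B}\|_2\le \sqrt{C'}\,\sqrt{\frac{\nu_{H,B}}{n}}\;\kappa \le \sqrt{C'}\,\sqrt{\frac{\nu_{H,B}}{n}}\left(\|\beta^\star_{S^\star\setminus H}\|_2+\sigma\right),
\]
where the last step uses $\kappa\le \|\beta^\star_{S^\star\setminus H}\|_2+\sigma$.

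\textbf{Main obstacle.} The one delicate point is the role of the relative-interior hypothesis. The residual representation in \Cref{lem:contaminated-residual-representation} is only valid under that condition, and this condition depends on the data. To avoid this issue, the bound is stated and proved for the deterministic, data-independent quantity $P_{H\cup B}^\perp z$ attached to the fixed pair $(H,B)$. The interior condition is used only afterwards, to identify this quantity with $r^{H,B}$. Because the event is defined through $P_{H\cup B}^\perp z$, the identification does not affect the probability computation.
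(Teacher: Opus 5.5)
Your proof is correct and follows the paper's argument. You condition on $\mathbf X_{H\cup B}$, note that the projected vector is centered Gaussian with covariance $\frac{\|\beta^\star_{S^\star\setminus H}\|_2^2+\sigma^2}{n}P_{H\cup B}^{\perp}$, reduce to a $\chi^2_{\nu_{H,B}}$ variable, and apply an upper-tail bound, making the paper's ``Gaussian norm concentration'' explicit via Laurent--Massart and $\sqrt{a^2+b^2}\le a+b$; your point that the bound should be proved for the data-independent vector $P_{H\cup B}^{\perp}z$ and only then identified with $r^{H,B}$ on the relative-interior event is a useful clarification that the paper leaves implicit.
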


\begin{proof}
Conditionally on $\mathbf X_{H\cup B}$, the vector in
\Cref{eq:contaminated-residual-representation} is centered Gaussian with
covariance
\[
\frac{
    \|\beta^\star_{S^\star\setminus H}\|_2^2+\sigma^2
}{n}
P_{H\cup B}^{\perp}.
\]
Its norm is therefore a multiple of a chi-square norm in dimension
$\nu_{H,B}$. Gaussian norm concentration gives the stated bound.
\end{proof}

\subsection{Gradient bounds on a fixed contaminated face}

We first record the elementary conditional Gaussian estimate used to control
coordinates independent of a given residual.

\begin{lemma}[Conditional Gaussian maximum]
\label{lem:conditional-gaussian-bound}
Let $\mathcal F$ be a sigma-field, let $v\in\mathbb R^n$ be
$\mathcal F$-measurable, and let $J\subseteq[d]$ be nonempty. Assume that,
conditionally on $\mathcal F$, the columns $(\mathbf X_j)_{j\in J}$ are
independent with distribution
$\mathcal N(0,I_n/n)$. Then, for every $\eta\in(0,1)$, almost surely,
\[
\mathbb P\left(
    \left.
    \max_{j\in J}|\mathbf X_j^\top v|
    >
    \frac{\|v\|_2}{\sqrt n}
    \sqrt{2\log\left(\frac{2|J|}{\eta}\right)}
    \,\right|\,
    \mathcal F
\right)
\le \eta.
\]
Consequently, with probability at least $1-\eta$,
\[
\max_{j\in J}|\mathbf X_j^\top v|
\le
\frac{\|v\|_2}{\sqrt n}
\sqrt{2\log\left(\frac{2|J|}{\eta}\right)}.
\]
\end{lemma}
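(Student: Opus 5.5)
The plan is to condition on $\mathcal F$, reduce each coordinate to a one-dimensional Gaussian tail bound, combine the coordinates by a union bound, and then remove the conditioning by taking expectations. On the event $\{v=0\}$ the statement is trivial, since $\mathbf X_j^\top v=0$ for every $j$ and the strict inequality in the conditional probability cannot hold. We can therefore work on $\{v\neq 0\}$.

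Fix $j\in J$. Conditionally on $\mathcal F$, the vector $v$ is deterministic and $\mathbf X_j\sim\mathcal N(0,I_n/n)$. Hence $\mathbf X_j^\top v$ is conditionally distributed as $\mathcal N(0,\|v\|_2^2/n)$, and the normalized quantity $Z_j\coloneqq \sqrt n\,\mathbf X_j^\top v/\|v\|_2$ is conditionally standard Gaussian. To make this rigorous, I will use a regular conditional distribution of $(\mathbf X_j)_{j\in J}$ given $\mathcal F$. By hypothesis this conditional law is the product of $\mathcal N(0,I_n/n)$ factors, so the computation can be carried out pointwise in the conditioning variable, with $v$ frozen.

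Next I apply the standard Gaussian tail bound $\mathbb P(|Z|>t)\le 2e^{-t^2/2}$, taking $t=\sqrt{2\log(2|J|/\eta)}$. This gives a conditional probability of at most $2e^{-\log(2|J|/\eta)}=\eta/|J|$ for each $j$. A union bound over the $|J|$ coordinates then yields conditional probability at most $\eta$ for the event that the maximum exceeds $\frac{\|v\|_2}{\sqrt n}t$, almost surely. Independence among the columns is not actually needed here; only the conditional marginals are used.

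Finally, the unconditional statement follows by taking expectations of the conditional probability: by the tower property, $\mathbb P(\cdot)=\mathbb E[\mathbb P(\cdot\mid\mathcal F)]\le\eta$. The only delicate step is the measure-theoretic one, namely justifying that conditioning freezes $v$ while the columns keep their Gaussian law. This is handled by the regular conditional distribution, or equivalently by the freezing lemma applied to the measurable function $(x,v)\mapsto \mathbf 1\{\max_j|x_j^\top v|>\|v\|_2 t/\sqrt n\}$.
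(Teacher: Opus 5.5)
Your proposal is correct and follows essentially the same route as the paper's proof: freeze $v$ by conditioning on $\mathcal F$, use the conditional law $\mathcal N(0,\|v\|_2^2/n)$ of each $\mathbf X_j^\top v$, apply the Gaussian tail bound with a union bound over $J$ at $t=\frac{\|v\|_2}{\sqrt n}\sqrt{2\log(2|J|/\eta)}$, and remove the conditioning by taking expectations. Your added remarks, namely that the case $v=0$ is trivial, that the freezing lemma or a regular conditional distribution makes the conditioning rigorous, and that independence across columns is not needed, are accurate refinements rather than departures from the paper's argument.
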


\begin{proof}
Conditionally on $\mathcal F$, the vector $v$ is fixed and, for every
$j\in J$,
\[
\mathbf X_j^\top v\mid\mathcal F
\sim
\mathcal N\left(0,\frac{\|v\|_2^2}{n}\right).
\]
Hence, for every $t>0$,
\[
\mathbb P\left(
    \left.
    |\mathbf X_j^\top v|>t
    \,\right|\,
    \mathcal F
\right)
\le
2\exp\left(
    -\frac{nt^2}{2\|v\|_2^2}
\right),
\]
with the result being immediate when $v=0$. A conditional union bound over
$j\in J$ gives
\[
\mathbb P\left(
    \left.
    \max_{j\in J}|\mathbf X_j^\top v|>t
    \,\right|\,
    \mathcal F
\right)
\le
2|J|
\exp\left(
    -\frac{nt^2}{2\|v\|_2^2}
\right).
\]
Taking
$t=\frac{\|v\|_2}{\sqrt n}
\sqrt{2\log(2|J|/\eta)}$
proves the conditional bound. Taking expectations removes the conditioning.
\end{proof}

The next estimate controls the gradient of one true coordinate which is not
yet active on the current face.

\begin{lemma}[One-coordinate true-gradient lower bound]
\label{lem:one-coordinate-true-gradient}
Under the assumptions of
\Cref{lem:contaminated-residual-representation}, fix
$i\in S^\star\setminus H$. There exists a universal constant $C>0$ such
that, for every $\eta\in(0,1)$, with probability at least $1-\eta$,
\[
\begin{aligned}
\operatorname{sign}(\beta_i^\star)g_i^{H,B}
\ge{}&
\frac{\nu_{H,B}}{2n}|\beta_i^\star|
\\
&-
C\frac{\sqrt{\nu_{H,B}}}{n}
\left(
    \|\beta^\star_{S^\star\setminus(H\cup\{i\})}\|_2
    +\sigma
\right)
\sqrt{\log\frac{6}{\eta}},
\end{aligned}
\]
provided $\nu_{H,B}\ge C\log(6/\eta)$.
\end{lemma}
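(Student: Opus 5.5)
We start from the residual representation of \Cref{lem:contaminated-residual-representation} and isolate the contribution of column $i$. Write $K\coloneqq S^\star\setminus(H\cup\{i\})$ and
\[
w\coloneqq \mathbf X_{K}\beta^\star_{K}+\frac{\sigma}{\sqrt n}\varepsilon ,
\]
so that $r^{H,B}=P^\perp_{H\cup B}\bigl(\mathbf X_i\beta_i^\star+w\bigr)$. Since $P^\perp_{H\cup B}$ is symmetric and idempotent, this gives the decomposition
\[
\operatorname{sign}(\beta_i^\star)\,g_i^{H,B}
=|\beta_i^\star|\,\|P^\perp_{H\cup B}\mathbf X_i\|_2^2+\operatorname{sign}(\beta_i^\star)\,\mathbf X_i^\top P^\perp_{H\cup B}w .
\]
We then lower bound the first (signal) term and bound the second (cross) term in absolute value. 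Each of the three events below gets probability at least $1-\eta/3$, which explains the $\log(6/\eta)$.

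\textbf{Signal term.} We condition on $\mathbf X_{H\cup B}$. Since $i\notin H\cup B$, the column $\mathbf X_i$ is independent of this conditioning and still distributed as $\mathcal N(0,I_n/n)$. Hence $n\|P^\perp_{H\cup B}\mathbf X_i\|_2^2\sim\chi^2_{\nu_{H,B}}$. The Laurent--Massart lower tail gives
\[
\|P^\perp\mathbf X_i\|_2^2\ge \frac{\nu-2\sqrt{\nu\log(3/\eta)}}{n}\ge \frac{\nu}{2n}
\]
with probability $1-\eta/3$, once $\nu\ge C\log(6/\eta)$ for $C$ large enough (for instance $C\ge 16$).

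\textbf{Cross term.} The difficulty is that $\mathbf X_i$ is not independent of $P^\perp w$ after conditioning on $\mathbf X_{H\cup B}$ alone. It is, however, independent of $\mathbf X_K$, of $\varepsilon$ and of $\mathbf X_{H\cup B}$, since $i\notin K\cup H\cup B$. So we take $\mathcal F=\sigma(\mathbf X_{H\cup B},\mathbf X_K,\varepsilon)$. The vector $v\coloneqq P^\perp_{H\cup B}w$ is $\mathcal F$-measurable, and $\mathbf X_i$ remains $\mathcal N(0,I_n/n)$ conditionally on $\mathcal F$. Applying \Cref{lem:conditional-gaussian-bound} with $J=\{i\}$ gives
\[
|\mathbf X_i^\top v|\le \frac{\|v\|_2}{\sqrt n}\sqrt{2\log(6/\eta)}
\]
with probability $1-\eta/3$.

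It remains to bound $\|v\|_2$. Conditionally on $\mathbf X_{H\cup B}$, the vector $w$ is centered Gaussian with covariance $\frac{\|\beta^\star_K\|_2^2+\sigma^2}{n}I_n$. Therefore $\|v\|_2^2$ is $\frac{\|\beta_K^\star\|_2^2+\sigma^2}{n}$ times a $\chi^2_{\nu}$ variable. This is exactly the argument of \Cref{lem:contaminated-residual-norm}, with $S^\star\setminus H$ replaced by $K$. The upper chi-square tail, together with $\nu\ge C\log(6/\eta)$, gives
\[
\|v\|_2\le C\sqrt{\nu/n}\,\bigl(\|\beta_K^\star\|_2+\sigma\bigr),
\]
using $\sqrt{a^2+b^2}\le a+b$. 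Combining the two bounds, the cross term is at most $C\frac{\sqrt{\nu}}{n}(\|\beta_K^\star\|_2+\sigma)\sqrt{\log(6/\eta)}$ in absolute value.

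\textbf{Conclusion.} A union bound over the three events and subtracting the cross-term bound from the signal bound yields the statement.

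The only delicate point is the choice of conditioning $\sigma$-field in the cross-term step. Conditioning only on $\mathbf X_{H\cup B}$ would not make $v$ measurable, whereas conditioning on the full residual $r^{H,B}$ would destroy the independence of $\mathbf X_i$. The $\sigma$-field $\mathcal F=\sigma(\mathbf X_{H\cup B},\mathbf X_K,\varepsilon)$ is the one that removes $\mathbf X_i$ from $v$ while keeping $\mathbf X_i$ independent of it. The relative-interior assumption on $\beta^{H,B}$ is used only to invoke the projection representation of the residual.
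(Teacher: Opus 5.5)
Your proof is correct and follows the paper's argument essentially step for step. You use the same decomposition of $\operatorname{sign}(\beta_i^\star)g_i^{H,B}$ into a $\chi^2_{\nu_{H,B}}$ signal term, handled by the Laurent--Massart lower tail, and a cross term, controlled by the partial-residual norm together with a conditional Gaussian tail, followed by a union bound over three events; your explicit conditioning $\sigma$-field $\sigma(\mathbf X_{H\cup B},\mathbf X_K,\varepsilon)$ simply makes precise what the paper phrases as conditioning on the partial residual and on $\mathbf X_{H\cup B}$.
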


\begin{proof}
Separating the contribution of $i$ in
\Cref{eq:contaminated-residual-representation} gives
\[
\begin{aligned}
\operatorname{sign}(\beta_i^\star)g_i^{H,B}
={}&
|\beta_i^\star|
\mathbf X_i^\top
P_{H\cup B}^{\perp}
\mathbf X_i
\\
&+
\operatorname{sign}(\beta_i^\star)
\mathbf X_i^\top
P_{H\cup B}^{\perp}
\left(
    \mathbf X_{S^\star\setminus(H\cup\{i\})}
    \beta^\star_{S^\star\setminus(H\cup\{i\})}
    +
    \frac{\sigma}{\sqrt n}\varepsilon
\right).
\end{aligned}
\]
We control the two terms separately. Conditionally on
$\mathbf X_{H\cup B}$,
\[
n\mathbf X_i^\top P_{H\cup B}^{\perp}\mathbf X_i
\sim
\chi^2_{\nu_{H,B}}.
\]
The lower-tail inequality of
\citet[Lemma~1]{laurentmassart2000adaptive} gives
\[
\mathbb P\left(
    \chi^2_{\nu_{H,B}}
    \le
    \nu_{H,B}-2\sqrt{\nu_{H,B}t}
\right)
\le e^{-t}.
\]
Taking $t=\log(6/\eta)$ shows that the quadratic term is at least
$\nu_{H,B}/(2n)$ under the displayed lower bound on $\nu_{H,B}$.

For the second term, apply
\Cref{lem:contaminated-residual-norm} to the partial residual obtained after
removing the contribution of $i$. Conditionally on this partial residual and
on $\mathbf X_{H\cup B}$, its scalar product with $\mathbf X_i$ is centered
Gaussian with variance equal to the squared partial-residual norm divided by
$n$. A Gaussian tail bound gives the second term in the claimed inequality.
A union bound over the chi-square event, the partial-residual norm event and
the scalar-product event concludes the proof.
\end{proof}

We finally control all null coordinates that were not inherited from the
initialization.

\begin{lemma}[Null gradients on a contaminated face]
\label{lem:fixed-face-null-gradients}
Under the assumptions of
\Cref{lem:contaminated-residual-representation}, there exists a universal
constant $C>0$ such that, for every $\eta\in(0,1)$, with probability at least
$1-\eta$,
\[
\max_{j\in T\setminus F_0}|g_j^{H,B}|
\le
C
\frac{\sqrt{\nu_{H,B}}}{n}
\left(
    \|\beta^\star_{S^\star\setminus H}\|_2+\sigma
\right)
\sqrt{\log\frac{2d}{\eta}},
\]
provided $\nu_{H,B}\ge C\log(2/\eta)$.
\end{lemma}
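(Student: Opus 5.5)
The plan is to combine the residual representation of \Cref{lem:contaminated-residual-representation} with the conditional Gaussian maximum of \Cref{lem:conditional-gaussian-bound}, using the residual norm bound of \Cref{lem:contaminated-residual-norm}. For every $j\in T\setminus F_0$ we have $g_j^{H,B}=\mathbf X_j^\top r^{H,B}$. By \Cref{lem:contaminated-residual-representation}, $r^{H,B}$ is measurable with respect to $\mathcal F\coloneqq\sigma(\mathbf X_{S^\star\cup F_0},\varepsilon)$. The columns $(\mathbf X_j)_{j\in T\setminus F_0}$ are independent of $\mathcal F$ and i.i.d.\ $\mathcal N(0,I_n/n)$. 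Conditionally on $\mathcal F$, therefore, each $g_j^{H,B}$ is centered Gaussian with variance $\|r^{H,B}\|_2^2/n$.

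The argument then proceeds in two steps, splitting the failure probability as $\eta/2+\eta/2$. First, apply \Cref{lem:conditional-gaussian-bound} with $v=r^{H,B}$, $J=T\setminus F_0$ and confidence $\eta/2$. With probability at least $1-\eta/2$ this gives
\[
\max_{j\in T\setminus F_0}|g_j^{H,B}|\le \frac{\|r^{H,B}\|_2}{\sqrt n}\sqrt{2\log\frac{4|J|}{\eta}}.
\]
Second, apply \Cref{lem:contaminated-residual-norm} with confidence $\eta/2$. This bounds $\|r^{H,B}\|_2\le C\sqrt{\nu_{H,B}/n}\,(\|\beta^\star_{S^\star\setminus H}\|_2+\sigma)$, provided $\nu_{H,B}\ge C\log(4/\eta)$.

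A union bound over the two events, multiplied out, gives the factor $\sqrt{\nu_{H,B}}/n$. Since $|J|\le d$, we have $\log(4|J|/\eta)\le\log(4d/\eta)\le 2\log(2d/\eta)$ when $d\ge2$, and $\log(4/\eta)\le 2\log(2/\eta)$. Both losses are absorbed by enlarging the universal constant $C$. If $J$ is empty the statement is vacuous.

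The only delicate point is the independence step. We must check that the residual does not depend on the null columns outside $F_0$. This holds because the face $(H,B)$ is fixed and the active set $H\cup B\subseteq S^\star\cup F_0$. Consequently $P^\perp_{H\cup B}$ and the projected signal involve only $\mathbf X_{S^\star\cup F_0}$ and $\varepsilon$. The relative-interior assumption, inherited from \Cref{lem:contaminated-residual-representation}, is what guarantees this projection form. Uniformity over faces is not needed here, since it is deferred to \Cref{app:s2s-recovery}.
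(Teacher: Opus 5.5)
Your proof is correct and follows the paper's argument: condition on $\sigma(\mathbf X_{S^\star\cup F_0},\varepsilon)$, apply the conditional Gaussian maximum with $v=r^{H,B}$ and $J=T\setminus F_0$, then insert the residual-norm bound and use $|T\setminus F_0|\le d$. Your explicit $\eta/2+\eta/2$ split and the logarithm comparisons only spell out the constant bookkeeping that the paper leaves implicit.
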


\begin{proof}
By \Cref{eq:contaminated-residual-representation}, $r^{H,B}$ is measurable
with respect to $\sigma(\mathbf X_{S^\star\cup F_0},\varepsilon)$.
Conditionally on this sigma-field, the columns
$(\mathbf X_j)_{j\in T\setminus F_0}$ remain independent Gaussian columns.
Applying \Cref{lem:conditional-gaussian-bound} with $v=r^{H,B}$ and
$J=T\setminus F_0$, and then using
\Cref{lem:contaminated-residual-norm} together with
$|T\setminus F_0|\le d$, gives the result.
\end{proof}

\section{Proof of Recovery under Ideal Stopping}
\label{app:s2s-recovery}

We now prove the recovery result of \Cref{thm:oracle-recovery}. We first
introduce the decomposition of the initialization used throughout the proof.

Let
\[
G_0
\coloneqq
\left\{
i\in S_{\rm init}\cap S^\star:
\operatorname{sign}(\beta_i^0)
=
\operatorname{sign}(\beta_i^\star)
\right\},
\]
and let
\[
W_0
\coloneqq
\left\{
i\in S_{\rm init}\cap S^\star:
\operatorname{sign}(\beta_i^0)
\neq
\operatorname{sign}(\beta_i^\star)
\right\}.
\]
We also write $M_0\coloneqq S^\star\setminus S_{\rm init}$, so that
$R_0=M_0\cup W_0$ and $m=|R_0|$. Recall that
$F_0=S_{\rm init}\setminus S^\star$ and $f_0=|F_0|$.

A coordinate in $M_0$ starts from the center of the dual interval and has to
travel distance one. A coordinate in $W_0$, once removed from the first
signed saddle, has to travel from the wrong boundary to the correct one and
therefore has to travel distance two. For $i\in R_0$, define $d_i=1$ if $i\in M_0$ and $d_i=2$ if $i\in W_0$.
For every nonempty $V\subseteq S^\star$, let
$\beta_{\min}(V)\coloneqq\min_{i\in V}|\beta_i^\star|$.

For $A\subseteq R_0$ and $B\subseteq F_0$, we denote by
$\beta^{A,B}$ a contaminated constrained saddle whose true active
coordinates are $G_0\cup A$, with their correct signs, and whose active
inherited false positives are $B$, with the signs they carry on the
considered signed face. The sign pattern on $B$ is left implicit in the
notation.

We write $r^{A,B}\coloneqq\mathbf y-\mathbf X\beta^{A,B}$ and $g^{A,B}\coloneqq\mathbf X^\top r^{A,B}$. All statements below involving $(A,B)$ are understood uniformly over the possible sign patterns of the active false-positive coordinates in $B$.

\subsection{From gradient separation to correct arrivals}

We first isolate the deterministic link between gradient separation and
dual hitting times.

\begin{lemma}[Effective-score separation implies hitting-time separation]
\label{lem:effective-score-hitting-times}
Consider the S2S path after the initial signed saddle, and ignore activation
and deactivation events involving coordinates in $F_0$. Suppose that at every contaminated saddle indexed by $A\subsetneq R_0$ and $B\subseteq F_0$, and for every sign pattern carried by the active false-positive coordinates in $B$,
\[
\operatorname{sign}(g_i^{A,B})
=
\operatorname{sign}(\beta_i^\star),
\qquad
i\in R_0\setminus A,
\]
and
\[
\min_{i\in R_0\setminus A}
\frac{|g_i^{A,B}|}{d_i}
>
\max_{j\in (S^\star)^c\setminus F_0}
|g_j^{A,B}|.
\]
Then the next effective arrival belongs to $R_0\setminus A$ and reaches the
dual boundary with the correct sign.
\end{lemma}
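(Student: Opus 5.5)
The plan is to prove the statement by integrating the dual dynamics of \Cref{alg:saddle-to-saddle} along the stretch of the path that follows the initial signed saddle. On each saddle phase the primal iterate is constant and $\nabla L(\beta)=-\mathbf X^\top(\mathbf y-\mathbf X\beta)=-g$. The dual update $q\gets q-\Delta\nabla L(\beta)$ therefore reads $q\gets q+\Delta\, g$, so on every phase each inactive coordinate $q_i$ moves linearly at velocity $g_i$.

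\textbf{Induction and dual positions.} I would argue by induction over the successive effective arrivals. Fix an arrival index and let $A\subsetneq R_0$ be the set of coordinates of $R_0$ already recovered with correct sign. Between the initial saddle (time $\tau_0$) and the current time, every saddle visited is of the form $\beta^{A',B'}$ with $A'\subseteq A$ and $B'\subseteq F_0$. Events involving $F_0$ change $B'$ and its sign pattern, but by hypothesis they are ignored as arrivals. The hypotheses are assumed for every such pair and every sign pattern, so they hold on every phase. For $i\in R_0\setminus A$ and a null $j\in(S^\star)^c\setminus F_0$, integrating gives
\[
q_i(\tau)=q_i(\tau_0)+\int_{\tau_0}^{\tau} g_i(s)\,\mathrm ds,\qquad q_j(\tau)=\int_{\tau_0}^{\tau} g_j(s)\,\mathrm ds ,
\]
where $g(s)$ is the piecewise-constant gradient of the current saddle. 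Here $q_j(\tau_0)=0$ because $j\notin S_{\rm init}$ and $j$ has never been active.

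\textbf{Starting positions.} Next I record the starting dual positions of the true coordinates. For $i\in M_0$, $q_i(\tau_0)=0$ and coordinate $i$ must travel distance $d_i=1$. For $i\in W_0$, the initial signed refit has set $\beta_i=0$ while $q_i(\tau_0)=-\operatorname{sign}(\beta_i^\star)$, so $i$ must travel distance $d_i=2$. The sign hypothesis $\operatorname{sign}(g_i^{A',B'})=\operatorname{sign}(\beta_i^\star)$ on every phase has two consequences. First, $\operatorname{sign}(\beta_i^\star)\,q_i$ is nondecreasing, so $i\in W_0$ leaves the wrong boundary immediately and never returns, and no coordinate of $R_0$ can hit the wrong boundary. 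Second, the distance travelled equals $\int|g_i|$. Hence $i$ reaches $\operatorname{sign}(\beta_i^\star)$ exactly when $\int_{\tau_0}^{\tau}|g_i|=d_i$.

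\textbf{Comparison of hitting times.} Let $\tau_j$ be the first time a null $j$ reaches $\pm1$, so that $\int_{\tau_0}^{\tau_j}|g_j|\ge|q_j(\tau_j)|=1$. Suppose, for contradiction, that no coordinate of $R_0\setminus A$ has arrived by $\tau_j$. Then every phase up to $\tau_j$ is indexed by some $A'\subseteq A$, and the gradient-separation hypothesis applies on each of them. Integrating the strict inequality $|g_i|/d_i>|g_j|$ phase by phase, with a set of phases of positive total length, gives
\[
\int_{\tau_0}^{\tau_j}|g_i|>d_i\int_{\tau_0}^{\tau_j}|g_j|\ge d_i \qquad\text{for every } i\in R_0\setminus A .
\]
So some $i$ had already reached its correct boundary strictly before $\tau_j$, which is a contradiction. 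Therefore the next effective arrival is in $R_0\setminus A$, and by monotonicity it arrives with sign $\operatorname{sign}(\beta_i^\star)$. The induction step also needs the new face to be of the form $(A\cup\{i\},B)$, so that the hypotheses keep applying. That is, recovered coordinates must stay active; this is supplied by the stability part of the argument, and in this lemma it is simply the indexing convention.

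\textbf{Main obstacle.} The delicate point is bookkeeping across the intervening $F_0$ events. Each time a false positive activates, deactivates or flips, the gradient changes. The argument must rely only on inequalities that hold pointwise in time, namely the per-phase sign and separation conditions, which are uniform over $B$ and sign patterns, rather than on any single gradient. Measuring distance cumulatively from $\tau_0$ instead of from the current saddle is what makes these time-varying gradients harmless. The strictness of the separation inequality rules out ties between true and null arrivals.
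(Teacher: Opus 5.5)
Your proof is correct and follows essentially the same route as the paper. The paper defines the normalized dual progress $p_i(\tau)=\operatorname{sign}(\beta_i^\star)\bigl(q_i(\tau)-q_i^{\rm start}\bigr)/d_i$ and compares $\dot p_i=|g_i^{A,B}|/d_i$ with $\frac{\mathrm d}{\mathrm d\tau}|q_j|\le|g_j^{A,B}|$ phase by phase, restarting the comparison after each $F_0$ event; this is the differential form of your cumulative integral comparison from $\tau_0$, and your version makes explicit that the comparison runs over all earlier phases with $A'\subseteq A$, which the paper's remark that ``all normalized progresses start from zero'' leaves implicit.
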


\begin{proof}
For $i\in R_0$, let $q_i^{\rm start}$ denote its dual value at the beginning
of the learning phase. Thus
$\operatorname{sign}(\beta_i^\star)q_i^{\rm start}=0$ for $i\in M_0$ and
$\operatorname{sign}(\beta_i^\star)q_i^{\rm start}=-1$ for $i\in W_0$.
Define its normalized dual progress by
\[
p_i(\tau)
\coloneqq
\frac{
\operatorname{sign}(\beta_i^\star)
\bigl(q_i(\tau)-q_i^{\rm start}\bigr)
}{d_i}.
\]
Before activation, $p_i$ starts from zero and reaches one exactly when
coordinate $i$ reaches the correct dual boundary.

For a new null coordinate
$j\in(S^\star)^c\setminus F_0$, one has $q_j(0)=0$. Between two consecutive
saddles,
\[
\dot p_i(\tau)
=
\frac{
\operatorname{sign}(\beta_i^\star)g_i^{A,B}
}{d_i}
=
\frac{|g_i^{A,B}|}{d_i},
\]
whereas, almost everywhere,
\[
\frac{\mathrm d}{\mathrm d\tau}|q_j(\tau)|
\le
|g_j^{A,B}|.
\]
The assumed separation therefore implies that every remaining true
coordinate makes normalized progress faster than any new null coordinate.

Events involving coordinates in $F_0$ do not reset the dual coordinates of
the other inactive variables. Hence the same comparison can be restarted
after each such event. Since all normalized progresses start from zero, a
new null coordinate cannot reach $|q_j|=1$ before one of the remaining true
coordinates reaches $p_i=1$. The sign assumption guarantees that this true
coordinate hits the correct boundary.
\end{proof}

\subsection{Uniform recovery conditions}

The recovery argument uses three properties of the contaminated path.

\paragraph{(C1) Initial purification.}
The first signed saddle removes every wrongly signed inherited true coordinate, i.e.,
$\beta_i^{(0)}=0$ for every $i\in W_0$.

\paragraph{(C2) Uniform stability.}
For every $A\subseteq R_0$, $B\subseteq F_0$, and every sign pattern on
the active coordinates in $B$, one has
$\beta_i^{A,B}\beta_i^\star>0$ for every $i\in G_0\cup A$.

\paragraph{(C3) Uniform detectable correct arrivals.}
For every $A\subsetneq R_0$, $B\subseteq F_0$, and every sign pattern on
the active coordinates in $B$,
$\operatorname{sign}(g_i^{A,B})=\operatorname{sign}(\beta_i^\star)$
for every $i\in R_0\setminus A$, and
\[
\min_{i\in R_0\setminus A}
\frac{|g_i^{A,B}|}{d_i}
>
\max_{j\in(S^\star)^c\setminus F_0}|g_j^{A,B}|.
\]

The following proposition verifies these three properties using the fixed-face
estimates of \Cref{app:stat-prelim}.

\begin{proposition}[Uniform recovery conditions]
\label{prop:gaussian-verification-recovery-conditions}
There exists a universal constant $C>0$ such that the following statements
hold.

\begin{enumerate}
    \item[(i)] Assume Condition (C2). If
    $W_0\neq\emptyset$, then Condition (C1) holds outside
    an additional event of probability at most $\delta$ provided
    \[
    n
    \ge
    s+f_0
    +
    C
    \frac{
        \bigl(\|\beta^\star_{R_0}\|_2+\sigma\bigr)^2
    }{
        \beta_{\min}(W_0)^2
    }
    \left(
        f_0+\log\frac{2m}{\delta}
    \right).
    \]

    \item[(ii)] Condition (C2) holds with probability at
    least $1-\delta$ provided
    \[
    n
    \ge
    s+f_0
    +
    C
    \left(
    1+
    \max_{\substack{
        A\subseteq R_0\\
        G_0\cup A\neq\emptyset
    }}
    \frac{
        \|\beta^\star_{R_0\setminus A}\|_2^2+\sigma^2
    }{
        \beta_{\min}(G_0\cup A)^2
    }
    \right)
    \left(
        m+f_0+\log\frac{s}{\delta}
    \right).
    \]

    \item[(iii)] Condition (C3) holds with probability at
    least $1-\delta$ provided
    \[
    n
    \ge
    s+f_0
    +
    C\left(1+
    \max_{A\subsetneq R_0}
    \frac{
        \bigl(
            \|\beta^\star_{R_0\setminus A}\|_2+\sigma
        \bigr)^2
    }{
        \beta_{\min}(R_0\setminus A)^2
    }\right)
    \left(
        m+f_0+\log\frac d\delta
    \right).
    \]
\end{enumerate}
The numerical cost of $d_i\in\{1,2\}$ is absorbed into $C$.
\end{proposition}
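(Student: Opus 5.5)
All three parts rest on the fixed-face Gaussian estimates of \Cref{app:stat-prelim} combined with a union bound over the relevant faces. The number of contaminated faces indexed by $(A,B)$ together with a sign pattern on $B$ is at most $2^{m}3^{f_0}$. This explains the additive $m+f_0$ inside the logarithmic factors of (ii) and (iii): we apply each fixed-face lemma with failure probability $\eta=\delta/(2^m3^{f_0}\cdot\mathrm{poly}(s,d))$, so that $\log(1/\eta)\lesssim m+f_0+\log(d/\delta)$. The term $s+f_0$ in each sample-size condition guarantees that $\nu_{A,B}=n-|G_0\cup A|-|B|\ge n-s-f_0$ is large. Since the quantity $C(\dots)(m+f_0+\log)$ is added on top, we also get $\nu_{A,B}\gtrsim n$ up to constants, or at least $\nu_{A,B}\ge C\log(1/\eta)$, which is all the lemmas need. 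One caveat must be handled: the fixed-face lemmas assume the saddle lies in the relative interior of its signed face. For (C2) and (C3), we will therefore work with the unconstrained least-squares fit on $G_0\cup A\cup B$. We show that it has the correct signs on $G_0\cup A$; then, for any sign pattern on $B$ for which it is consistent, it coincides with $\beta^{A,B}$. Faces where some $B$-coordinate would be clipped correspond to a smaller $B'$, which is already included in the union.

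For (ii), we write the least-squares coefficient of $i\in G_0\cup A$ as
\[
\hat\beta_i=\beta_i^\star+\frac{\mathbf X_i^\top P^\perp_{-i}\bigl(\mathbf X_{R_0\setminus A}\beta^\star_{R_0\setminus A}+\tfrac{\sigma}{\sqrt n}\varepsilon\bigr)}{\mathbf X_i^\top P^\perp_{-i}\mathbf X_i},
\]
where $P^\perp_{-i}$ projects out the other active columns. The denominator is at least $\nu/(2n)$ by the chi-square lower tail. By the argument of \Cref{lem:one-coordinate-true-gradient}, the numerator is at most $C\sqrt{\nu}\,n^{-1}(\|\beta^\star_{R_0\setminus A}\|_2+\sigma)\sqrt{\log(1/\eta)}$. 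The sign is therefore preserved as soon as $\nu\gtrsim(\|\beta^\star_{R_0\setminus A}\|_2^2+\sigma^2)\beta_{\min}(G_0\cup A)^{-2}\log(1/\eta)$. A union bound over $i\in G_0\cup A$ (contributing $\log s$) and over faces then gives (ii).

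For (iii), \Cref{lem:one-coordinate-true-gradient} gives $\operatorname{sign}(\beta_i^\star)g_i^{A,B}\ge \nu|\beta_i^\star|/(2n)-C\sqrt{\nu}\,n^{-1}(\|\beta^\star_{R_0\setminus A}\|_2+\sigma)\sqrt{\log(1/\eta)}$, and \Cref{lem:fixed-face-null-gradients} bounds the null gradients by $C\sqrt\nu\,n^{-1}(\|\beta^\star_{R_0\setminus A}\|_2+\sigma)\sqrt{\log(2d/\eta)}$. Dividing by $d_i\le2$ and comparing, the separation and the sign condition both hold once $\sqrt\nu\,\beta_{\min}(R_0\setminus A)\gtrsim(\|\beta^\star_{R_0\setminus A}\|_2+\sigma)\sqrt{\log(d/\eta)}$. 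We conclude with a union bound over $i\in R_0\setminus A$ and over faces.

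For (i), the initial saddle lives on the signed face $q^0$ with support $G_0\cup W_0\cup F_0$ and the inherited signs. We argue that the purified face $(A=\emptyset,B)$ with $W_0$ set to zero satisfies the KKT conditions. We assume (C2) so that the $G_0$ coordinates are interior and the residual has the representation of \Cref{lem:contaminated-residual-representation}. The KKT condition for $i\in W_0$ at zero requires $\operatorname{sign}(\beta_i^0)g_i\le0$, i.e.\ $\operatorname{sign}(\beta_i^\star)g_i>0$. This is exactly the lower bound of \Cref{lem:one-coordinate-true-gradient}, which needs $\nu\gtrsim(\|\beta^\star_{R_0}\|_2+\sigma)^2\beta_{\min}(W_0)^{-2}\log(1/\eta)$, union-bounded over $i\in W_0$ and over the $3^{f_0}$ possible configurations of $F_0$ (hence $f_0+\log(2m/\delta)$). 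By uniqueness (\Cref{lem:constrained-saddles}), the KKT point is the saddle. The main obstacle is this step: the initial saddle's configuration on $F_0$ is data-dependent, so we must check that a consistent $B$ always exists and is covered by the union bound. I would handle it by noting that the minimizer is unique and that its support restricted to $F_0$ is some $B$, which the union bound over all $B$ covers.
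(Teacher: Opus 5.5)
Your three arguments follow the paper's proof, with one exception in (ii) described in the second paragraph.
\begin{itemize}
\item For (ii), you control the signs of the least-squares fits $\widehat b^{A,D}$ on the deterministic supports $G_0\cup A\cup D$, $D\subseteq F_0$, with a union bound over $(A,D,i)$. Your Frisch--Waugh ratio with a $\chi^2$ lower tail plus a Gaussian tail is equivalent to the paper's Schur-complement computation, which bounds the failure probability in one step by $(1+(\beta_i^\star)^2/v_A^2)^{-(n-|J|+1)/2}$ via the $\chi^2$ moment generating function.
\item For (iii), you apply the one-coordinate and null-gradient lemmas on the reduced face.
\item For (i), you use the purified first saddle with $\beta_{W_0}=0$, a union over its $F_0$-support $B$, the KKT check $\operatorname{sign}(\beta_i^0)g_i^{G_0,B}\le0$, and uniqueness. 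Your count $3^{f_0}$ can be $2^{f_0}$, since false positives carry inherited signs at the first saddle; this is harmless.
\end{itemize}

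The step that is not justified is the transfer in (ii) from the fits $\widehat b^{A,D}$ to the constrained saddles $\beta^{A,B}$. You say faces where a $B$-coordinate is clipped ``correspond to a smaller $B'$''. That presupposes the constrained minimizer on a face $(A,B,\xi)$ zeroes only coordinates of $B$ and never a coordinate of $G_0\cup A$. But that is exactly the content of (C2). If it zeroed a coordinate of $G_0\cup A$, it would coincide with none of the fits your union bound controls, and nothing in your sketch rules this out.

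The paper closes this by relaxing the sign constraints on the true block only. Let $\widetilde b$ minimize $L$ over $\{b:\ b_j=0 \text{ for } j\notin G_0\cup A\cup B,\ \xi_jb_j\ge0 \text{ for } j\in B\}$.
\begin{itemize}
\item Its gradient vanishes on $G_0\cup A$, since these coordinates are unconstrained whether or not they are zero.
\item Its gradient also vanishes on its $B$-support $D$, where its signs are $\xi$ by construction.
\item Full column rank then gives $\widetilde b=\widehat b^{A,D}$, and the OLS event gives strictly correct signs on $G_0\cup A$.
\item Hence $\widetilde b$ is feasible for the original face, and because it minimizes over a larger set, uniqueness yields $\widetilde b=\beta^{A,B}$.
\end{itemize}
Alternatively, take $D$ to be the $B$-support of $\beta^{A,B}$. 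Moving from $\beta^{A,B}$ toward $\widehat b^{A,D}$ stays in the face for small steps. By convexity this strictly decreases $L$ unless the two points coincide, so they coincide.

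With this step added your proof is complete. Once (C2) holds, this identification $\beta^{A,B}=\widehat b^{A,D}$ is also what justifies applying the fixed-face lemmas in (iii).
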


\begin{proof}
We prove the three statements separately. Numerical changes in the failure
probability are absorbed into the logarithms.
\paragraph{Uniform stability.}
Work on the probability-one event that
$X_{S^\star\cup F_0}$ has full column rank
($n\ge s+f_0$). For $A\subseteq R_0$ and
$D\subseteq F_0$, set
\[
H_A\coloneqq G_0\cup A,
\qquad
R_A\coloneqq R_0\setminus A,
\qquad
J\coloneqq H_A\cup D.
\]
For $H_A\neq\emptyset$, define the ordinary least-squares fit
\[
\widehat b^{A,D}
\coloneqq
(X_J^\top X_J)^{-1}X_J^\top y.
\]
Also set
\[
Q
\coloneqq
\max_{\substack{
    A\subseteq R_0\\
    H_A\neq\emptyset
}}
\frac{
    \|\beta^\star_{R_A}\|_2^2+\sigma^2
}{
    \beta_{\min}(H_A)^2
}.
\]
Under \cref{ass:balanced-low-noise}, $Q = Cm$ for each sample-size displayed afterward.
We first show that, simultaneously for all such $A,D$,
\[
\operatorname{sign}(\beta_i^\star)\widehat b_i^{A,D}>0,
\qquad i\in H_A.
\]
Since $S^\star=H_A\sqcup R_A$ and $\beta_D^\star=0$, write
\[
y=X_J\beta_J^\star+z_A,
\qquad
z_A\coloneqq
X_{R_A}\beta^\star_{R_A}
+\frac{\sigma}{\sqrt n}\varepsilon,
\qquad
v_A^2\coloneqq
\|\beta^\star_{R_A}\|_2^2+\sigma^2.
\]
For deterministic $A,D$, the vector $z_A$ is independent of $X_J$ and
has distribution $\mathcal N(0,v_A^2I_n/n)$. Consequently,
\[
\widehat b^{A,D}-\beta_J^\star\mid X_J
\sim
\mathcal N\!\left(
0,\frac{v_A^2}{n}(X_J^\top X_J)^{-1}
\right).
\]

For $i\in H_A$, let
\[
U_{J,i}
\coloneqq
nX_i^\top P_{J\setminus\{i\}}^\perp X_i,
\qquad
k_J\coloneqq n-|J|+1.
\]
The Schur-complement identity gives
\[
[(X_J^\top X_J)^{-1}]_{ii}
=
\frac{n}{U_{J,i}},
\qquad
U_{J,i}\sim\chi^2_{k_J}.
\]
If $v_A>0$, then conditionally on $X_J$,
\[
\Pr\!\left(
\operatorname{sign}(\beta_i^\star)
\widehat b_i^{A,D}\le0
\,\middle|\,X_J
\right)
\le
\exp\!\left(
-\frac{(\beta_i^\star)^2}{2v_A^2}U_{J,i}
\right).
\]
Taking expectations and using
$\mathbb E[e^{-t\chi_k^2}]=(1+2t)^{-k/2}$ gives
\begin{align*}
\Pr\!\left(
\operatorname{sign}(\beta_i^\star)
\widehat b_i^{A,D}\le0
\right)
&\le
\left(
1+\frac{(\beta_i^\star)^2}{v_A^2}
\right)^{-k_J/2}
\\
&\le
\exp\!\left(
-\frac{n-s-f_0+1}{2(1+Q)}
\right),
\end{align*}
where we used
$v_A^2/(\beta_i^\star)^2\le Q$ and
$\log(1+1/Q)\ge1/(1+Q)$ for $Q>0$.
If $v_A=0$, the fit is exact and the failure probability is zero.

There are at most $s2^{m+f_0}$ triples $(A,D,i)$. Hence, by a union bound,
the result holds with probability at least $1-\delta$ provided
\[
n
\ge
s+f_0
+
C(1+Q)
\left(
    m+f_0+\log\frac{s}{\delta}
\right).
\]
This is exactly the sample-size condition in~\textnormal{(ii)}.

Fix now $A\subseteq R_0$ with $H_A\neq\emptyset$,
$B\subseteq F_0$, and any prescribed signs
$\xi\in\{-1,+1\}^B$. Relax all sign constraints on the true block and let
\[
\widetilde b
\coloneqq
\arg\min_b
\left\{
L(b):
b_j=0\text{ for }j\notin H_A\cup B,
\ \xi_jb_j\ge0\text{ for }j\in B
\right\}.
\]
Set
\[
D\coloneqq\{j\in B:\widetilde b_j\neq0\}.
\]
The coordinates in $H_A$ are unconstrained, and for every $j\in D$ one has
$\xi_j\widetilde b_j>0$, so the sign constraints on $D$ are inactive.
Thus first-order optimality gives
\[
X_{H_A\cup D}^\top(y-X\widetilde b)=0.
\]
This identity holds even if some coordinates in $H_A$ are zero. Since
$\widetilde b_j=0$ for $j\notin H_A\cup D$, it follows that
\[
X_{H_A\cup D}^\top
\bigl(
y-X_{H_A\cup D}\widetilde b_{H_A\cup D}
\bigr)=0.
\]
As $X_{H_A\cup D}$ has full column rank,
\[
\widetilde b_{H_A\cup D}
=
(X_{H_A\cup D}^\top X_{H_A\cup D})^{-1}
X_{H_A\cup D}^\top y
=
\widehat b^{A,D}.
\]
On $\mathcal E_{\mathrm{OLS}}$, all coordinates in $H_A$ therefore have
their strictly correct signs. Hence $\widetilde b$ is feasible for the
original fully sign-constrained problem. Since it minimizes over the larger,
partially relaxed feasible set, it also minimizes over the original face.
Uniqueness gives $\widetilde b=\beta^{A,B}$ and proves Condition~(C2).

When $H_A=\emptyset$, the condition on this face is vacuous.
The argument holds for every sign pattern on $B$, without an additional
union bound over these signs.
\paragraph{Initial purification.}
Assume $W_0\neq\emptyset$. Consider the first signed least-squares problem
with the additional constraints $\beta_i=0$ for every $i\in W_0$.
Work on the stability event of condition (C2). Every coordinate in $G_0$ then remains active. Let $B\subseteq F_0$ be the false-positive set active at this reduced minimizer. Since this is the first signed saddle, these active false-positive coordinates still carry their inherited signs.

For $i\in W_0$, the remaining KKT condition for the full first-saddle problem
is
\[
\operatorname{sign}(\beta_i^0)g_i^{G_0,B}\le0.
\]
Since
$\operatorname{sign}(\beta_i^0)
=-\operatorname{sign}(\beta_i^\star)$,
it is enough to prove
$\operatorname{sign}(\beta_i^\star)g_i^{G_0,B}>0$.
For fixed $i$ and $B$, this follows from
\Cref{lem:one-coordinate-true-gradient}.

There are at most $m2^{f_0}$ possible pairs $(i,B)$. Taking
$\eta_0=\delta/(m2^{f_0})$, using
$\nu_{G_0,B}\ge n-s-f_0$, and bounding
\[
\|\beta^\star_{R_0\setminus\{i\}}\|_2
\le
\|\beta^\star_{R_0}\|_2,
\]
the required inequalities hold simultaneously provided
\[
n
\ge
s+f_0
+
C
\frac{
    \bigl(\|\beta^\star_{R_0}\|_2+\sigma\bigr)^2
}{
    \beta_{\min}(W_0)^2
}
\left(
    f_0+\log\frac{2m}{\delta}
\right).
\]
Therefore, on Condition~(C2), Condition~(C1) fails with additional
probability at most $\delta$, which proves statement~\textnormal{(i)}.

\paragraph{Uniform detectable arrivals.}
We are placing ourselves under the event described in condition (C2). Fix $A\subsetneq R_0$, $B\subseteq F_0$, and a
sign pattern on the active coordinates in $B$, and set
\[
H\coloneqq G_0\cup A,
\qquad
R\coloneqq R_0\setminus A,
\qquad
D\coloneqq\{j\in B:\beta_j^{A,B}\neq0\}.
\]
By Condition~(C2), every coordinate in $H$ is nonzero, while every
coordinate in $D$ is nonzero by definition. Moreover,
$\beta^{A,B}$ is feasible for the reduced signed face on $H\cup D$.
Since this reduced face is contained in the original one,
$\beta^{A,B}$ also minimizes $L$ over the reduced face. It therefore lies
in its relative interior. The residual and gradient are unchanged, so the
fixed-face estimates apply to this reduced face with $D$ in place of $B$.

For every $i\in R$, \Cref{lem:one-coordinate-true-gradient} gives
\[
\begin{aligned}
\operatorname{sign}(\beta_i^\star)g_i^{A,B}
\ge{}&
\frac{\nu_{H,D}}{2n}|\beta_i^\star|
\\
&-
C\frac{\sqrt{\nu_{H,D}}}{n}
\left(
    \|\beta^\star_{R\setminus\{i\}}\|_2+\sigma
\right)
\sqrt{\log\frac{6}{\eta_0}}.
\end{aligned}
\]
At the same saddle,
\Cref{lem:fixed-face-null-gradients} gives
\[
\max_{j\in(S^\star)^c\setminus F_0}
|g_j^{A,B}|
\le
C
\frac{\sqrt{\nu_{H,D}}}{n}
\left(
    \|\beta^\star_R\|_2+\sigma
\right)
\sqrt{\log\frac{2d}{\eta_0}}.
\]
Since $\nu_{H,D}\ge n-s-f_0$, the sample-size condition in
\textnormal{(iii)}, after increasing the universal constant $C$, implies
simultaneously
\[
\operatorname{sign}(g_i^{A,B})
=
\operatorname{sign}(\beta_i^\star),
\qquad i\in R,
\]
and
\[
\min_{i\in R}\frac{|g_i^{A,B}|}{d_i}
>
\max_{j\in(S^\star)^c\setminus F_0}|g_j^{A,B}|.
\]
Here we only use $d_i\le2$.

For each $B\subseteq F_0$, there are at most $2^{|B|}$ possible sign
patterns. Hence there are at most
\[
2^m\sum_{B\subseteq F_0}2^{|B|}
=
2^m3^{f_0}
\]
contaminated signed faces.
Taking the face-wise failure probability of order
$\delta/(2^m3^{f_0})$, and absorbing the additional union over the at most
$m$ remaining true coordinates into $\log(d/\delta)$, the previous
inequalities hold simultaneously provided
\[
n
\ge
s+f_0
+
C\left(
1+
\max_{A\subsetneq R_0}
\frac{
    \bigl(
        \|\beta^\star_{R_0\setminus A}\|_2+\sigma
    \bigr)^2
}{
    \beta_{\min}(R_0\setminus A)^2
}
\right)
\left(
    m+f_0+\log\frac d\delta
\right).
\]
Thus, outside an event of probability at most $\delta$, Condition~(C2)
implies Condition~(C3), which proves statement~\textnormal{(iii)}.
\end{proof}

\begin{remark}
The stability event is uniform over every signed configuration that may be visited before recovery. For a fixed active false-positive support $B\subseteq F_0$, there are $2^{|B|}$ possible sign patterns, and hence at most $2^m3^{f_0}$ contaminated signed faces in total. Initial purification is cheaper because, at the first signed saddle, active false positives still carry their inherited signs.
\end{remark}

\subsection{Recovery on the uniform event}

\begin{lemma}[Pathwise recovery on the uniform event]
\label{lem:pathwise-recovery}
Assume
Conditions (C1)--(C3).
Let $\tau_{\rm oracle}$ be the first saddle time at which every coordinate
of $S^\star$ is active with its correct sign. Then
$\tau_{\rm oracle}<\infty$ and
\[
S^\star
\subseteq
\operatorname{supp}(\beta^\circ(\tau_{\rm oracle}))
\subseteq
S^\star\cup F_0.
\]
In particular, $\tau_{\rm oracle}$ is reached before the ideal stopping rule
rejects any proposed coordinate outside $S^\star\cup S_{\rm init}$.
\end{lemma}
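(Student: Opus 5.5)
The plan is an induction along the saddle sequence $(\beta^{(k)},q^{(k)},\tau_k)$ of \Cref{alg:saddle-to-saddle}, maintaining the following invariant. At each saddle $k$ before $\tau_{\rm oracle}$:
\begin{enumerate}
\item the true active set has the form $G_0\cup A_k$ with $A_k\subsetneq R_0$, all of it carrying the correct signs;
\item the remaining active coordinates form a set $B_k\subseteq F_0$;
\item no coordinate of $(S^\star)^c\setminus F_0$ has ever reached the boundary, that is, $|q_j|<1$ for all of them.
\end{enumerate}
Under this invariant the current saddle is exactly a contaminated saddle $\beta^{A_k,B_k}$ in the sense of \Cref{app:s2s-recovery}. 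Hence (C2) and (C3) apply to it for whatever sign pattern the inherited false positives carry.

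The base case is the first signed saddle $\beta^{(0)}$. Coordinates outside $S_{\rm init}$ start at $q_j=0$ and are inactive, so the support lies in $S_{\rm init}$. By (C1) every $i\in W_0$ is zero. By (C2) with $A=\emptyset$ every $i\in G_0$ is active with the correct sign. Hence $\beta^{(0)}=\beta^{\emptyset,B_0}$. Each $i\in W_0$ now sits at the wrong boundary with $\beta_i=0$, and in the notation of \Cref{lem:effective-score-hitting-times} its normalized progress starts at $0$.

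For the inductive step, I consider the next event. Three kinds of event can occur.
\begin{itemize}
\item \emph{An inherited false positive activates or deactivates.} The face stays of the form $(A_k,B_{k+1})$. By (C2), the refit keeps $G_0\cup A_k$ active with correct signs. One point must be checked here: (C2) is stated for $\beta^{A,B}$ where $B$ is the set of false positives allowed on the face, so the refit on a face where the coordinates $G_0\cup A_k$ sit at their correct boundaries must equal that saddle. This follows from uniqueness (\Cref{lem:constrained-saddles}) and the relaxation argument in the proof of (ii).
\item \emph{A coordinate $i\in R_0\setminus A_k$ reaches its boundary.} By (C3) and \Cref{lem:effective-score-hitting-times}, the boundary it reaches is the correct one. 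The new face is $(A_k\cup\{i\},B)$, and (C2) again gives correct signs on the new true active set.
\item \emph{A true coordinate in $G_0\cup A_k$ leaves the boundary, or a new null coordinate arrives.} The first is impossible: at an interior saddle the gradient on active coordinates vanishes, so their $q$ stays fixed and they remain at the boundary. The second is excluded by \Cref{lem:effective-score-hitting-times}. That lemma's restart argument works because events in $F_0$ do not reset the dual variables, and normalized progresses of true coordinates accumulate monotonically at rate $|g_i|/d_i$, while each null $|q_j|$ grows at a strictly smaller rate.
\end{itemize}

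The main obstacle is termination, i.e.\ showing $\tau_{\rm oracle}<\infty$. Events involving $F_0$ could in principle recur without any true arrival. To rule this out, I take the minimum over the finitely many contaminated faces (at most $2^m3^{f_0}$) of $\min_{i\in R_0\setminus A}|g_i^{A,B}|/d_i$, which is positive by (C3); call it $c$. Each pending true coordinate gains normalized progress at rate at least $c$ in accelerated time, so some $i\in R_0$ arrives by time $\tau_k+1/c$ after the previous true arrival. This bounds accelerated time, not the number of events. I will also use that accelerated time advances strictly at each step, since saddle phases have positive length under \Cref{ass:generalposition}. Alternatively, I take the finite-event property of the S2S construction (as in \citet{pesme2023saddle}) as given.

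With at most $m$ true arrivals, after the last one every coordinate of $S^\star$ is active with the correct sign. That is $\tau_{\rm oracle}$, and the support lies in $S^\star\cup B\subseteq S^\star\cup F_0$. Finally, because by the invariant no coordinate outside $S^\star\cup S_{\rm init}$ was ever proposed before this time, the ideal stopping rule rejects nothing earlier.
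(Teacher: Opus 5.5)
Your proposal is correct and follows essentially the same route as the paper. It is an induction over effective true arrivals: (C1) purifies $W_0$ at the first saddle, (C2) keeps learned true coordinates active under every refit (including the $F_0$ activation and deactivation cycles), and (C3) with the effective-score hitting-time lemma forces each next effective arrival to be a correctly signed coordinate of $R_0\setminus A$. Termination ($\tau_{\rm oracle}<\infty$) comes from the finite-event property of the S2S recursion. As you note yourself, the $1/c$ bound on accelerated time and positive phase lengths do not rule out an accumulation of $F_0$ events. So it is really the finite-event property that closes termination, exactly as in the paper: $L$ strictly decreases across saddles, each on one of finitely many signed faces.
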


\begin{proof}
Condition (C1) removes every coordinate in $W_0$
at the first saddle, while
Condition (C2) keeps every coordinate in
$G_0$ active with its correct sign.

We argue by induction over the effective arrivals, namely the arrivals outside
the activation and deactivation cycles of $F_0$. Suppose that, after some
effective arrival, the set of learned true coordinates is $G_0\cup A$, with
$A\subseteq R_0$. The current inherited false-positive set may be any $B\subseteq F_0$,
with the sign pattern carried by the current signed face. If $A\neq R_0$,
Condition (C3) and \Cref{lem:effective-score-hitting-times}
imply that the next effective arrival belongs to $R_0\setminus A$ and enters
with the correct sign.
Condition (C2) then prevents its subsequent
removal. Thus $A$ grows by one at every effective true arrival.

The saddle-to-saddle process terminates after finitely many events under the
general-position assumptions. It cannot terminate while
$A\neq R_0$, since
Condition (C3) gives a nonzero gradient on a
remaining true coordinate and therefore a finite future hitting time.
Consequently, after at most $m$ effective true arrivals, $A=R_0$ and
$\tau_{\rm oracle}<\infty$.

Before $\tau_{\rm oracle}$, no coordinate in
$(S^\star)^c\setminus F_0$ can be the next effective arrival. At
$\tau_{\rm oracle}$, every true coordinate is active with its correct sign,
while the only possible additional active coordinates belong to $F_0$.
Hence
\[
S^\star
\subseteq
\operatorname{supp}(\beta^\circ(\tau_{\rm oracle}))
\subseteq
S^\star\cup F_0.
\]
\end{proof}

\subsection{Balanced low-noise regime}

Throughout this subsection, we work under \Cref{ass:balanced-low-noise}.

\begin{proposition}[Balanced recovery event]
\label{prop:balanced-recovery-event}
Assume \Cref{ass:balanced-low-noise}. There exists $C_L>0$, depending only
on $L$, such that, if
\[
n
\ge
C_L
\left(
    s+f_0+m^2+mf_0
    +m\log\frac d\delta
\right),
\]
then
Conditions (C1)--(C3)
hold simultaneously with probability at least $1-\delta$.
\end{proposition}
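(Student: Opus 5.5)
The plan is to specialize the three sample-size conditions of \Cref{prop:gaussian-verification-recovery-conditions} to the balanced regime of \Cref{ass:balanced-low-noise}. We then take a union bound over the three events, each run at confidence level $\delta/3$. Under the assumption, every $i\in S^\star$ satisfies $a\le|\beta_i^\star|\le La$. Hence for any $V\subseteq R_0$ we have $\|\beta^\star_V\|_2\le La\sqrt{|V|}\le La\sqrt m$, and $\beta_{\min}(V)\ge a$ whenever $V$ is nonempty. Together with $\sigma\le La\sqrt m$, this shows that every ratio appearing in (i)--(iii) is at most $(2La\sqrt m)^2/a^2=4L^2m$. So all of them are bounded by $C'_L m$; this is the bound $Q=Cm$ already noted in the proof of (ii).

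Next I would substitute this bound into each condition.
\begin{itemize}
\item Condition (ii) becomes $n\ge s+f_0+C(1+4L^2m)(m+f_0+\log(s/\delta))$. Its right-hand side is $\lesssim_L s+f_0+m^2+mf_0+m\log(s/\delta)+\log(s/\delta)$.
\item Condition (i), needed only when $W_0\neq\emptyset$ (so $m\ge1$), becomes $n\ge s+f_0+C\,4L^2m(f_0+\log(2m/\delta))$.
\item Condition (iii) gives $s+f_0+C(1+4L^2m)(m+f_0+\log(d/\delta))$.
\end{itemize}
The bookkeeping uses $s\le d$ and $m\le s\le d$, so $\log(s/\delta)$ and $\log(2m/\delta)$ are both $\lesssim\log(d/\delta)$, up to absorbing constants into $\log(2d/\delta)$.

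The one point requiring care is the case $m=0$. There $(1+Q)\log(d/\delta)$ leaves a stray $\log(d/\delta)$ term that is not multiplied by $m$. For $m=0$, however, condition (C3) is vacuous, since there is no $A\subsetneq R_0=\emptyset$. Condition (C1) is also vacuous, since $W_0=\emptyset$. Only (ii) remains, where $Q$ is bounded by $L^2\cdot 0$ up to the $\sigma$ term. When $m=0$ the assumption forces $\sigma=0$, so $Q=0$ and (ii) reduces to $n\gtrsim s+f_0+\log(s/\delta)$. I would then either absorb $\log(s/\delta)$ into the statement's constants (implicitly assuming $\log(1/\delta)\lesssim s$), or note directly that when $\sigma=0$ and $m=0$ the OLS fits are exact, so the failure probability in (ii) is zero. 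The same separation should be done for $m\ge1$: the additive $\log(d/\delta)$ is then dominated by $m\log(d/\delta)$.

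Finally, I would apply \Cref{prop:gaussian-verification-recovery-conditions} with $\delta/3$. On the event of (ii), (C2) holds, and (i) and (iii) each fail on an additional event of probability at most $\delta/3$. A union bound then gives (C1)--(C3) simultaneously with probability at least $1-\delta$, and we choose $C_L$ as the maximum of the resulting constants (of order $L^2$). The main obstacle is not conceptual but in this edge-case bookkeeping: ensuring that every stand-alone logarithmic term is either multiplied by $m$ or vanishes when $m=0$, so that the single displayed sample size suffices.
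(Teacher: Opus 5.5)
Your proposal is correct and follows essentially the same route as the paper. You bound every signal-to-noise ratio in the uniform recovery conditions proposition by a constant times $L^2 m$ under the balanced assumption, substitute into (i)--(iii), and take a union bound at level $\delta/3$, with (C1) and (C3) checked on the event of (C2); your separate treatment of the stand-alone logarithmic terms and of the degenerate case $m=0$ (where $\sigma=0$ makes the fits exact, so (C2) holds almost surely) is a careful refinement of bookkeeping the paper leaves implicit.
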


\begin{proof}
For every nonempty $V\subseteq S^\star$,
\[
\beta_{\min}(V)\ge a,
\qquad
\|\beta^\star_V\|_2\le La\sqrt{|V|},
\qquad
\sigma\le La\sqrt m.
\]
Therefore condition (C1) follows from
\[
n
\ge
C_L
\left(
    s+f_0+mf_0
    +m\log\frac{2m}{\delta}
\right).
\]
Moreover, for every $A\subsetneq R_0$,
\[
\frac{
    \bigl(
        \|\beta^\star_{R_0\setminus A}\|_2+\sigma
    \bigr)^2
}{
    \beta_{\min}(R_0\setminus A)^2
}
\le
C_Lm,
\]
and the same bound holds with
$\beta_{\min}(G_0\cup A)$ whenever $G_0\cup A\neq\emptyset$.
Hence condition (C2) follows from
\[
n
\ge
C_L
\left(
    s+f_0+m^2+mf_0
    +m\log\frac{s}{\delta}
\right),
\]
while condition (C3) follows from
\[
n
\ge
C_L
\left(
    s+f_0+m^2+mf_0
    +m\log\frac d\delta
\right).
\]
The last bound also implies the stability bound. Apply
\Cref{prop:gaussian-verification-recovery-conditions} with failure level
$\delta/3$ in each argument and take a union bound. The numerical factor
three is absorbed into the logarithms.
\end{proof}

\subsection{Estimation after recovery}

The support statement alone is not sufficient for
\Cref{thm:oracle-recovery}, which also gives an estimation bound. We use the
same least-squares argument for any recovered support between
$S^\star$ and $S^\star\cup F_0$.

\begin{lemma}[Least-squares estimation on the recovered support]
\label{lem:recovered-support-estimation}
Suppose
\[
S^\star
\subseteq
S_{\rm final}
\subseteq
S^\star\cup F_0,
\]
and let $\beta^{\rm alg}$ be the corresponding constrained saddle.
Then, provided
$n\ge C(s+f_0+\log(1/\delta))$, with probability at least $1-\delta$,
\[
\|\beta^{\rm alg}-\beta^\star\|_2
\le
C\sigma
\sqrt{
    \frac{
        s+f_0+\log(2/\delta)
    }{n}
}.
\]
\end{lemma}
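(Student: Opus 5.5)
The approach is to reduce $\beta^{\rm alg}$ to an ordinary least-squares fit on its own support. Then both random quantities in the error are controlled by the single fixed set $U\coloneqq S^\star\cup F_0$, so the data-dependence of $S_{\rm final}$ costs no union bound.

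\emph{Step 1 (saddle $=$ OLS).} Write $J\coloneqq S_{\rm final}=\operatorname{supp}(\beta^{\rm alg})$. On a signed face, coordinates that are nonzero at the minimizer have inactive sign constraints. First-order optimality therefore gives $\mathbf X_J^\top(\mathbf y-\mathbf X\beta^{\rm alg})=0$, exactly as in the relative-interior reduction used for Condition (C2) in \Cref{prop:gaussian-verification-recovery-conditions}. Since $n\ge s+f_0\ge|J|$, the matrix $\mathbf X_J$ has full column rank almost surely, and so $\beta^{\rm alg}_J=(\mathbf X_J^\top\mathbf X_J)^{-1}\mathbf X_J^\top\mathbf y$ and $\beta^{\rm alg}_{J^c}=0$. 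Because $S^\star\subseteq J$, we have $\beta^\star_{J^c}=0$ and $\mathbf y=\mathbf X_J\beta^\star_J+\frac{\sigma}{\sqrt n}\varepsilon$. Hence
\[
\beta^{\rm alg}-\beta^\star=\frac{\sigma}{\sqrt n}(\mathbf X_J^\top\mathbf X_J)^{-1}\mathbf X_J^\top\varepsilon,
\qquad
\|\beta^{\rm alg}-\beta^\star\|_2\le\frac{\sigma}{\sqrt n}\,\frac{\|P_J\varepsilon\|_2}{\sqrt{\lambda_{\min}(\mathbf X_J^\top\mathbf X_J)}},
\]
where $P_J$ is the orthogonal projector onto $\operatorname{span}(\mathbf X_J)$.

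\emph{Step 2 (removing the dependence on $J$).} Since $J\subseteq U$, we have $\operatorname{span}(\mathbf X_J)\subseteq\operatorname{span}(\mathbf X_U)$, and therefore $\|P_J\varepsilon\|_2\le\|P_U\varepsilon\|_2$. Moreover, $\mathbf X_J^\top\mathbf X_J$ is a principal submatrix of $\mathbf X_U^\top\mathbf X_U$, so Cauchy interlacing gives $\lambda_{\min}(\mathbf X_J^\top\mathbf X_J)\ge\lambda_{\min}(\mathbf X_U^\top\mathbf X_U)$. Both bounds now involve only the deterministic set $U$, with $|U|=s+f_0$. They hold simultaneously for every admissible $J$, however the trajectory chose it.

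\emph{Step 3 (concentration on $U$).} Conditionally on $\mathbf X$, the variable $\|P_U\varepsilon\|_2^2$ is $\chi^2_{s+f_0}$. The Laurent--Massart upper tail gives $\|P_U\varepsilon\|_2\le\sqrt{s+f_0}+\sqrt{2\log(2/\delta)}$ with probability at least $1-\delta/2$. The matrix $\sqrt n\,\mathbf X_U$ has i.i.d.\ standard Gaussian entries, so the Davidson--Szarek bound gives $\sigma_{\min}(\mathbf X_U)\ge 1-\sqrt{(s+f_0)/n}-\sqrt{2\log(2/\delta)/n}$ with probability at least $1-\delta/2$. Under $n\ge C(s+f_0+\log(1/\delta))$ with $C$ large, this is at least $1/2$, so $\lambda_{\min}(\mathbf X_U^\top\mathbf X_U)\ge1/4$. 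Combining the two events gives
\[
\|\beta^{\rm alg}-\beta^\star\|_2\le 2\frac{\sigma}{\sqrt n}\bigl(\sqrt{s+f_0}+\sqrt{2\log(2/\delta)}\bigr)\le C\sigma\sqrt{\frac{s+f_0+\log(2/\delta)}{n}}.
\]

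The main obstacle is the dependence of $S_{\rm final}$ on the noise, which would naively require a union bound over the $2^{f_0}$ subsets of $F_0$. A union bound would also work, since it only adds $f_0$ to the logarithm and that is already present. The nesting argument of Step 2 avoids it entirely. The other point needing care is Step 1: the constrained saddle on a signed face coincides with the unconstrained OLS on its support. This is exactly the fact that nonzero coordinates satisfy their normal equations, already exploited in the proof of Condition (C2).
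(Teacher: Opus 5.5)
Your proof is correct, and it reaches the bound by a more direct route than the paper.

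\textbf{The paper's route.} The paper splits $\beta^{\rm alg}$ into the inherited false-positive block $B=S_{\rm final}\setminus S^\star$ and the true block. It writes $\beta^{\rm alg}_B=(\mathbf X_B^\top P_{S^\star}^\perp\mathbf X_B)^{-1}\mathbf X_B^\top P_{S^\star}^\perp\frac{\sigma}{\sqrt n}\varepsilon$. It then removes the dependence on $B$ in two moves: it lower-bounds the Schur complement on a two-sided Gram event $\frac12 I\preceq\mathbf X_{S^\star\cup F_0}^\top\mathbf X_{S^\star\cup F_0}\preceq 2I$, and it dominates $\mathbf X_B^\top P_{S^\star}^\perp\varepsilon$ by the full vector $\mathbf X_{F_0}^\top P_{S^\star}^\perp\varepsilon$. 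Finally it controls $\beta^{\rm alg}_{S^\star}-\beta^\star_{S^\star}$ through $\mathbf X_{S^\star}^\top\varepsilon$ plus a cross term in $\beta^{\rm alg}_B$, using two separate Gaussian norm bounds.

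\textbf{Your route.} You treat the whole error as $\frac{\sigma}{\sqrt n}(\mathbf X_J^\top\mathbf X_J)^{-1}\mathbf X_J^\top\varepsilon$. You remove the dependence on $J$ in one step, via $\|P_J\varepsilon\|_2\le\|P_U\varepsilon\|_2$ and Cauchy interlacing. As a result you need only a lower singular-value bound on $\mathbf X_U$ and a single $\chi^2_{s+f_0}$ tail.

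\textbf{Comparison.} Both arguments rest on the same idea. Design control on the fixed set $S^\star\cup F_0$ holds simultaneously for every admissible subset, so no union bound over the $2^{f_0}$ possible sets $B$ is needed. Your version is shorter and gives explicit constants. It also states explicitly why the constrained saddle coincides with the least-squares fit on its support (nonzero coordinates have inactive sign constraints). The paper uses this implicitly when it invokes the normal equations on $S^\star\cup B$. What the paper's block decomposition gives in exchange is a separate bound $\|\beta^{\rm alg}_B\|_2\lesssim\sigma\sqrt{(f_0+\log(1/\delta))/n}$ on the retained inherited false positives. That bound quantifies how small the spurious coefficients are, although the lemma itself does not need it.
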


\begin{proof}
Set $B\coloneqq S_{\rm final}\setminus S^\star\subseteq F_0$ and consider the Gram
event
\[
\mathcal E_{\rm Gram}
\coloneqq
\left\{
\frac12I
\preceq
\mathbf X_{S^\star\cup F_0}^{\top}
\mathbf X_{S^\star\cup F_0}
\preceq
2I
\right\}.
\]
It holds with probability at least $1-\delta/4$ provided
$n\ge C(s+f_0+\log(4/\delta))$.

The normal equations on $S^\star\cup B$ give
\[
\beta_B^{\rm alg}
=
\left(
\mathbf X_B^\top
P_{S^\star}^\perp
\mathbf X_B
\right)^{-1}
\mathbf X_B^\top
P_{S^\star}^\perp
\frac{\sigma}{\sqrt n}\varepsilon.
\]
On $\mathcal E_{\rm Gram}$, every corresponding Schur complement is bounded
below by $\frac12I$, hence
\[
\|\beta_B^{\rm alg}\|_2
\le
2
\left\|
\mathbf X_{F_0}^\top
P_{S^\star}^\perp
\frac{\sigma}{\sqrt n}\varepsilon
\right\|_2.
\]
This bound no longer depends on the random subset $B$. Conditionally on
$\mathbf X$, Gaussian norm concentration gives, with probability at least
$1-\delta/8$,
\[
\|\beta_B^{\rm alg}\|_2
\le
C\sigma
\sqrt{
    \frac{
        f_0+\log(8/\delta)
    }{n}
}.
\]

The active normal equations also give
\[
\begin{aligned}
\beta_{S^\star}^{\rm alg}-\beta_{S^\star}^\star
={}&
(\mathbf X_{S^\star}^\top\mathbf X_{S^\star})^{-1}
\mathbf X_{S^\star}^\top
\frac{\sigma}{\sqrt n}\varepsilon
\\
&-
(\mathbf X_{S^\star}^\top\mathbf X_{S^\star})^{-1}
\mathbf X_{S^\star}^\top
\mathbf X_B\beta_B^{\rm alg}.
\end{aligned}
\]
On $\mathcal E_{\rm Gram}$,
\[
\|\beta_{S^\star}^{\rm alg}-\beta_{S^\star}^\star\|_2
\le
C
\left\|
\mathbf X_{S^\star}^\top
\frac{\sigma}{\sqrt n}\varepsilon
\right\|_2
+
C\|\beta_B^{\rm alg}\|_2.
\]
With conditional probability at least $1-\delta/8$,
\[
\left\|
\mathbf X_{S^\star}^\top
\frac{\sigma}{\sqrt n}\varepsilon
\right\|_2
\le
C\sigma
\sqrt{
    \frac{
        s+\log(8/\delta)
    }{n}
}.
\]
Combining the previous estimates and taking a union bound yields
\[
\|\beta^{\rm alg}-\beta^\star\|_2
\le
C\sigma
\sqrt{
    \frac{
        s+f_0+\log(2/\delta)
    }{n}
}.
\]
\end{proof}

\subsection{Proof of Theorem~\ref{thm:oracle-recovery}}

\begin{proof}
Apply \Cref{prop:balanced-recovery-event} with confidence parameter
$\delta/2$. Under the sample-size assumption of the theorem,
Conditions (C1)--(C3)
therefore hold simultaneously with probability at least $1-\delta/2$.

On this event, \Cref{lem:pathwise-recovery} shows that the S2S trajectory
reaches a finite saddle time $\tau_{\rm oracle}$ before accepting any
coordinate outside $S^\star\cup S_{\rm init}$, and
\[
S^\star
\subseteq
\operatorname{supp}(\beta^\circ(\tau_{\rm oracle}))
\subseteq
S^\star\cup F_0.
\]

Applying \Cref{lem:recovered-support-estimation} with failure probability
$\delta/2$ gives
\[
\|\beta^\circ(\tau_{\rm oracle})-\beta^\star\|_2
\le
C\sigma
\sqrt{
    \frac{
        s+f_0+\log(2/\delta)
    }{n}
}.
\]
The additional Gram requirement in
\Cref{lem:recovered-support-estimation} is implied by the displayed
sample-size condition in the balanced regime after increasing $C_L$ if
necessary. A final union bound gives probability at least $1-\delta$.
\end{proof}

\section{Null-Gradient Early Stopping}
\label{app:stopping}

We now prove the validity of the data-dependent stopping rule used in
\Cref{thm:main}. The main difficulty is that the saddle visited by S2S is
data-dependent. A fixed-face Gaussian calibration is therefore not sufficient:
the null-gradient bound must hold simultaneously over the contaminated faces
that may be visited before and after recovery.

\subsection{Null-gradient calibration}

We first record the fixed-saddle calibration underlying the stopping rule.

\begin{lemma}[Null-gradient calibration at a fixed saddle]
\label{lem:fixed-saddle-null-calibration}
Fix a saddle-to-saddle step $k$. Let $\mathcal F_k$ be a sigma-field such
that $r^{(k)}$ and $U_k\subseteq[d]\setminus S_k$ are
$\mathcal F_k$-measurable. Assume that, conditionally on $\mathcal F_k$, the
columns $(\mathbf X_j)_{j\in U_k}$ are independent with distribution
$\mathcal N(0,I_n/n)$. Then, for every $\eta\in(0,1)$, almost surely,
\[
\mathbb P\left(
    \left.
    \max_{j\in U_k}|g_j^{(k)}|
    >
    \frac{\|r^{(k)}\|_2}{\sqrt n}
    \sqrt{2\log\left(\frac{2|U_k|}{\eta}\right)}
    \,\right|\,
    \mathcal F_k
\right)
\le\eta.
\]
The unconditional failure probability is also at most $\eta$.
\end{lemma}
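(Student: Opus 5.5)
This is a direct reduction to \Cref{lem:conditional-gaussian-bound}, applied with $\mathcal F=\mathcal F_k$, $v=r^{(k)}$ and $J=U_k$. Note that $U_k$ is random rather than fixed, but it is $\mathcal F_k$-measurable, so the conditional argument still goes through.

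The first step is to identify the null gradients with Gaussian scalar products. For $j\in U_k$ we have $g_j^{(k)}=\mathbf X_j^\top r^{(k)}$, by the definition $g^{(k)}=\mathbf X^\top r^{(k)}$ of the (negative) gradient at the $k$-th saddle. Condition on $\mathcal F_k$. Then $r^{(k)}$ is a fixed vector, and by hypothesis the columns $(\mathbf X_j)_{j\in U_k}$ are conditionally i.i.d.\ $\mathcal N(0,I_n/n)$. Hence each $\mathbf X_j^\top r^{(k)}$ is conditionally $\mathcal N(0,\|r^{(k)}\|_2^2/n)$, and these are independent across $j$.

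The second step is the tail bound. Apply the Gaussian tail $\mathbb P(|Z|>t)\le 2e^{-nt^2/(2\|r^{(k)}\|_2^2)}$ and take a conditional union bound over the $|U_k|$ coordinates. To keep the threshold $\mathcal F_k$-measurable, I would carry this out on each atom $\{U_k=U\}$ for a deterministic $U$, or equivalently argue conditionally for the realized set. On each such event the threshold $\frac{\|r^{(k)}\|_2}{\sqrt n}\sqrt{2\log(2|U|/\eta)}$ gives conditional failure probability at most $\eta$, exactly as in the proof of \Cref{lem:conditional-gaussian-bound}. The case $r^{(k)}=0$ is trivial.

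Finally, taking expectations of the conditional bound removes the conditioning, which yields the unconditional failure probability of at most $\eta$.

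The only delicate point is the conditional-independence hypothesis itself. Since $S_k$ and $r^{(k)}$ depend on the data, one must ensure that the columns in $U_k$ do not influence $\mathcal F_k$. In the lemma this is assumed, so the proof is routine. In applications, one would take $\mathcal F_k=\sigma(\mathbf X_{S^\star\cup F_0},\varepsilon)$ and $U_k\subseteq(S^\star)^c\setminus F_0$ on faces contained in $S^\star\cup F_0$, as justified in \Cref{lem:contaminated-residual-representation}.
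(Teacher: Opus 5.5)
Your proof is correct and is essentially the paper's: it applies \Cref{lem:conditional-gaussian-bound} with $\mathcal F=\mathcal F_k$, $v=r^{(k)}$, $J=U_k$, uses $g^{(k)}=\mathbf X^\top r^{(k)}$, and removes the conditioning by taking expectations. Handling the random but $\mathcal F_k$-measurable set $U_k$ atom by atom is a small gain in rigor that the paper leaves implicit.
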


\begin{proof}
By assumption, $r^{(k)}$ and $U_k$ are $\mathcal F_k$-measurable, while
conditionally on $\mathcal F_k$ the columns $(\mathbf X_j)_{j\in U_k}$ are
independent Gaussian vectors with law $\mathcal N(0,I_n/n)$. Apply
\Cref{lem:conditional-gaussian-bound} with
$\mathcal F=\mathcal F_k$, $v=r^{(k)}$ and $J=U_k$.
Since $g^{(k)}=\mathbf X^\top r^{(k)}$, this gives the stated threshold.
The unconditional failure probability is at most $\eta$ by taking
expectations.
\end{proof}

A fixed-saddle calibration cannot be applied directly after conditioning on a
trajectory selected using the null columns. We therefore uniformize the
previous bound over all contaminated signed faces. For each active
false-positive support $B\subseteq F_0$, all $2^{|B|}$ sign patterns must
be considered. Hence there are at most $2^m3^{f_0}$ such faces.
For a contaminated saddle $\beta^{A,B}$, define
\[
U_{A,B}^{\rm new}
\coloneqq
([d]\setminus S_{\rm init})
\setminus
\operatorname{supp}(\beta^{A,B}).
\]
We use the threshold
\begin{equation}\label{eq:Gnull}
G_{\rm null}(A,B;\eta)
\coloneqq
\frac{\|r^{A,B}\|_2}{\sqrt n}
\sqrt{
    2\log\left(
        \frac{
            2|U_{A,B}^{\rm new}|\,2^m3^{f_0}
        }{\eta}
    \right)
}.
\end{equation}
Since $|U_{A,B}^{\rm new}|\le d$,
\[
\log\left(
    \frac{
        2|U_{A,B}^{\rm new}|\,2^m3^{f_0}
    }{\eta}
\right)
\lesssim
m+f_0+\log\frac d\eta.
\]

\begin{remark}[Oracle calibration of the threshold]
\label{rem:oracle-threshold}
The factor $2^m3^{f_0}$ depends on the oracle quantities $m$ and $f_0$,
so the sharp theoretical threshold is not fully measurable from the
algorithmic history. It can be replaced by $3^B$ for any known upper
bound $B\ge m+f_0$, since
\[
2^m3^{f_0}\le3^{m+f_0}\le3^B.
\]
The resulting proof is unchanged, with the face-complexity term
$m+f_0$ replaced by $B$, up to universal constants.
\end{remark}

\subsection{Uniform separation around the null threshold}

The following lemma is the key event used by the stopping rule.

\begin{lemma}[Uniform separation around the null threshold]
\label{lem:uniform-threshold-separation}
Assume the sample-size requirement of
\Cref{prop:gaussian-verification-recovery-conditions}\textnormal{(iii)}, with
$\log(d/\delta)$ replaced by $\log(d/(\delta\eta))$. Then, with probability
at least $1-\delta-\eta$, simultaneously for every $A\subseteq R_0$, $B\subseteq F_0$, and every sign pattern on the active coordinates in $B$,
\[
\max_{j\in U_{A,B}^{\rm new}\cap(S^\star)^c}
|g_j^{A,B}|
\le
G_{\rm null}(A,B;\eta).
\]
Moreover, if $A\subsetneq R_0$, then
\[
G_{\rm null}(A,B;\eta)
<
\min_{i\in R_0\setminus A}
\frac{|g_i^{A,B}|}{d_i}.
\]
Hence, before recovery, every genuinely new true proposal lies above the
null threshold, whereas after recovery every genuinely new null proposal
lies below it.
\end{lemma}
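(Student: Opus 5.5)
The plan has two parts: a uniform upper bound on the null gradients at the threshold, and a lower bound on the true gradients above the threshold. Both are carried out face by face and combined with a union bound over the at most $2^m3^{f_0}$ contaminated signed faces, working on the stability event of Condition~(C2).

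\emph{First claim (null coordinates below the threshold).} Fix $A\subseteq R_0$, $B\subseteq F_0$ and a sign pattern on $B$, and use the reduction from the proof of \Cref{prop:gaussian-verification-recovery-conditions}(iii). On (C2), the saddle $\beta^{A,B}$ lies in the relative interior of the reduced face $G_0\cup A\cup D$. Hence, by \Cref{lem:contaminated-residual-representation}, $r^{A,B}$ is measurable with respect to $\mathcal F\coloneqq\sigma(\mathbf X_{S^\star\cup F_0},\varepsilon)$. This uses the fact that, by (C2), the face is determined by $\mathbf X_{S^\star\cup F_0}$ and $\varepsilon$ alone.

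The set $U_{A,B}^{\rm new}\cap(S^\star)^c$ is contained in $(S^\star)^c\setminus F_0$, whose columns are, conditionally on $\mathcal F$, i.i.d.\ $\mathcal N(0,I_n/n)$. I will apply \Cref{lem:fixed-saddle-null-calibration} (equivalently \Cref{lem:conditional-gaussian-bound}) with $J=U_{A,B}^{\rm new}\cap(S^\star)^c$ and level $\eta/(2^m3^{f_0})$. Since $|J|\le |U^{\rm new}_{A,B}|$, the resulting threshold is at most $G_{\rm null}(A,B;\eta)$. A union bound over the faces then gives the first claim with failure probability $\eta$.

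The point needing care is that $U^{\rm new}_{A,B}$ and the face labels must be $\mathcal F$-measurable. This holds because the saddle on a fixed face is a function of $(\mathbf X_{S^\star\cup F_0},\varepsilon)$, since its support lies in $S^\star\cup F_0$. I therefore index the union bound by deterministic faces rather than by visited ones, which avoids any conditioning on the trajectory.

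\emph{Second claim (true gradients above the threshold).} For $A\subsetneq R_0$, I combine \Cref{lem:contaminated-residual-norm} and \Cref{lem:one-coordinate-true-gradient} on the reduced face, with $\nu\coloneqq\nu_{H,D}\ge n-s-f_0$, at face-wise level $\delta/(2^m3^{f_0}m)$. The residual bound gives
\[
G_{\rm null}\le C\frac{\sqrt{\nu}}{n}\bigl(\|\beta^\star_{R}\|_2+\sigma\bigr)\sqrt{m+f_0+\log\tfrac{d}{\eta\delta}},
\]
while for each $i\in R$,
\[
\frac{|g_i^{A,B}|}{d_i}\ge \frac{\nu}{4n}\beta_{\min}(R)-C\frac{\sqrt\nu}{n}\bigl(\|\beta^\star_R\|_2+\sigma\bigr)\sqrt{m+f_0+\log\tfrac{m}{\delta}}.
\]
The required strict inequality then reduces to
\[
\sqrt{\nu}\,\beta_{\min}(R)\gtrsim \bigl(\|\beta^\star_R\|_2+\sigma\bigr)\sqrt{m+f_0+\log\tfrac{d}{\delta\eta}}.
\]
This holds once $n-s-f_0$ exceeds $C\,(\|\beta^\star_R\|_2+\sigma)^2/\beta_{\min}(R)^2$ times $(m+f_0+\log(d/(\delta\eta)))$, which is exactly the modified condition of \Cref{prop:gaussian-verification-recovery-conditions}(iii). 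A union bound over faces and over $i\in R$ gives failure probability $\delta$, and combining with the first claim yields $1-\delta-\eta$.

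The final sentence of the statement follows by reading the two displays at the proposed index. Before recovery, a new true proposal $i\in R_0\setminus A$ has $|g_i|\ge|g_i|/d_i>G_{\rm null}$. After recovery, a proposal in $U^{\rm new}\cap(S^\star)^c$ is at most $G_{\rm null}$.

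I expect the main obstacle to be the measurability issue in the first claim. To handle it I will index everything by deterministic signed faces, so the trajectory never enters the conditioning. The extra cost of this is exactly the $\log(2^m3^{f_0})$ factor already built into $G_{\rm null}$.
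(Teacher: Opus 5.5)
Your proposal is correct and follows essentially the same route as the paper. You work on (C2) with the reduced face $G_0\cup A\cup D$, compare the face-uniform residual-norm upper bound (which controls $G_{\rm null}$) against the one-coordinate true-gradient lower bound, and take the union over deterministic contaminated faces, so that the modified condition of \Cref{prop:gaussian-verification-recovery-conditions}(iii) gives the strict separation.

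You are more explicit than the paper on the null-side inequality, which its written proof leaves implicit in the $2^m3^{f_0}$ calibration of $G_{\rm null}$. Your per-face level $\eta/(2^m3^{f_0})$, combined with the $\sigma(\mathbf X_{S^\star\cup F_0},\varepsilon)$-measurability of each fixed-face residual, is exactly what that calibration is designed for. That measurability does not actually need (C2).
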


\begin{proof}
We construct two uniform events, one for the true-gradient lower bound
and one for the residual-norm upper bound, each with failure probability
at most $\delta/2$.

Work on Condition~(C2). Fix a saddle before full recovery, indexed by
$A\subsetneq R_0$, $B\subseteq F_0$, and the prescribed sign pattern on
$B$, and set
\[
    H\coloneqq G_0\cup A,
    \qquad
    R\coloneqq R_0\setminus A,
    \qquad
    D\coloneqq
    \{j\in B:\beta_j^{A,B}\neq0\}.
\]
By Condition~(C2), every coordinate in $H$ is nonzero, while every
coordinate in $D$ is nonzero by definition. Moreover,
$\beta^{A,B}$ also minimizes $L$ over the reduced signed face on
$H\cup D$, and lies in its relative interior. Hence the fixed-face
estimates apply on this reduced face. Set
\[
    \nu
    \coloneqq
    n-|H|-|D|
    =
    n-|G_0|-|A|-|D|.
\]

Using the true-gradient estimate uniformly over the deterministic reduced
signed faces, with total failure probability at most $\delta/2$, and
absorbing $d_i\in\{1,2\}$ into the universal constants, we obtain
\[
    \min_{i\in R}
    \frac{|g_i^{A,B}|}{d_i}
    \ge
    c\frac{\nu}{n}\beta_{\min}(R).
\]
Here the residual and gradient at the original saddle are unchanged when
the zero coordinates in $B\setminus D$ are removed from the face.

For the same deterministic reduced face, the projected residual is
\[
    \rho^{H,D}
    =
    P_{H\cup D}^\perp
    \left(
        X_R\beta_R^\star
        +\frac{\sigma}{\sqrt n}\varepsilon
    \right).
\]
Gaussian norm concentration, followed by the same uniformization over
the deterministic reduced signed faces, gives with total failure
probability at most $\delta/2$,
\[
    \|r^{A,B}\|_2
    =
    \|\rho^{H,D}\|_2
    \le
    C
    \sqrt{\frac{\nu}{n}}
    \left(
        \|\beta_R^\star\|_2+\sigma
    \right).
\]
Thus, on the intersection of these two events, which has probability at
least $1-\delta$, both estimates hold simultaneously at every saddle for
which Condition~(C2) holds.

Since $|U_k^{\rm new}|\le d$, we therefore have
\[
\begin{aligned}
    G_{null} (k,\eta_k;U_k^{\rm new})
    &=
    \frac{\|r^{A,B}\|_2}{\sqrt n}
    \sqrt{
        2\log\!\left(
            \frac{2|U_k^{\rm new}|}{\eta_k}
        \right)
    }
\\
    &\le
    C
    \frac{\sqrt{\nu}}{n}
    \left(
        \|\beta_R^\star\|_2+\sigma
    \right)
    \sqrt{\log\frac{2d}{\eta_k}}.
\end{aligned}
\]
Since $\eta_k\ge\eta_{\min}$, the strengthened sample-size assumption,
with
\[
    \log\frac d\delta
    \quad\text{replaced by}\quad
    \log\frac{d}{\delta\eta_{\min}},
\]
implies, after increasing the universal constant,
\[
    c\frac{\nu}{n}\beta_{\min}(R)
    >
    C
    \frac{\sqrt{\nu}}{n}
    \left(
        \|\beta_R^\star\|_2+\sigma
    \right)
    \sqrt{\log\frac{2d}{\eta_k}}.
\]
Hence
\[
    \min_{i\in R_0\setminus A}
    \frac{|g_i^{A,B}|}{d_i}
    >
    G_{null}(k,\eta_k;U_k^{\rm new}).
\]

Finally, a genuinely new true proposal belongs to $M_0$, and therefore
has $d_{j_{k+1}}=1$. Consequently,
\[
    |g_{j_{k+1}}^{(k)}|
    >
    G_{null}(k,\eta_k;U_k^{\rm new}),
\]
which proves the claim.
\end{proof}

\subsection{The non-stopping rule}

We now give the precise stopping rule used in \Cref{thm:main}. At saddle
$\beta^{(k)}$, let $j_{k+1}$ be the coordinate proposed by the next
saddle-to-saddle hitting time and set
\[
U_k^{\rm new}
\coloneqq
([d]\setminus S_{\rm init})\setminus S_k.
\]
The threshold evaluated at the current saddle is
\[
G_{\rm null}(k,\eta)
\coloneqq
\frac{\|r^{(k)}\|_2}{\sqrt n}
\sqrt{
    2\log\left(
        \frac{
            2|U_k^{\rm new}|\,2^m3^{f_0}
        }{\eta}
    \right)
}.
\]

Coordinates inherited from the initialization are declared non-stopping. More
precisely, if $j_{k+1}\in S_{\rm init}$, the trajectory is continued without
testing its gradient. If $j_{k+1}\notin S_{\rm init}$, the proposal is
accepted when
\[
|g_{j_{k+1}}^{(k)}|
>
G_{\rm null}(k,\eta),
\]
and the trajectory is stopped otherwise.

The exclusion of $S_{\rm init}$ from the stopping test is necessary because
an inherited coordinate may be deactivated while its dual variable remains
arbitrarily close to the boundary. Its next hitting time
\[
\Delta\tau_{k,j}
=
\frac{
    1-\operatorname{sign}(g_j^{(k)})q_j^{(k)}
}{
    |g_j^{(k)}|
}
\]
can then be arbitrarily small even when $|g_j^{(k)}|$ is compatible with
noise. Such activation--deactivation cycles should therefore not trigger
stopping.

Since the event of
\Cref{lem:uniform-threshold-separation} is simultaneous over all contaminated
faces, no additional union bound over tests, visited saddles, or repeated
visits to the same face is needed.

\begin{proposition}[Validity of the null-gradient stopping rule]
\label{prop:null-gradient-stopping-validity}
Assume
Conditions (C1)--(C3)
and the sample-size requirement of
\Cref{lem:uniform-threshold-separation}. Then, with probability at least
$1-\delta-\eta$, the stopping rule reaches full recovery and rejects the
first genuinely new null proposal. Consequently,
\[
S^\star
\subseteq
S_{\rm final}
\subseteq
S^\star\cup F_0.
\]
\end{proposition}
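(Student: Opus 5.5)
We work on the intersection of the event on which Conditions (C1)--(C3) hold and the uniform event of \Cref{lem:uniform-threshold-separation}; a union bound gives probability at least $1-\delta-\eta$, with the (C1)--(C3) failure already absorbed into $\delta$ after rescaling constants. On this event the analysis is purely deterministic along the S2S path. The point of this choice is that \Cref{lem:uniform-threshold-separation} is simultaneous over all contaminated signed faces $(A,B,\text{signs})$. Its conclusions therefore apply at whatever data-dependent saddle the trajectory visits, with no further conditioning.

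\textbf{First phase: before full recovery.} By (C1) and (C2), every saddle visited before full recovery has the form $\beta^{A,B}$ with $A\subsetneq R_0$. We follow the induction of \Cref{lem:pathwise-recovery}. By (C3) and \Cref{lem:effective-score-hitting-times}, every proposal $j_{k+1}$ lies either in $S_{\rm init}$ or in $R_0\setminus A$; in particular it is never a new null coordinate. Proposals in $S_{\rm init}$ are continued by definition of the rule. This covers inherited false positives in $F_0$, coordinates in $W_0$, and coordinates in $G_0$. The remaining proposals are genuinely new true coordinates $j_{k+1}\in M_0\setminus A$, for which $d_{j_{k+1}}=1$. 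For these, the second inequality of \Cref{lem:uniform-threshold-separation} gives $|g^{(k)}_{j_{k+1}}|\ge \min_{i\in R_0\setminus A}|g_i^{A,B}|/d_i> G_{\rm null}(A,B;\eta)$. We must check that this threshold coincides with the rule's $G_{\rm null}(k,\eta)$. This holds because $U_k^{\rm new}=U^{\rm new}_{A,B}$ and $r^{(k)}=r^{A,B}$ at that saddle. Hence the rule never stops before $\tau_{\rm oracle}$, and by \Cref{lem:pathwise-recovery} this time is finite.

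\textbf{Second phase: after recovery.} Once $A=R_0$, every subsequent saddle still has the form $\beta^{R_0,B}$. Indeed, (C2) with $A=R_0$ keeps all of $S^\star$ active with correct signs, so the only possible changes before a new null coordinate arrives are activations or deactivations within $F_0$. These are proposals in $S_{\rm init}$ and are continued. Note that, unlike (C3), the first inequality of \Cref{lem:uniform-threshold-separation} is stated for all $A\subseteq R_0$, including $A=R_0$. So the first proposal outside $S^\star\cup S_{\rm init}$ satisfies $|g_{j_{k+1}}^{(k)}|\le G_{\rm null}$ and is rejected, which stops the trajectory. If no such proposal ever occurs, the path terminates at a stationary point, again with support in $S^\star\cup F_0$. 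In either case the final support satisfies $S^\star\subseteq S_{\rm final}\subseteq S^\star\cup F_0$.

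\textbf{Main obstacle.} The delicate step is justifying that, at every visited saddle, the rule's threshold and the face indexing of the uniform lemma coincide despite the data dependence of the path. Two issues arise. First, the sign patterns on $B$ can change during activation--deactivation cycles; these are covered because the lemma takes a union over all $2^{|B|}$ patterns. Second, a zero coordinate in $B$ may sit in the face; this is handled by the reduced-face argument already used in (C3). I would also check that the strict inequality in the tie case is respected, meaning a true coordinate arriving exactly with threshold-equal gradient cannot occur. This follows from the strict inequality in \Cref{lem:uniform-threshold-separation}.
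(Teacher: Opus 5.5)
Your proposal is correct and follows essentially the same route as the paper. On the uniform event of the threshold-separation lemma, proposals in $S_{\rm init}$ are continued by definition, and genuinely new true proposals (in $M_0$, with $d_i=1$) clear $G_{\rm null}$ by the second inequality; after recovery, the first new null proposal on a face $(R_0,B)$ falls below $G_{\rm null}$ by the first inequality. Your extra checks (that $U_k^{\rm new}=U^{\rm new}_{A,B}$ and $r^{(k)}=r^{A,B}$, so the rule's threshold is the lemma's, and the case where the path terminates without any null proposal) are harmless refinements that the paper leaves implicit.
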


\begin{proof}
By
Condition (C3) and \Cref{lem:effective-score-hitting-times},
every effective arrival before full recovery belongs to $R_0$ and enters
with the correct sign. Work on the event of
\Cref{lem:uniform-threshold-separation}.

A proposal in $W_0\subseteq S_{\rm init}$ is non-stopping by definition.
A genuinely new true proposal therefore belongs to $M_0$, and the second
inequality of \Cref{lem:uniform-threshold-separation} gives
\[
|g_{j_{k+1}}^{(k)}|
>
G_{\rm null}(k,\eta).
\]
Thus the rule cannot stop before full recovery.

After recovery, the first genuinely new null proposal is evaluated on a face $(R_0,B)$ for some $B\subseteq F_0$. The first inequality of
\Cref{lem:uniform-threshold-separation} gives
\[
|g_{j_{k+1}}^{(k)}|
\le
G_{\rm null}(k,\eta),
\]
so this proposal is rejected. At that time every true coordinate is active,
and any additional active coordinate belongs to $F_0$, which proves the
support inclusion.
\end{proof}

\subsection{Proof of Theorem~\ref{thm:main}}

\begin{proof}
Assume \Cref{ass:balanced-low-noise}. Apply
\Cref{prop:balanced-recovery-event} with failure probability $\delta/3$.
Under the sample-size condition of the theorem, the three recovery conditions
therefore hold simultaneously with probability at least $1-\delta/3$.

Next apply \Cref{lem:uniform-threshold-separation} with recovery failure
level $\delta/3$ and null-threshold failure level $\eta=\delta/3$. Since
\[
\log\left(
    \frac{
        2d\,2^m3^{f_0}
    }{\eta}
\right)
\lesssim
m+f_0+\log\frac d\delta,
\]
and the true-gradient comparison already carries a face-wise confidence
factor, the required sample size is, after increasing $C_L$,
\[
n
\ge
C_L\left(
    s+f_0+m^2+mf_0
    +m\log\frac{d}{\delta^2}
\right).
\]
On the intersection of these events,
\Cref{prop:null-gradient-stopping-validity} yields
\[
S^\star
\subseteq
\operatorname{supp}(\beta^\circ(\tau_{\rm stop}))
\subseteq
S^\star\cup F_0.
\]

Finally, apply \Cref{lem:recovered-support-estimation} with failure
probability $\delta/3$. It gives
\[
\|\beta^\circ(\tau_{\rm stop})-\beta^\star\|_2
\le
C\sigma
\sqrt{
    \frac{
        s+f_0+\log(2/\delta)
    }{n}
}.
\]
The Gram requirement of
\Cref{lem:recovered-support-estimation} is implied by the sample-size
condition above after increasing $C_L$ if necessary. A union bound over the
three events gives the result with probability at least $1-\delta$.
\end{proof}

\subsection{Using an upper bound on the path complexity}

For completeness, the stopping rule does not require exact knowledge of
$m+f_0$. If $B\ge m+f_0$ is any known upper bound, one may replace
$2^m3^{f_0}$ by $3^B$ in $G_{\rm null}$. The proof of
\Cref{lem:uniform-threshold-separation} is unchanged since
\[
2^m3^{f_0}\le3^{m+f_0}\le3^B.
\]
In the balanced regime, this only changes universal constants and gives,
up to constants,
\[
n
\gtrsim
s+f_0+mB+m\log\frac d\delta.
\]
The universal choice $B=d$ is fully observable but leads to the
conservative requirement $n\gtrsim md$.
\section{Proof of the Weighted-Lasso Benchmark}
\label{app:wl-proofs}

We write
$m_{\rm miss}:=|S^\star\setminus S_{\rm init}|$,
$a_0:=|S^\star\cap S_{\rm init}|$ and
$s_\alpha:=m_{\rm miss}+a_0/\alpha^2$.
Let $z:=X^\top(\sigma\varepsilon/\sqrt n)$ and
\[
L_{\alpha,\delta}
:=
\max\left\{
    \alpha\sqrt{\log\frac{4|S_{\rm init}|}{\delta}},
    \sqrt{\log\frac{4|S_{\rm init}^c|}{\delta}}
\right\}.
\]
We also write for any $\beta\in\R^d$, $\|\beta\|_{1,\alpha}\coloneqq \|\beta_{S^c_{\rm init}} \|_1 + \frac{1}{\alpha} \| \beta_{S_{\rm init}} \|_1 $.

\subsection{Weighted geometry and estimation bounds}

Define
$\mathcal C_\alpha(S^\star):=
\{v:\|v_{(S^\star)^c}\|_{1,\alpha}
\le3\|v_{S^\star}\|_{1,\alpha}\}$ and
\[
\phi_\alpha^2
:=
\inf_{v\in\mathcal C_\alpha(S^\star)\setminus\{0\}}
\frac{s_\alpha\|Xv\|_2^2}
     {\|v_{S^\star}\|_{1,\alpha}^2},
\qquad
\kappa_\alpha^2
:=
\inf_{v\in\mathcal C_\alpha(S^\star)\setminus\{0\}}
\frac{\|Xv\|_2^2}{\|v\|_2^2}.
\]

We first record a sufficient Gaussian condition ensuring that these
two quantities are bounded away from zero.

\begin{lemma}[Weighted geometry]
\label{lem:weighted-geometry}
There exists a universal constant $C>0$ such that, for every
$\eta\in(0,1)$, if
\[
n
\ge
C\left[s + s_\alpha \max\left(\alpha^2\log(\max(1,|S_{\rm init}\setminus S^\star|)), \log(\max(1,|(S_{\rm init}\cup S^\star)^c|))\right)
    +
    \log\frac1\eta
\right],
\]
then, with probability at least $1-\eta$,
$\kappa_\alpha^2\ge1/4$ and $\phi_\alpha^2\ge1/4$.
\end{lemma}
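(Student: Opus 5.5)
The plan is a Gordon escape-through-the-mesh argument for the weighted cone, combined with an elementary comparison between $\|v_{S^\star}\|_{1,\alpha}$ and $\|v\|_2$.

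\textbf{Reduction to a lower bound on $\kappa_\alpha$.} First reduce the $\phi_\alpha$ bound to the $\kappa_\alpha$ bound. By Cauchy--Schwarz with the weights,
\[
\|v_{S^\star}\|_{1,\alpha}^2\le \Bigl(m_{\rm miss}+\tfrac{a_0}{\alpha^2}\Bigr)\|v_{S^\star}\|_2^2=s_\alpha\|v_{S^\star}\|_2^2\le s_\alpha\|v\|_2^2 .
\]
Hence $\phi_\alpha^2\ge\kappa_\alpha^2$. It therefore suffices to show $\inf_{v\in\mathcal C_\alpha(S^\star),\|v\|_2=1}\|Xv\|_2\ge1/2$.

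\textbf{Gordon's inequality.} Since $X$ has i.i.d.\ $\mathcal N(0,1/n)$ entries, $\sqrt n X$ is standard Gaussian. Gordon's escape theorem, together with Gaussian concentration of the Lipschitz function $\inf\|Xv\|$, gives with probability $1-\eta$
\[
\inf_{v\in \mathcal C_\alpha\cap S^{d-1}}\|Xv\|_2\ge \frac{\sqrt{n-1}-w(\mathcal C_\alpha\cap S^{d-1})-\sqrt{2\log(1/\eta)}}{\sqrt n}.
\]
So the lemma follows once the Gaussian width satisfies $w^2\lesssim s+s_\alpha\max(\alpha^2\log|S_{\rm init}\setminus S^\star|,\ \log|(S_{\rm init}\cup S^\star)^c|)$. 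Then taking $C$ large makes the right side at least $1/2$, giving $\kappa_\alpha^2\ge1/4$.

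\textbf{Bounding the width.} Let $g\sim\mathcal N(0,I_d)$ and $v\in\mathcal C_\alpha\cap S^{d-1}$. Split $\langle g,v\rangle=\langle g_{S^\star},v_{S^\star}\rangle+\langle g_{(S^\star)^c},v_{(S^\star)^c}\rangle$. The first term is at most $\|g_{S^\star}\|_2$, of order $\sqrt s$. For the second, use the duality of the weighted norm: the dual of $\|\cdot\|_{1,\alpha}$ on $(S^\star)^c$ is $\max\bigl(\|g_{(S_{\rm init}\cup S^\star)^c}\|_\infty,\ \alpha\|g_{S_{\rm init}\setminus S^\star}\|_\infty\bigr)$. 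This gives
\[
\langle g_{(S^\star)^c},v_{(S^\star)^c}\rangle\le \|v_{(S^\star)^c}\|_{1,\alpha}\cdot \max\bigl(\|g_{(S_{\rm init}\cup S^\star)^c}\|_\infty,\ \alpha\|g_{S_{\rm init}\setminus S^\star}\|_\infty\bigr).
\]
The cone constraint and the reduction step give $\|v_{(S^\star)^c}\|_{1,\alpha}\le3\|v_{S^\star}\|_{1,\alpha}\le 3\sqrt{s_\alpha}$. Taking expectations, with $\mathbb E\|g_J\|_\infty\lesssim\sqrt{\log\max(1,|J|)}$ (the case $|J|\le1$ is handled by the $\max(1,\cdot)$ and absorbed into constants), yields
\[
w\lesssim\sqrt s+\sqrt{s_\alpha}\,\max\Bigl(\alpha\sqrt{\log|S_{\rm init}\setminus S^\star|},\ \sqrt{\log|(S_{\rm init}\cup S^\star)^c|}\Bigr).
\]
Squaring this matches the stated sample-size condition.

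\textbf{Expected obstacle.} The main care is in the Gordon/concentration step: one must use the version with $\sqrt{n-1}$ (or $\mathbb E\|h\|_2$) and take the infimum over a closed, non-convex but scale-invariant set. The cone is in fact convex, so the standard statement applies directly. The remaining steps are routine.
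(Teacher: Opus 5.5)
Your proposal has the same skeleton as the paper's proof. You reduce via Cauchy--Schwarz to $\phi_\alpha^2\ge\kappa_\alpha^2$, then apply Gordon's escape-through-a-mesh bound on $\mathcal C_\alpha(S^\star)\cap\mathbb S^{d-1}$, so everything hinges on the Gaussian width. The difference is in how that width is bounded.

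The paper rescales the weights to $w_j\in\{3,3\alpha,1,\alpha\}$ and identifies $\mathcal C_\alpha(S^\star)$ with the cone $Z_{w,S^\star}$ of \citet{bah2016sample}. It then invokes their Lemma~3.5, a polar-cone (distance to a scaled subdifferential) estimate with an infimum over $h>0$ and Gaussian tail terms, and chooses $h^2=cM$. You instead split $\langle g,v\rangle$ across $S^\star$ and $(S^\star)^c$ and apply weighted H\"older with the dual norm $\max(\|g_{(S_{\rm init}\cup S^\star)^c}\|_\infty,\alpha\|g_{S_{\rm init}\setminus S^\star}\|_\infty)$. This is exactly the norm $\|\cdot\|_{\infty,\alpha}$ that controls the score in \Cref{lem:weighted-score}, and you combine it with the cone constraint and $\|v_{S^\star}\|_{1,\alpha}\le\sqrt{s_\alpha}$.

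Your route is self-contained, needs no external lemma or tuning of $h$, and makes the effect of the weights transparent. The polar-cone route can, with $h$ tuned to the block sizes, give $\log(d/s)$-type refinements. At the precision of the lemma as stated, both routes give the same order.

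There are two small corrections.

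\textbf{The absorption claim fails for the $\alpha$-weighted block.} Suppose $S_{\rm init}\setminus S^\star=\{k\}$. Your H\"older bound then contains $3\alpha\sqrt{s_\alpha}\,\mathbb E|g_k|$, i.e.\ a term of order $\alpha^2 s_\alpha\ge\alpha^2 m_{\rm miss}$ in $w^2$. This is neither dominated by $s$ nor present in the stated condition, where $\alpha^2\log\max(1,1)=0$. The fix is to treat that coordinate with the $\ell_2$ part: $|g_k v_k|\le|g_k|$ since $\|v\|_2=1$, which costs only $\sqrt{s+1}$. For the unit-weight block your absorption is fine, because $s_\alpha\le s$. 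The paper's own passage from $\log(e|S_{\rm init}\setminus S^\star|)$ to $\log\max(1,\cdot)$ glosses over the same case, so the fix is worth keeping.

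\textbf{$\mathcal C_\alpha(S^\star)$ is not convex.} For $i\in S^\star$, $j\notin S^\star$ and small $c>0$, both $\pm e_i+c e_j$ lie in the cone, but their midpoint $c e_j$ does not. This is harmless, since Gordon's theorem and the Lipschitz concentration of $G\mapsto\inf_{v\in T}\|Gv\|_2$ hold for any closed $T\subseteq\mathbb S^{d-1}$.
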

\begin{proof}
We apply Theorem~3.4 and Lemma~3.5 of
\citet{bah2016sample} to the weighted cone
$\mathcal C_\alpha(S^\star)$.

Multiplying all the weights defining $\|\cdot\|_{1,\alpha}$ by
$\alpha$ does not change the cone. Introduce the auxiliary weights
\[
w_j :=
\begin{cases}
3, & j\in S^\star\cap S_{\rm init},\\
3\alpha, & j\in S^\star\setminus S_{\rm init},\\
1, & j\in S_{\rm init}\setminus S^\star,\\
\alpha, & j\in (S_{\rm init}\cup S^\star)^c.
\end{cases}
\]
Then $w_j\ge1$ and
\[
v\in\mathcal C_\alpha(S^\star)
\quad\Longleftrightarrow\quad
\|v_{(S^\star)^c}\|_{w,1}
\le
\|v_{S^\star}\|_{w,1},
\]
where $\|\cdot\|_{w,1}$ follows the definition of \citet{bah2016sample}.

Hence $\mathcal C_\alpha(S^\star)$ is equal to the cone
$Z_{w,S^\star}$ of \citet[Lemma~3.5]{bah2016sample}.
Moreover,
\[
w(S^\star)
=
\sum_{j\in S^\star}w_j^2
=
9\left(a_0+\alpha^2m_{\rm miss}\right)
=
9\alpha^2s_\alpha .
\]

Lemma~3.5 of \citet{bah2016sample} therefore gives
\begin{align*}
\omega\!\left(
\mathcal C_\alpha(S^\star)\cap\mathbb S^{d-1}
\right)
\le {}&
\sqrt{s}
+
\inf_{h\ge0}
\Bigg[
3h\alpha\sqrt{s_\alpha}
\\
&\quad+
\left(\frac{2}{\pi e}\right)^{1/4}
\left(
\frac{e^{-h^2/2}}{h^2}
|S_{\rm init}\setminus S^\star|
+
\frac{e^{-\alpha^2h^2/2}}{\alpha^2h^2}
|(S_{\rm init}\cup S^\star)^c|
\right)^{1/2}
\Bigg],
\end{align*}
where $\omega\!\left(
\mathcal C_\alpha(S^\star)\cap\mathbb S^{d-1}
\right)$ is the Gaussian width of $\mathcal C_\alpha(S^\star)$.

Let
\[
A:=|S_{\rm init}\setminus S^\star|,
\qquad
B:=|(S_{\rm init}\cup S^\star)^c|.
\]
By Lemma~3.5 of \citet{bah2016sample},
\[
\omega\!\left(
\mathcal C_\alpha(S^\star)\cap\mathbb S^{d-1}
\right)
\le
\sqrt{s}
+
\inf_{h>0}
\left[
3h\alpha\sqrt{s_\alpha}
+
C
\left(
\frac{A e^{-h^2/2}}{h^2}
+
\frac{B e^{-\alpha^2h^2/2}}{\alpha^2h^2}
\right)^{1/2}
\right].
\]

Set
\[
M:=
\max\left\{
\log(1+A),
\frac{1}{\alpha^2}\log (1+eB)
\right\},
\]
with the corresponding term omitted when one of the sets is empty, and take
\[
h^2=cM
\]
for a sufficiently large universal constant $c$.

The first term satisfies
\[
3h\alpha\sqrt{s_\alpha}
=
3\sqrt{c}\,\alpha\sqrt{s_\alpha M}
\lesssim
\sqrt{
s_\alpha
\max\left\{
\alpha^2\log(1+A),
\log(1+B)
\right\}
}.
\]

Moreover, since $M\ge \log(1+A)$,
\[
A e^{-h^2/2}
\le
A e^{-c\log(eA)/2}
=
e^{-c/2}A^{1-c/2}.
\]
Thus, for $c\ge4$,
\[
\frac{A e^{-h^2/2}}{h^2}
\lesssim 1.
\]
Similarly, since $\alpha^2M\ge\log(1+B)$,
\[
B e^{-\alpha^2h^2/2}
\le
B e^{-c\log(eB)/2}
=
e^{-c/2}B^{1-c/2},
\]
and therefore
\[
\frac{B e^{-\alpha^2h^2/2}}{\alpha^2h^2}
\lesssim1.
\]

Hence
\[
\left(
\frac{A e^{-h^2/2}}{h^2}
+
\frac{B e^{-\alpha^2h^2/2}}{\alpha^2h^2}
\right)^{1/2}
\lesssim1.
\]
We conclude that
\[
\omega\!\left(
\mathcal C_\alpha(S^\star)\cap\mathbb S^{d-1}
\right)
\lesssim
\sqrt{s}
+
\sqrt{
s_\alpha
\max\left\{
\alpha^2\log(1+A),
\log(1+B)
\right\}
}.
\]
Using $(a+b+c)^2\lesssim a^2+b^2+c^2$ and absorbing the constant term
into $s$, we obtain
\[
\omega^2\!\left(
\mathcal C_\alpha(S^\star)\cap\mathbb S^{d-1}
\right)
\lesssim
s
+
s_\alpha
\max\left\{
\alpha^2\log(eA),
\log(eB)
\right\}.
\]
Equivalently,
\[
\omega^2\!\left(
\mathcal C_\alpha(S^\star)\cap\mathbb S^{d-1}
\right)
\lesssim
s
+
s_\alpha
\max\left\{
\alpha^2\log\!\bigl(e|S_{\rm init}\setminus S^\star|\bigr),
\log\!\bigl(e|(S_{\rm init}\cup S^\star)^c|\bigr)
\right\}.
\]
Now write $X=A/\sqrt n$, where $A$ has i.i.d. standard Gaussian
entries, and set
\[
T:=\mathcal C_\alpha(S^\star)\cap\mathbb S^{d-1}.
\]
By Gordon's escape-through-a-mesh theorem,
Theorem~3.4 of \citet{bah2016sample}, for every $t>0$,
with probability at least $1-e^{-t^2/2}$,
\[
\inf_{v\in T}\|Av\|_2
\ge
\mathbb E\|g\|_2-\omega(T)-t,
\]
where $g\sim\mathcal N(0,I_n)$. Taking
$t=\sqrt{2\log(1/\eta)}$ therefore gives, with probability at least
$1-\eta$,
\[
\inf_{v\in T}\|Xv\|_2
\ge
\frac{
\mathbb E\|g\|_2
-
\omega(T)
-
\sqrt{2\log(1/\eta)}
}{\sqrt n}.
\]

From the Gaussian-width estimate above, 
\[
\omega(T)^2
\lesssim
s+
s_\alpha
\max\left\{
\alpha^2\log\!\bigl(e|S_{\rm init}\setminus S^\star|\bigr),
\log\!\bigl(e|(S_{\rm init}\cup S^\star)^c|\bigr)
\right\}.
\]
Hence, under the stated sample-size condition,
\[
\omega(T)
+
\sqrt{2\log(1/\eta)}
\le
c\sqrt n
\]
for a sufficiently small universal constant $c>0$, after increasing
the constant in the sample-size bound.

Moreover, for $g\sim\mathcal N(0,I_n)$,
$\mathbb E\|g\|_2\asymp\sqrt n$; in particular,
$\mathbb E\|g\|_2\ge c_0\sqrt n$ for a universal constant $c_0>1/2$.
Choosing the universal constant in the sample-size condition large
enough therefore yields
\[
\mathbb E\|g\|_2
-
\omega(T)
-
\sqrt{2\log(1/\eta)}
\ge
\frac12\sqrt n.
\]
Consequently,
\[
\inf_{v\in T}\|Xv\|_2
\ge
\frac12.
\]
By homogeneity,
\[
\kappa_\alpha^2
=
\inf_{v\in\mathcal C_\alpha(S^\star)\setminus\{0\}}
\frac{\|Xv\|_2^2}{\|v\|_2^2}
=
\inf_{v\in T}\|Xv\|_2^2
\ge
\frac14.
\]
Therefore
\[
\kappa_\alpha^2\ge\frac14.
\]

Finally, by Cauchy--Schwarz,
\[
\|v_{S^\star}\|_{1,\alpha}
\le
\sqrt{
\frac{a_0}{\alpha^2}+m_{\rm miss}
}\,
\|v_{S^\star}\|_2
\le
\sqrt{s_\alpha}\,\|v\|_2.
\]
Consequently,
\[
\phi_\alpha^2
=
\inf_{v\in\mathcal C_\alpha(S^\star)\setminus\{0\}}
\frac{s_\alpha\|Xv\|_2^2}
{\|v_{S^\star}\|_{1,\alpha}^2}
\ge
\kappa_\alpha^2
\ge
\frac14.
\]
In particular, $\phi_\alpha^2\ge1/4$, as claimed.
\end{proof}

We next control the stochastic term in the basic Lasso inequality.
The dual weighted norm is
$\|z\|_{\infty,\alpha}
=\max\{\alpha\max_{j\in S_{\rm init}}|z_j|,
\max_{j\in S_{\rm init}^c}|z_j|\}$.

\begin{lemma}[Weighted Gaussian score]
\label{lem:weighted-score}
For every $\eta\in(0,1)$, if
$n\ge2\log(2/\eta)$, then with probability at least $1-2\eta$,
\[
\|z\|_{\infty,\alpha}
\le
C\frac{\sigma}{\sqrt n}L_{\alpha,\eta}.
\]
\end{lemma}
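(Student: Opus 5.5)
The plan is to condition on the design $X$, treat $z=X^\top(\sigma\varepsilon/\sqrt n)$ as a Gaussian vector, and bound the two blocks of the weighted dual norm separately, each with failure probability at most $\eta$.

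\textbf{Step 1: conditional Gaussianity.} Conditionally on $X$, each coordinate satisfies $z_j\sim\mathcal N(0,\sigma^2\|\mathbf X_j\|_2^2/n)$, since $\varepsilon$ is independent of $X$. Hence it suffices to control two things: the column norms $\|\mathbf X_j\|_2$, and Gaussian maxima with a given variance.

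\textbf{Step 2: column norms.} Each $n\|\mathbf X_j\|_2^2$ is $\chi^2_n$. Using the Laurent--Massart upper tail, $\|\mathbf X_j\|_2^2\le 1+2\sqrt{t/n}+2t/n$ with probability at least $1-e^{-t}$. A uniform bound over all $d$ columns would cost a $\log d$ term inside $n$, which the hypothesis $n\ge 2\log(2/\eta)$ does not provide. I would therefore avoid a union bound over column norms.

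\textbf{Step 3: unconditional tails.} Instead, integrate out $X$ directly. Write $z_j=\mathbf X_j^\top w$ with $w=\sigma\varepsilon/\sqrt n$. Conditionally on $\varepsilon$, the variables $(z_j)_j$ are i.i.d.\ $\mathcal N(0,\|w\|_2^2/n)$. This is exactly \Cref{lem:conditional-gaussian-bound} with $\mathcal F=\sigma(\varepsilon)$ and $v=w$.
\begin{itemize}
\item Apply it to $J=S_{\rm init}$ at level $\eta/2$.
\item Apply it to $J=S_{\rm init}^c$ at level $\eta/2$.
\end{itemize}
This gives, with probability at least $1-\eta$,
\[
\max_{j\in S_{\rm init}}|z_j|\le\frac{\|w\|_2}{\sqrt n}\sqrt{2\log\frac{4|S_{\rm init}|}{\eta}},\qquad
\max_{j\in S_{\rm init}^c}|z_j|\le\frac{\|w\|_2}{\sqrt n}\sqrt{2\log\frac{4|S_{\rm init}^c|}{\eta}}.
\]

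\textbf{Step 4: the noise norm.} Note that $\|w\|_2^2=\sigma^2\chi^2_n/n$. The Laurent--Massart bound with $t=\log(1/\eta)$ gives $\|w\|_2^2\le\sigma^2(1+2\sqrt{t/n}+2t/n)$ with probability at least $1-\eta$. Under $n\ge 2\log(2/\eta)\ge t$, the right-hand side is at most $5\sigma^2$.

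\textbf{Step 5: combine.} On the intersection of the two events, which has probability at least $1-2\eta$, multiply the $S_{\rm init}$ bound by $\alpha$ and take the maximum of the two blocks. This yields $\|z\|_{\infty,\alpha}\le C\frac{\sigma}{\sqrt n}L_{\alpha,\eta}$ with a universal $C$. The constant factor $4$ inside the logarithms, and $\sqrt 2\cdot\sqrt5$, are absorbed into $C$.

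The only delicate point is the one in Step 2: making sure no $\log d$ dependence leaks into the sample-size requirement. Conditioning on $\varepsilon$ rather than on $X$ handles this cleanly, since the design columns are then exactly Gaussian. After that, the remaining work is routine bookkeeping of constants.
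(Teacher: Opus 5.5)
Your proof is correct and is essentially the paper's: condition on $\varepsilon$ so the $z_j$ are independent centered Gaussians, take separate union bounds over $S_{\rm init}$ and $S_{\rm init}^c$, and control $\|\varepsilon\|_2$ by Laurent--Massart using $n\ge 2\log(2/\eta)$. The conditioning on $X$ in Steps 1--2 is an unnecessary detour, and you are right to abandon it.
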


\begin{proof}
Since $\|\varepsilon\|_2^2\sim\chi_n^2$,
\citep{laurentmassart2000adaptive} gives
\[
\mathbb P\left(
\|\varepsilon\|_2^2
>
n+2\sqrt{nx}+2x
\right)
\le e^{-x}.
\]
Taking $x=\log(1/\eta)$, the assumption
$n\ge2\log(2/\eta)$ implies
$\|\varepsilon\|_2\le C\sqrt n$ with probability at least $1-\eta$.

Conditionally on $\varepsilon$, the coordinates
$z_j=(\sigma/\sqrt n)X_j^\top\varepsilon$ are independent centered
Gaussians with variance
$\sigma^2\|\varepsilon\|_2^2/n^2\lesssim\sigma^2/n$.
A Gaussian tail bound and separate union bounds over
$S_{\rm init}$ and $S_{\rm init}^c$ give, with conditional
probability at least $1-\eta$,
\[
\alpha\max_{j\in S_{\rm init}}|z_j|
\lesssim
\frac{\sigma}{\sqrt n}
\alpha\sqrt{\log\frac{2|S_{\rm init}|}{\eta}},
\]
and
\[
\max_{j\in S_{\rm init}^c}|z_j|
\lesssim
\frac{\sigma}{\sqrt n}
\sqrt{\log\frac{2|S_{\rm init}^c|}{\eta}}.
\]
Combining these bounds with the Laurent--Massart event proves the
claim.
\end{proof}

\begin{lemma}[Weighted-Lasso estimation]
\label{lem:weighted-estimation}
On the event $\lambda\ge2\|z\|_{\infty,\alpha}$,
\[
\|\widehat\beta^{\rm WL}-\beta^\star\|_2
\le
\frac{3\sqrt{s_\alpha}\lambda}
     {\kappa_\alpha\phi_\alpha}.
\]
\end{lemma}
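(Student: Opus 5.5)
This is the standard basic-inequality argument for the Lasso, carried out with the weighted norm $\|\cdot\|_{1,\alpha}$ and its dual $\|\cdot\|_{\infty,\alpha}$. Set $h\coloneqq\widehat\beta^{\rm WL}-\beta^\star$.

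\emph{Basic inequality.} Since $\mathbf y=\mathbf X\beta^\star+\sigma\varepsilon/\sqrt n$, optimality of $\widehat\beta^{\rm WL}$ compared with $\beta^\star$ gives
\[
\tfrac12\|\mathbf Xh\|_2^2\le \langle z,h\rangle+\lambda\bigl(\|\beta^\star\|_{1,\alpha}-\|\beta^\star+h\|_{1,\alpha}\bigr).
\]
Weighted Hölder duality gives $\langle z,h\rangle\le\|z\|_{\infty,\alpha}\|h\|_{1,\alpha}\le\frac{\lambda}{2}\|h\|_{1,\alpha}$ on the event considered. The norm $\|\cdot\|_{1,\alpha}$ is coordinatewise separable and $\beta^\star$ vanishes off $S^\star$. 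The triangle inequality on $S^\star$ then yields
\[
\|\beta^\star\|_{1,\alpha}-\|\beta^\star+h\|_{1,\alpha}\le\|h_{S^\star}\|_{1,\alpha}-\|h_{(S^\star)^c}\|_{1,\alpha}.
\]
Splitting $\|h\|_{1,\alpha}=\|h_{S^\star}\|_{1,\alpha}+\|h_{(S^\star)^c}\|_{1,\alpha}$ and combining, we get
\[
\tfrac12\|\mathbf Xh\|_2^2+\tfrac{\lambda}{2}\|h_{(S^\star)^c}\|_{1,\alpha}\le\tfrac{3\lambda}{2}\|h_{S^\star}\|_{1,\alpha}.
\]
In particular $h\in\mathcal C_\alpha(S^\star)$, and $\|\mathbf Xh\|_2^2\le3\lambda\|h_{S^\star}\|_{1,\alpha}$.

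\emph{Using the two constants.} If $h=0$ there is nothing to prove, so assume $h\neq0$. The definition of $\phi_\alpha$ gives $\|h_{S^\star}\|_{1,\alpha}\le\sqrt{s_\alpha}\|\mathbf Xh\|_2/\phi_\alpha$. Substituting this into the previous bound gives $\|\mathbf Xh\|_2\le3\lambda\sqrt{s_\alpha}/\phi_\alpha$. The definition of $\kappa_\alpha$ gives $\|h\|_2\le\|\mathbf Xh\|_2/\kappa_\alpha$. Chaining the two yields $\|h\|_2\le 3\sqrt{s_\alpha}\lambda/(\kappa_\alpha\phi_\alpha)$, which is the claim.

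\emph{The $\sigma=0$ case.} Here $z=0$, the event holds trivially, and $\widehat\beta^{\rm WL}$ is the constrained minimizer. The same argument with $\lambda$ replaced by $0$ in the basic inequality shows that $h\in\ker\mathbf X\cap\mathcal C_\alpha(S^\star)$. If $\kappa_\alpha>0$ this forces $h=0$, consistent with the bound.

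\emph{Obstacles.} There are no real ones. The only points that need care are the decomposability step, which relies on the separability of the weighted norm with $S^\star$ as the decomposition set (not $S_{\rm init}$), and checking that the constant $3$ from the cone matches the cone $\mathcal C_\alpha(S^\star)$ used to define $\phi_\alpha$ and $\kappa_\alpha$.
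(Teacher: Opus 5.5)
Your proof is correct and follows the paper's argument essentially line by line: the basic inequality, weighted H\"older with $\|z\|_{\infty,\alpha}\le\lambda/2$, decomposability over $S^\star$, membership in $\mathcal C_\alpha(S^\star)$, and then the definitions of $\phi_\alpha$ and $\kappa_\alpha$. The only quibble is in your $\sigma=0$ aside. There, cone membership of $h$ comes from comparing the penalty at the constrained minimizer with the feasible point $\beta^\star$, i.e.\ $\|\beta^\star+h\|_{1,\alpha}\le\|\beta^\star\|_{1,\alpha}$ plus decomposability. Setting $\lambda=0$ in the basic inequality would only give $\mathbf Xh=0$, not cone membership. The paper instead handles the noiseless case by letting $\lambda\to0$ in the penalized bound.
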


\begin{proof}
Set $\Delta:=\widehat\beta^{\rm WL}-\beta^\star$.
Optimality of $\widehat\beta^{\rm WL}$ gives
\[
\frac12\|X\Delta\|_2^2
\le
z^\top\Delta
+
\lambda\left(
\|\beta^\star\|_{1,\alpha}
-
\|\beta^\star+\Delta\|_{1,\alpha}
\right).
\]
By weighted Hölder,
$z^\top\Delta\le(\lambda/2)\|\Delta\|_{1,\alpha}$.
Since $\beta^\star$ is supported on $S^\star$, decomposability gives
\[
\|\beta^\star\|_{1,\alpha}
-
\|\beta^\star+\Delta\|_{1,\alpha}
\le
\|\Delta_{S^\star}\|_{1,\alpha}
-
\|\Delta_{(S^\star)^c}\|_{1,\alpha}.
\]
Therefore
\[
\frac12\|X\Delta\|_2^2
\le
\frac{3\lambda}{2}
\|\Delta_{S^\star}\|_{1,\alpha}
-
\frac{\lambda}{2}
\|\Delta_{(S^\star)^c}\|_{1,\alpha}.
\]
Since the left-hand side is nonnegative,
$\|\Delta_{(S^\star)^c}\|_{1,\alpha}
\le3\|\Delta_{S^\star}\|_{1,\alpha}$, hence
$\Delta\in\mathcal C_\alpha(S^\star)$.

By definition of $\phi_\alpha$,
\[
\|\Delta_{S^\star}\|_{1,\alpha}
\le
\frac{\sqrt{s_\alpha}}{\phi_\alpha}
\|X\Delta\|_2.
\]
Substitution in the previous inequality gives
\[
\|X\Delta\|_2
\le
\frac{3\sqrt{s_\alpha}\lambda}{\phi_\alpha}.
\]
Since $\Delta$ belongs to the weighted cone,
$\kappa_\alpha\|\Delta\|_2\le\|X\Delta\|_2$, which proves the result.
\end{proof}

Thus, on the event of \Cref{lem:weighted-geometry} $\phi_\alpha$ and $\kappa_\alpha$ are bounded below
by numerical constants and
$\lambda\ge C\sigma L_{\alpha,\eta}/\sqrt n$,
\[
\|\widehat\beta^{\rm WL}-\beta^\star\|_2
\lesssim
\frac{\sigma}{\sqrt n}
\sqrt{s_\alpha}\,L_{\alpha,\eta}.
\]

\subsection{Exact signed support recovery}

Set $H:=X^\top X$ and
$\rho:=\operatorname{sign}(\beta^\star_{S^\star})$.
The proof uses a primal-dual witness: we impose support $S^\star$ and
signs $\rho$, solve the KKT equations on $S^\star$, and verify strict
dual feasibility on $(S^\star)^c$.

Let $W$ be the diagonal matrix of weights $[W]_{ii} = \frac{1}{\alpha} $ if $i \in S_{init}$, else $1$.

For the recovery bounds define
\[
L_{\alpha,\eta}^{\rm rec}
:=
\max\left\{
\alpha
\sqrt{
\log\frac{2|S_{\rm init}\setminus S^\star|}{\eta}
},
\sqrt{
\log\frac{
2|S_{\rm init}^c\cap(S^\star)^c|
}{\eta}
}
\right\},
\]
with the corresponding term omitted when one of the two sets is empty.

We first control the Gram matrix on the true support.

\begin{lemma}[Gram matrix on $S^\star$]
\label{lem:wl-gram}
There exists a universal constant $C>0$ such that, if
\[
n\ge
C\left(
s+\log\frac2\eta
\right),
\]
then, with probability at least $1-\eta$,
\[
\frac12 I
\preceq
H_{S^\star,S^\star}
\preceq
\frac32 I.
\]
In particular,
$\|H_{S^\star,S^\star}^{-1}\|_{\rm op}\le2$.
\end{lemma}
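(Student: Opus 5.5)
The statement concerns $H_{S^\star,S^\star}=\mathbf X_{S^\star}^\top\mathbf X_{S^\star}$, an $s\times s$ Wishart-type matrix. By the design model, the $n\times s$ matrix $\sqrt n\,\mathbf X_{S^\star}$ has i.i.d.\ standard Gaussian entries. The plan is therefore to reduce the claim to the standard two-sided bound on the extreme singular values of a Gaussian matrix, and then translate that bound into the operator inequalities.

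\textbf{Step 1: singular value bound.} Write $\mathbf X_{S^\star}=G/\sqrt n$ with $G\in\mathbb R^{n\times s}$ having i.i.d.\ $\mathcal N(0,1)$ entries. I will invoke the Davidson--Szarek/Gordon bound (see, e.g., \citet{wainwright2019high}): for every $t>0$, with probability at least $1-2e^{-t^2/2}$,
\[
\sqrt n-\sqrt s-t\le\sigma_{\min}(G)\le\sigma_{\max}(G)\le\sqrt n+\sqrt s+t .
\]
The lower tail is the same Gordon comparison used in the proof of \Cref{lem:weighted-geometry}, applied here to the unit sphere of the coordinate subspace indexed by $S^\star$, whose Gaussian width is at most $\sqrt s$. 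The upper tail follows from Gaussian concentration of the $1$-Lipschitz map $G\mapsto\sigma_{\max}(G)$. I take $t=\sqrt{2\log(2/\eta)}$, so that the failure probability is at most $\eta$.

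\textbf{Step 2: conversion to eigenvalues.} Dividing by $\sqrt n$, the event of Step~1 gives
\[
\sigma_{\min}(\mathbf X_{S^\star}),\ \sigma_{\max}(\mathbf X_{S^\star})\in[1-\epsilon,\,1+\epsilon],\qquad \epsilon:=\frac{\sqrt s+\sqrt{2\log(2/\eta)}}{\sqrt n}.
\]
Under $n\ge C(s+\log(2/\eta))$ with $C$ large enough, we have $\epsilon\le 0.2$. The eigenvalues of $H_{S^\star,S^\star}$ are the squared singular values of $\mathbf X_{S^\star}$, so they lie in $[(1-\epsilon)^2,(1+\epsilon)^2]$. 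With $\epsilon\le0.2$ this gives $(1-\epsilon)^2\ge0.64\ge\tfrac12$ and $(1+\epsilon)^2\le1.44\le\tfrac32$. Hence $\tfrac12 I\preceq H_{S^\star,S^\star}\preceq\tfrac32 I$, and inverting the lower bound yields $\|H_{S^\star,S^\star}^{-1}\|_{\rm op}\le2$.

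There is no real obstacle here. The only care needed is to choose a concentration statement with explicit constants, so that the universal constant $C$ guaranteeing $\epsilon\le0.2$ can be fixed.
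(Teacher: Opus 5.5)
Your proposal is correct and matches the paper's proof: write $\sqrt n\,\mathbf X_{S^\star}$ as a matrix with i.i.d.\ standard Gaussian entries, apply the standard two-sided bounds on its extreme singular values with deviation $t=\sqrt{2\log(2/\eta)}$, and square to obtain the eigenvalue bounds on $H_{S^\star,S^\star}$. The only difference is that you track the constants explicitly (e.g., forcing $\epsilon\le 0.2$), whereas the paper absorbs them into the universal constant $C$.
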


\begin{proof}
The matrix $\sqrt n\,X_{S^\star}$ has i.i.d. standard Gaussian
entries. The usual upper and lower singular-value bounds give
\[
1-C\sqrt{\frac{s}{n}}-C\sqrt{\frac{\log(2/\eta)}{n}}
\le
\lambda_{\min}^{1/2}(H_{S^\star,S^\star})
\]
and the corresponding upper bound for
$\lambda_{\max}^{1/2}(H_{S^\star,S^\star})$, with probability at
least $1-\eta$. Increasing the universal constant in the sample-size
condition gives the claim.
\end{proof}

The next bound controls the deterministic part of the inactive KKT
conditions.

\begin{lemma}[Weighted irrepresentability]
\label{lem:wl-irrepresentability}
On the event of \Cref{lem:wl-gram}, conditionally on $X_{S^\star}$,
with probability at least $1-\eta$,
\[
\left\|
W_{(S^\star)^c}^{-1}
H_{(S^\star)^c,S^\star}
H_{S^\star,S^\star}^{-1}
W_{S^\star}\rho
\right\|_\infty
\le
C\sqrt{\frac{s_\alpha}{n}}\,
L_{\alpha,\eta}^{\rm rec}.
\]
In particular, the left-hand side is at most $1/2$ provided
\[
n
\ge
Cs_\alpha
\left(L_{\alpha,\eta}^{\rm rec}\right)^2.
\]
\end{lemma}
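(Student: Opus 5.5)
The plan is to condition on $X_{S^\star}$ and exploit the independence of the inactive columns. Write $v\coloneqq X_{S^\star}H_{S^\star,S^\star}^{-1}W_{S^\star}\rho\in\mathbb R^n$. This vector is $\sigma(X_{S^\star})$-measurable, and for $j\in(S^\star)^c$ we have $[H_{(S^\star)^c,S^\star}H_{S^\star,S^\star}^{-1}W_{S^\star}\rho]_j=X_j^\top v$. Conditionally on $X_{S^\star}$, the columns $(X_j)_{j\in(S^\star)^c}$ remain i.i.d. $\mathcal N(0,I_n/n)$. Hence \Cref{lem:conditional-gaussian-bound} applies directly. The left-hand side equals
\[
\max\Big\{\alpha\max_{j\in S_{\rm init}\setminus S^\star}|X_j^\top v|,\ \max_{j\in S_{\rm init}^c\cap(S^\star)^c}|X_j^\top v|\Big\},
\]
since $W^{-1}_{jj}=\alpha$ on $S_{\rm init}$ and $1$ otherwise.

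First, I bound $\|v\|_2$ on the Gram event. We have $\|v\|_2^2=\rho^\top W_{S^\star}H_{S^\star,S^\star}^{-1}W_{S^\star}\rho\le\|H_{S^\star,S^\star}^{-1}\|_{\rm op}\|W_{S^\star}\rho\|_2^2$. The last factor is $\|W_{S^\star}\rho\|_2^2=\sum_{i\in S^\star}W_{ii}^2=a_0/\alpha^2+m_{\rm miss}=s_\alpha$, because $|\rho_i|=1$. By \Cref{lem:wl-gram} this gives $\|v\|_2^2\le 2s_\alpha$.

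Second, I apply \Cref{lem:conditional-gaussian-bound} separately to $J_1=S_{\rm init}\setminus S^\star$ and $J_2=S_{\rm init}^c\cap(S^\star)^c$, each at level $\eta/2$, omitting a block if it is empty. On the intersection, with conditional probability at least $1-\eta$, the maximum over $J_1$ multiplied by $\alpha$ is at most $\alpha\sqrt{2s_\alpha/n}\sqrt{2\log(4|J_1|/\eta)}$. The maximum over $J_2$ is at most $\sqrt{2s_\alpha/n}\sqrt{2\log(4|J_2|/\eta)}$. Replacing $4$ by $2$ inside the logarithms only costs a universal constant, since $\log(4x/\eta)\le 2\log(2x/\eta)$ for $x\ge1$ and $\eta<1$. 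Taking the maximum of the two bounds yields $C\sqrt{s_\alpha/n}\,L^{\rm rec}_{\alpha,\eta}$.

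The "in particular" clause follows by requiring $C\sqrt{s_\alpha/n}L^{\rm rec}_{\alpha,\eta}\le1/2$, which is the stated condition on $n$ with an adjusted $C$.

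The only delicate point is measurability: the Gram event must be $X_{S^\star}$-measurable, which it is, so conditioning on it does not disturb the distribution of the inactive columns. I do not expect any real obstacle beyond this bookkeeping and the constant adjustment in the logarithms.
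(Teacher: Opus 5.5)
Your proposal is correct and follows essentially the same route as the paper: set $v=X_{S^\star}H_{S^\star,S^\star}^{-1}W_{S^\star}\rho$, bound $\|v\|_2^2\le 2s_\alpha$ on the Gram event, and apply a conditional Gaussian union bound separately on $S_{\rm init}\setminus S^\star$ (picking up the factor $\alpha$) and on $S_{\rm init}^c\cap(S^\star)^c$. Your explicit $\eta/2$ split and the adjustment of constants inside the logarithms simply spell out what the paper summarizes as ``adjusting numerical constants.''
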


\begin{proof}
Let
$u:=X_{S^\star}
H_{S^\star,S^\star}^{-1}W_{S^\star}\rho$.
On the Gram event,
\[
\|u\|_2^2
=
\rho^\top
W_{S^\star}
H_{S^\star,S^\star}^{-1}
W_{S^\star}\rho
\le
2s_\alpha.
\]
Conditionally on $X_{S^\star}$, every inactive column $X_j$ is
independent of $u$, and
\[
X_j^\top u
\sim
\mathcal N\left(
0,\frac{\|u\|_2^2}{n}
\right).
\]

For $j\in S_{\rm init}\setminus S^\star$, division by the weight
$w_j=\alpha^{-1}$ produces the factor $\alpha$, so a Gaussian union
bound gives
\[
\max_{j\in S_{\rm init}\setminus S^\star}
\frac{|X_j^\top u|}{w_j}
\lesssim
\sqrt{\frac{s_\alpha}{n}}\,
\alpha
\sqrt{
\log\frac{2|S_{\rm init}\setminus S^\star|}{\eta}
}.
\]
For $j\in S_{\rm init}^c\cap(S^\star)^c$, one has $w_j=1$, hence
\[
\max_{j\in S_{\rm init}^c\cap(S^\star)^c}
|X_j^\top u|
\lesssim
\sqrt{\frac{s_\alpha}{n}}
\sqrt{
\log\frac{
2|S_{\rm init}^c\cap(S^\star)^c|
}{\eta}
}.
\]
The two bounds hold simultaneously with probability at least
$1-\eta$, after adjusting numerical constants, and give the claim.
\end{proof}

We now control the stochastic part of the inactive KKT conditions.

\begin{lemma}[Inactive noise]
\label{lem:wl-inactive-noise}
If $n\ge2\log(2/\eta)$, then, with probability at least $1-2\eta$,
\[
\left\|
W_{(S^\star)^c}^{-1}
\left(
z_{(S^\star)^c}
-
H_{(S^\star)^c,S^\star}
H_{S^\star,S^\star}^{-1}
z_{S^\star}
\right)
\right\|_\infty
\le
C\frac{\sigma}{\sqrt n}
L_{\alpha,\eta}^{\rm rec}.
\]
\end{lemma}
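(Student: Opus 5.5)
The plan is to split the inactive noise vector into two pieces and control each with a conditional Gaussian union bound over the two weight blocks of $(S^\star)^c$, namely $S_{\rm init}\setminus S^\star$, where $W^{-1}$ contributes a factor $\alpha$, and $S_{\rm init}^c\cap(S^\star)^c$, where $W^{-1}$ is the identity. Writing $\xi:=\frac{\sigma}{\sqrt n}\varepsilon$, so that $z=X^\top\xi$, we have
\[
z_{(S^\star)^c}-H_{(S^\star)^c,S^\star}H_{S^\star,S^\star}^{-1}z_{S^\star}
=
X_{(S^\star)^c}^\top\bigl(I-X_{S^\star}H_{S^\star,S^\star}^{-1}X_{S^\star}^\top\bigr)\xi
=
X_{(S^\star)^c}^\top P_{S^\star}^\perp\xi .
\]
On the event that $X_{S^\star}$ has full column rank (probability one once $n\ge s$), the middle matrix is exactly the orthogonal projector $P_{S^\star}^\perp$. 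This identity removes any need for the Gram bound of \Cref{lem:wl-gram}.

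Set $v:=P_{S^\star}^\perp\xi$. This vector is measurable with respect to $\sigma(X_{S^\star},\varepsilon)$, and conditionally on that sigma-field the columns $(X_j)_{j\in(S^\star)^c}$ remain i.i.d.\ $\mathcal N(0,I_n/n)$. I will apply \Cref{lem:conditional-gaussian-bound} twice, once with $J=S_{\rm init}\setminus S^\star$ and once with $J=S_{\rm init}^c\cap(S^\star)^c$, each at level $\eta/2$; an empty block is simply skipped. This gives
\[
\max_{j\in J}|X_j^\top v|\le\frac{\|v\|_2}{\sqrt n}\sqrt{2\log(4|J|/\eta)} .
\]
Multiplying the first block by $\alpha$ and taking the maximum of the two blocks produces $\frac{\|v\|_2}{\sqrt n}L^{\rm rec}_{\alpha,\eta}$, up to replacing $2/\eta$ by $4/\eta$ inside the logarithms. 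That replacement is absorbed into $C$, because $\log(4k/\eta)\le 2\log(2k/\eta)$ whenever $k\ge1$ and $\eta<1$.

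It remains to bound $\|v\|_2\le\|\xi\|_2=\frac{\sigma}{\sqrt n}\|\varepsilon\|_2$. As in the proof of \Cref{lem:weighted-score}, the Laurent--Massart bound with $x=\log(1/\eta)$, combined with $n\ge2\log(2/\eta)$, gives $\|\varepsilon\|_2\le C\sqrt n$ with probability at least $1-\eta$. Hence $\|v\|_2\le C\sigma$.

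A union bound over the two events, namely the conditional maxima (total level $\eta$) and the chi-square norm (level $\eta$), yields the claim with probability at least $1-2\eta$. No step here is a real obstacle. The only points needing care are that the projection identity makes $v$ independent of the inactive columns, and that the constants coming from the two-block union bound are absorbed.
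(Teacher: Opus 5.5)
Your proposal is correct and follows essentially the same route as the paper. You rewrite the inactive noise as $\frac{\sigma}{\sqrt n}X_{(S^\star)^c}^\top P_{S^\star}^\perp\varepsilon$, condition on $(X_{S^\star},\varepsilon)$ to apply separate Gaussian union bounds on the two weight blocks, and control $\|\varepsilon\|_2\lesssim\sqrt n$ via Laurent--Massart. Your explicit $\eta/2$ split and the absorption $\log(4k/\eta)\le 2\log(2k/\eta)$ spell out the constant adjustment that the paper leaves implicit.
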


\begin{proof}
Let
$P_{S^\star}^\perp:=
I-X_{S^\star}
H_{S^\star,S^\star}^{-1}
X_{S^\star}^\top$.
Then
\[
z_{(S^\star)^c}
-
H_{(S^\star)^c,S^\star}
H_{S^\star,S^\star}^{-1}z_{S^\star}
=
\frac{\sigma}{\sqrt n}
X_{(S^\star)^c}^\top
P_{S^\star}^\perp\varepsilon.
\]
Conditionally on $(X_{S^\star},\varepsilon)$, every inactive column is
independent of $P_{S^\star}^\perp\varepsilon$. Hence each coordinate
is centered Gaussian with variance at most
$\sigma^2\|\varepsilon\|_2^2/n^2$.

As in \Cref{lem:weighted-score}, \citep{laurentmassart2000adaptive} gives
$\|\varepsilon\|_2\lesssim\sqrt n$ with probability at least
$1-\eta$. On this event, separate Gaussian union bounds over
$S_{\rm init}\setminus S^\star$ and
$S_{\rm init}^c\cap(S^\star)^c$ yield respectively
\[
C\frac{\sigma}{\sqrt n}
\alpha
\sqrt{
\log\frac{2|S_{\rm init}\setminus S^\star|}{\eta}
}
\]
and
\[
C\frac{\sigma}{\sqrt n}
\sqrt{
\log\frac{
2|S_{\rm init}^c\cap(S^\star)^c|
}{\eta}
}.
\]
Taking their maximum proves the result.
\end{proof}

The last ingredient is a beta-min condition ensuring that the active
coefficients keep the correct signs. 
\begin{assumption}[Beta-min condition for weighted-Lasso recovery]
\label{ass:wl-betamin}
There exists a universal constant $C>0$ such that
\[
\beta_{\min}
:=
\min_{j\in S^\star} |\beta_j^\star|
\ge
C\frac{\sigma}{\sqrt n}
\sqrt{\log\frac{2s}{\eta}}
+
C\lambda\sqrt{s_\alpha}.
\]
\end{assumption}

\begin{lemma}[Active-coordinate control]
\label{lem:wl-active}
On the event of \Cref{lem:wl-gram}, with probability at least
$1-\eta$,
\[
\left\|
H_{S^\star,S^\star}^{-1}z_{S^\star}
\right\|_\infty
\le
C\frac{\sigma}{\sqrt n}
\sqrt{\log\frac{2s}{\eta}}.
\]
Moreover,
\[
\left\|
H_{S^\star,S^\star}^{-1}
W_{S^\star}\rho
\right\|_\infty
\le
2\sqrt{s_\alpha}.
\]
Consequently,
\[
\left\|
H_{S^\star,S^\star}^{-1}
\left(
z_{S^\star}
-\lambda W_{S^\star}\rho
\right)
\right\|_\infty
\le
C\frac{\sigma}{\sqrt n}
\sqrt{\log\frac{2s}{\eta}}
+
2\lambda\sqrt{s_\alpha}.
\]
\end{lemma}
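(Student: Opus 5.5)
Both displayed bounds come from writing the relevant vector as a linear functional of Gaussian quantities on the event of \Cref{lem:wl-gram}, followed by a union bound over the $s$ coordinates of $S^\star$. We work throughout on the event of \Cref{lem:wl-gram}, so that $\|H_{S^\star,S^\star}^{-1}\|_{\rm op}\le 2$.

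\textbf{First bound.} Write $H_{S^\star,S^\star}^{-1}z_{S^\star}=\frac{\sigma}{\sqrt n}(X_{S^\star}^\top X_{S^\star})^{-1}X_{S^\star}^\top\varepsilon$. Since $\varepsilon$ is independent of $X$, conditionally on $X_{S^\star}$ this is centered Gaussian with covariance $\frac{\sigma^2}{n}H_{S^\star,S^\star}^{-1}$. Each diagonal entry of that covariance is at most $\frac{\sigma^2}{n}\|H_{S^\star,S^\star}^{-1}\|_{\rm op}\le 2\sigma^2/n$. A Gaussian tail bound applied to each coordinate, with a union bound over the $s$ coordinates of $S^\star$, gives the stated $\ell_\infty$ bound with conditional probability at least $1-\eta$. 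Taking expectations removes the conditioning. Unlike \Cref{lem:weighted-score}, this step needs no chi-square control of $\|\varepsilon\|_2$, because the covariance is exact.

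\textbf{Second bound.} This one is deterministic on the Gram event, using $\|\cdot\|_\infty\le\|\cdot\|_2$. We have
\[
\|H_{S^\star,S^\star}^{-1}W_{S^\star}\rho\|_\infty\le\|H_{S^\star,S^\star}^{-1}\|_{\rm op}\|W_{S^\star}\rho\|_2\le 2\sqrt{a_0/\alpha^2+m_{\rm miss}}=2\sqrt{s_\alpha}.
\]
The middle step uses that $W_{S^\star}\rho$ has entries of modulus $1/\alpha$ on $S^\star\cap S_{\rm init}$ and $1$ on $S^\star\setminus S_{\rm init}$.

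\textbf{Consequence.} The combined bound follows from the triangle inequality applied to the first two bounds.

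The only delicate point is the conditioning in the first bound, namely that the noise is independent of $X_{S^\star}$. This holds by the model assumptions, so the argument is routine and I do not expect any real obstacle.
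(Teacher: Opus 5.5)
Your proposal is correct and follows essentially the same route as the paper. You use the conditional Gaussian law $\mathcal N(0,\tfrac{\sigma^2}{n}H_{S^\star,S^\star}^{-1})$ with diagonal entries at most $2\sigma^2/n$ on the Gram event and a union bound over $S^\star$. You bound $\|H_{S^\star,S^\star}^{-1}\|_{\rm op}\|W_{S^\star}\rho\|_2\le 2\sqrt{s_\alpha}$, and you finish with the triangle inequality.
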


\begin{proof}
Conditionally on $X_{S^\star}$,
\[
H_{S^\star,S^\star}^{-1}z_{S^\star}
\sim
\mathcal N\left(
0,
\frac{\sigma^2}{n}
H_{S^\star,S^\star}^{-1}
\right).
\]
On the Gram event, every diagonal element of
$H_{S^\star,S^\star}^{-1}$ is at most $2$, so a Gaussian union bound
over $S^\star$ gives the first inequality.

For the second,
\[
\left\|
H_{S^\star,S^\star}^{-1}
W_{S^\star}\rho
\right\|_\infty
\le
\|H_{S^\star,S^\star}^{-1}\|_{\rm op}
\|W_{S^\star}\rho\|_2
\le
2\sqrt{s_\alpha}.
\]
The last inequality follows by the triangle inequality.
\end{proof}

We can now verify the KKT conditions of the full weighted-Lasso
problem.

\begin{lemma}[Primal-dual witness]
\label{lem:wl-pdw}
Assume the Gram event of \Cref{lem:wl-gram}, the bound in
\Cref{lem:wl-irrepresentability} is at most $1/2$, and the bound in
\Cref{lem:wl-inactive-noise} is at most $\lambda/4$. Under \cref{ass:wl-betamin}, the weighted Lasso recovers the exact signed support:
\[
\operatorname{supp}(\widehat\beta^{\rm WL})=S^\star,
\qquad
\operatorname{sign}
(\widehat\beta^{\rm WL}_{S^\star})
=
\rho.
\]
\end{lemma}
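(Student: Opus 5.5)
The argument is the standard primal-dual witness construction, adapted to the weighted penalty. Write the weighted-Lasso objective as $\frac12\|y-X\beta\|_2^2+\lambda\|W\beta\|_1$ with $y=X\beta^\star+\sigma\varepsilon/\sqrt n$. Its KKT conditions read $X^\top(y-X\beta)=\lambda W\gamma$ with $\gamma\in\partial\|\beta\|_1$.

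\textbf{Step 1 (witness on $S^\star$).} Set $\check\beta_{(S^\star)^c}=0$ and solve the active equations with the sign vector $\rho$ imposed:
\[
\check\beta_{S^\star}=\beta^\star_{S^\star}+H_{S^\star,S^\star}^{-1}\bigl(z_{S^\star}-\lambda W_{S^\star}\rho\bigr).
\]
The inverse exists on the Gram event of \Cref{lem:wl-gram}.

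\textbf{Step 2 (signs).} \Cref{lem:wl-active} gives the coordinatewise bound
\[
\|\check\beta_{S^\star}-\beta^\star_{S^\star}\|_\infty\le C\frac{\sigma}{\sqrt n}\sqrt{\log\frac{2s}{\eta}}+2\lambda\sqrt{s_\alpha}.
\]
By \Cref{ass:wl-betamin}, with its constant taken larger than these, this deviation is strictly less than $\beta_{\min}$. Hence $\operatorname{sign}(\check\beta_{S^\star})=\rho$, and $\rho$ is a valid subgradient on $S^\star$.

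\textbf{Step 3 (strict dual feasibility).} Define the inactive dual vector $\check\gamma_{(S^\star)^c}=W_{(S^\star)^c}^{-1}X_{(S^\star)^c}^\top(y-X\check\beta)/\lambda$. Substituting the residual $y-X\check\beta=X_{S^\star}(\beta^\star_{S^\star}-\check\beta_{S^\star})+\sigma\varepsilon/\sqrt n$ gives
\[
\lambda\check\gamma_{(S^\star)^c}=W^{-1}_{(S^\star)^c}H_{(S^\star)^c,S^\star}H_{S^\star,S^\star}^{-1}\lambda W_{S^\star}\rho+W^{-1}_{(S^\star)^c}\bigl(z_{(S^\star)^c}-H_{(S^\star)^c,S^\star}H^{-1}_{S^\star,S^\star}z_{S^\star}\bigr).
\]
The first term has sup-norm at most $\lambda/2$ by the assumed irrepresentability bound. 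The second has sup-norm at most $\lambda/4$ by assumption. Therefore $\|\check\gamma_{(S^\star)^c}\|_\infty\le3/4<1$, so $(\check\beta,\check\gamma)$ satisfies the full KKT system and $\check\beta$ is a minimizer.

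\textbf{Step 4 (uniqueness).} This is the main subtlety. We must show that every minimizer equals $\check\beta$, not merely that $\check\beta$ is one. The standard argument uses the fact that all weighted-Lasso minimizers share the same fitted values $X\widehat\beta$, hence the same dual vector $\check\gamma$. Strict feasibility $|\check\gamma_j|<1$ then forces $\widehat\beta_j=0$ for every $j\notin S^\star$, because the condition $\gamma_j\widehat\beta_j=|\widehat\beta_j|$ cannot hold with $\widehat\beta_j\neq0$. Any minimizer is therefore supported on $S^\star$. On that support the objective is strictly convex, since $H_{S^\star,S^\star}\succ0$, so the minimizer is unique and equals $\check\beta$. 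This gives $\operatorname{supp}(\widehat\beta^{\rm WL})=S^\star$ with signs $\rho$.
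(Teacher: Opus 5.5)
Your proposal is correct and follows essentially the paper's proof. It uses the same witness on $S^\star$, the same sign control from \Cref{lem:wl-active} with \Cref{ass:wl-betamin}, and the same split of the inactive KKT vector into an irrepresentability term ($\le\lambda/2$) and a noise term ($\le\lambda/4$), and it gets uniqueness the same way, from shared fitted values plus strict convexity on $S^\star$. As in the paper, Step 2 also relies on the high-probability event of \Cref{lem:wl-active}, which is not among the lemma's stated hypotheses, so you should list it explicitly (and take the beta-min constant strictly larger than that lemma's constants).
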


\begin{proof}
Set $\widetilde\beta_{(S^\star)^c}:=0$ and define
\[
\widetilde\beta_{S^\star}
:=
\beta^\star_{S^\star}
+
H_{S^\star,S^\star}^{-1}
\left(
z_{S^\star}
-\lambda W_{S^\star}\rho
\right).
\]
By \Cref{lem:wl-active} and the beta-min assumption,
$\operatorname{sign}(\widetilde\beta_{S^\star})=\rho$.

The definition of $\widetilde\beta$ gives
\[
X_{S^\star}^\top(y-X\widetilde\beta)
=
\lambda W_{S^\star}\rho,
\]
so the active KKT conditions hold. On the inactive coordinates,
\begin{align*}
&
W_{(S^\star)^c}^{-1}
X_{(S^\star)^c}^\top(y-X\widetilde\beta)
\\
&=
W_{(S^\star)^c}^{-1}
\left(
z_{(S^\star)^c}
-
H_{(S^\star)^c,S^\star}
H_{S^\star,S^\star}^{-1}z_{S^\star}
\right)
\\
&\quad+
\lambda
W_{(S^\star)^c}^{-1}
H_{(S^\star)^c,S^\star}
H_{S^\star,S^\star}^{-1}
W_{S^\star}\rho.
\end{align*}
The inactive-noise and irrepresentability bounds therefore give
\[
\left\|
W_{(S^\star)^c}^{-1}
X_{(S^\star)^c}^\top(y-X\widetilde\beta)
\right\|_\infty
\le
\frac{\lambda}{4}
+
\frac{\lambda}{2}
<
\lambda.
\]
Thus the inactive dual variables are strictly feasible and
$\widetilde\beta$ satisfies the KKT conditions of the full problem.

All weighted-Lasso minimizers have the same fitted value and therefore
the same residual. The strict inactive inequalities force every
minimizer to vanish on $(S^\star)^c$. Since
$H_{S^\star,S^\star}$ is invertible, the restricted problem on
$S^\star$ is strictly convex, so the minimizer is unique. This proves
the claim.
\end{proof}

Combining the preceding lemmas, exact signed recovery holds with
probability at least $1-C\eta$ whenever
\[
n
\ge
C\left[
s+\log\frac1\eta
+
s_\alpha
\left(
L_{\alpha,\eta}^{\rm rec}
\right)^2
\right],
\]
\[
\lambda
\ge
C\frac{\sigma}{\sqrt n}
L_{\alpha,\eta}^{\rm rec},
\qquad
\beta_{\min}
\ge
C\frac{\sigma}{\sqrt n}
\sqrt{\log\frac{2s}{\eta}}
+
C\lambda\sqrt{s_\alpha}.
\]

\subsection{Choice of the weight parameter}

We finally optimize the weighted complexity
$s_\alpha (L_{\alpha,\delta}^{\rm rec})^2$ appearing in the above
bound. Let
\[
\ell_{\rm out}
:=
\begin{cases}
\log\!\left(\dfrac{4|S_{\rm init}^c\cap(S^\star)^c|}{\delta}\right),
& S_{\rm init}^c\cap(S^\star)^c\neq\varnothing,\\[1mm]
0, & \text{otherwise},
\end{cases}
\]
and
\[
\ell_{\rm in}
:=
\begin{cases}
\log\!\left(\dfrac{4|S_{\rm init}\setminus S^\star|}{\delta}\right),
& S_{\rm init}\setminus S^\star\neq\varnothing,\\[1mm]
0, & \text{otherwise}.
\end{cases}
\]
 Then
\begin{equation}\label{eq:maximum}
s_\alpha
\left(L_{\alpha,\delta}^{\rm rec}\right)^2
=
\left(
m_{\rm miss}
+
\frac{a_0}{\alpha^2}
\right)
\max\{
\ell_{\rm out},
\alpha^2\ell_{\rm in}
\}.
\end{equation}

If $\ell_{\rm in}=0$, the second inactive block is absent and the
right-hand side decreases with $\alpha$, so the oracle optimum is attained
in the limit $\alpha\to\infty$. If $\ell_{\rm in}>0$, the same
piecewise minimization as above gives
\[
\alpha_{\rm or}^2
=
\max\left\{
1,\frac{\ell_{\rm out}}{\ell_{\rm in}}
\right\}.
\]
Consequently,
\begin{align}\label{eq:weightlassoideal}
\inf_{\alpha\ge1}
s_\alpha
\left(L_{\alpha,\delta}^{\rm rec}\right)^2
&=
m_{\rm miss}\max\{\ell_{\rm out},\ell_{\rm in}\}
+
a_0\ell_{\rm in}.
\end{align}
In the regime $\ell_{\rm out}\ge\ell_{\rm in}$, which is the regime of
interest in our work, this becomes
\[
\inf_{\alpha\ge1}
s_\alpha
\left(L_{\alpha,\delta}^{\rm rec}\right)^2
=
m_{\rm miss}\ell_{\rm out}
+
a_0\ell_{\rm in}.
\]
If $\ell_{\rm in}=0$, the second inactive block is absent and the right-hand side decreases with $\alpha$. Hence
\[
\inf_{\alpha\geq 1}
s_\alpha\left(L_{\alpha,\delta}^{\rm rec}\right)^2
=
m_{\rm miss}\ell_{\rm out},
\]
with the infimum achieved in the limiting sense $\alpha\to\infty$. In particular, when $S_{\rm init}\subseteq S^\star$, there is no inherited-false-positive block, and the oracle choice corresponds to $\alpha_{\rm or}=+\infty$: initialized coordinates are left unpenalized,
while only the $m_{\rm miss}=|S^\star\setminus S_{\rm init}|$ missing
coordinates contribute to the support-search complexity.

This choice is oracle because $\ell_{\rm out}$ and $\ell_{\rm in}$ depend on the unknown support $S^\star$.

This optimization concerns only the weighted logarithmic contribution. Independently, the design event used in \Cref{lem:weighted-geometry} requires $n\gtrsim s+\log(1/\delta)$. This term is dominated in the regime emphasized in the main text, but must be retained in degenerate edge cases; in particular, when $S_{\rm init}=S^\star$, the optimized weighted contribution vanishes and the remaining requirement is $n\gtrsim s+\log(1/\delta)$.

For a computable choice, use $|S_{\rm init}\setminus S^\star|\le|S_{\rm init}|$ and $|S_{\rm init}^c\cap(S^\star)^c|\le|S_{\rm init}^c|$, so that $L_{\alpha,\delta}^{\rm rec}\le L_{\alpha,\delta}$. Since $|S_{\rm init}|\le d/2$, the same minimization gives
\[
\alpha_\star^2
=
\frac{
\log(4|S_{\rm init}^c|/\delta)
}{
\log(4|S_{\rm init}|/\delta)
}
\ge1.
\]
At this value,
\[
L_{\alpha_\star,\delta}^2
=
\log\frac{4|S_{\rm init}^c|}{\delta},
\]
and
\[
s_{\alpha_\star}
L_{\alpha_\star,\delta}^2
=
m_{\rm miss}
\log\frac{4|S_{\rm init}^c|}{\delta}
+
a_0
\log\frac{4|S_{\rm init}|}{\delta}.
\]
Taking $\eta$ to be a sufficiently small numerical multiple of $\delta$, the failure probabilities above sum to at most $\delta$, while replacing $\eta$ by this multiple only changes universal constants in the logarithms. Hence the computable choice
\[
\alpha_\star^2
=
\frac{
\log(4|S_{\rm init}^c|/\delta)
}{
\log(4|S_{\rm init}|/\delta)
}
\]
and
\[
\lambda
=
C\sigma
\sqrt{
\frac{
\log(4|S_{\rm init}^c|/\delta)
}{n}
}
\]
give that \Cref{ass:wl-betamin} holds under the specified sample complexity. It then yields
\[
\|\widehat\beta^{\rm WL}-\beta^\star\|_2
\lesssim
\frac{\sigma}{\sqrt n}
\sqrt{
m_{\rm miss}
\log\frac{4|S_{\rm init}^c|}{\delta}
+
a_0
\log\frac{4|S_{\rm init}|}{\delta}
},
\]
provided
\[
n
\gtrsim
m_{\rm miss}
\log\frac{4|S_{\rm init}^c|}{\delta}
+
a_0
\log\frac{4|S_{\rm init}|}{\delta}.
\]

Indeed, by \Cref{lem:weighted-geometry}, it is sufficient that
\[
n
\gtrsim
s
+
s_{\alpha_\star}
\max\left\{
\alpha_\star^2
\log\!\bigl(1+|S_{\rm init}\setminus S^\star|\bigr),
\log\!\bigl(1+|(S_{\rm init}\cup S^\star)^c|\bigr)
\right\}
+
\log\frac1\delta.
\]
Using
\[
\max(1,|S_{\rm init}\setminus S^\star|)
\le |S_{\rm init}|,
\qquad
\max(1,|(S_{\rm init}\cup S^\star)^c|)
\le |S_{\rm init}^c|,
\]
and the definition of $\alpha_\star$, the maximum in \Cref{eq:maximum} is bounded,
up to universal constants, by
\[
\log\frac{4|S_{\rm init}^c|}{\delta}.
\]
Therefore
\[
s_{\alpha_\star}
\log\frac{4|S_{\rm init}^c|}{\delta}
=
m_{\rm miss}
\log\frac{4|S_{\rm init}^c|}{\delta}
+
a_0
\log\frac{4|S_{\rm init}|}{\delta}.
\]
Since this quantity also controls $s$ up to universal constants, the preceding condition reduces to the stated sample-size requirement.

We also need the beta-min condition of \Cref{lem:wl-pdw} to have exact recovery
\[
\beta_{\min}
\gtrsim
\frac{\sigma}{\sqrt n}
\left[
\sqrt{\log\frac{4s}{\delta}}
+
\sqrt{
m_{\rm miss}
\log\frac{4|S_{\rm init}^c|}{\delta}
+
a_0
\log\frac{4|S_{\rm init}|}{\delta}
}
\right].
\]
Under these conditions,
$\operatorname{supp}(\widehat\beta^{\rm WL})=S^\star$ and the signs
are recovered exactly.

For $\sigma=0$, fix $\alpha=\alpha_\star$. On the design event of \Cref{lem:weighted-geometry}, \Cref{lem:weighted-estimation} applies for every $\lambda>0$ and gives
\[
\|\widehat\beta^{\mathrm{WL}}_\lambda-\beta^\star\|_2
\le 12\lambda\sqrt{s_{\alpha_\star}}\longrightarrow 0
\qquad\text{as }\lambda\to 0.
\]
Moreover, any minimizer of the weighted $\ell_1$ penalty
subject to $X\beta=y$ differs from $\beta^\star$ by an element of $C_{\alpha_\star}(S^\star)\cap\ker X=\{\mathbf{0}\}$.
Thus this constrained minimizer is unique and coincides
with the limiting estimator, proving the noiseless case.

\section{Experiments}
\label{app:experiments}

All experiments use the Gaussian sparse regression model of \Cref{subsec:regression}. The saddle-to-saddle trajectory is implemented by solving the signed constrained least-squares problem at each visited face and then advancing the dual variable to the next boundary hit. In trajectory plots, we use the saddle index on the horizontal axis rather than accelerated time, since consecutive hitting-time intervals may have very different scales.

\begin{figure}[htbp]
    \centering
    \begin{minipage}[c]{0.49\linewidth}
        \centering
        \includegraphics[width=\linewidth]{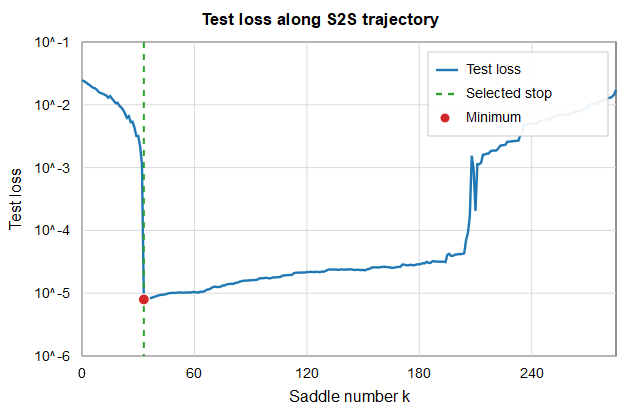}
    \end{minipage}
    \hfill
    \begin{minipage}[c]{0.49\linewidth}
        \centering
        \includegraphics[width=\linewidth]{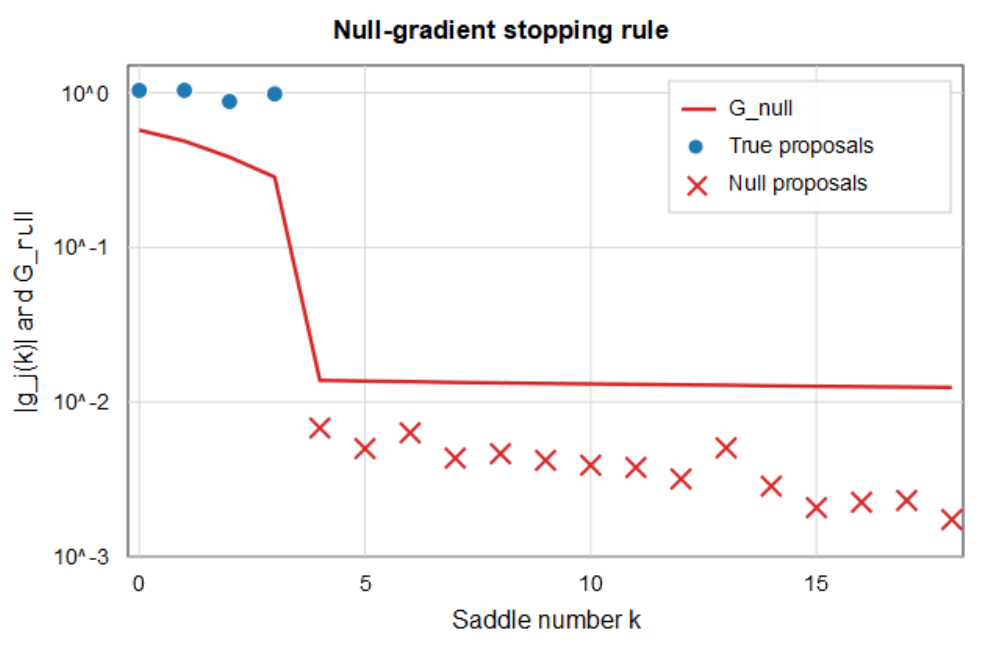}
    \end{minipage}
    \caption{
    Early stopping along S2S trajectories on two independent instances.
    \textbf{Left:} test loss for $d=300$, $n_{\rm train}=220$,
    $n_{\rm test}=1200$, $s=40$, and $\sigma=0.09$. The test loss reaches
    its oracle minimum at saddle $k=33$ and then increases sharply as the
    trajectory continues.
    \textbf{Right:} null-gradient stopping rule for $d=32$, $n=220$,
    $s=6$, and $\sigma=0.05$. The four missing true coordinates are proposed
    before any null coordinate; the first null proposal falls below the
    stopping threshold and is therefore rejected.
    }
    \label{fig:early-stopping}
\end{figure}

\paragraph{Early-stopping illustrations.}
The left panel of \Cref{fig:early-stopping} uses seed $0$, with $g_0=12$, $w_0=0$, initialization scale $0.2$, and $\eta=0.1$. We follow the trajectory for $285$ events, beyond the point that would be useful for prediction. The test loss is minimized at saddle $k=33$, where it equals $7.95\times10^{-6}$. Continuing the trajectory yields a final test loss of $1.73\times10^{-2}$, more than $2\times10^3$ times larger, while the final support contains $297$ of the $300$ coordinates. This panel is not meant to evaluate the stopping rule itself: it illustrates the overfitting that occurs when the S2S trajectory is followed too far, and hence the need for an early-stopping criterion.

The right panel directly illustrates the stopping criterion. We use seed $0$, $d=32$, $n=220$, $s=6$, $g_0=2$, $w_0=f_0=0$, $\sigma=0.05$ and $\eta=0.1$. Thus four true coordinates remain to be learned. At each saddle, we plot the absolute gradient coordinate associated with the next proposed variable together with the uniform null-gradient threshold of \Cref{app:stopping}. The first four proposals are precisely the four missing true coordinates and lie above the threshold. At saddle $k=4$, the first null coordinate is proposed, with $|g_j|=6.79\times10^{-3}$, while $G_{\rm null}=1.38\times10^{-2}$. The stopping rule therefore rejects this proposal and stops before introducing a false positive.

The two panels illustrate the two roles of early stopping along the S2S path. The left panel shows that continuing the trajectory well beyond the useful part of the path can lead to severe overfitting, even though the training procedure keeps activating coordinates. The right panel shows that the null-gradient rule can identify a natural stopping point: the missing true coordinates are accepted first, while the first null proposal falls below the threshold and is rejected. Together, these experiments suggest that the stopping criterion can truncate the trajectory near the statistically relevant part of the path, before spurious activations start to accumulate.

\begin{figure}[htbp]
    \centering
    \ificlr\vspace{-0.8em}\fi
    \includegraphics[width=0.62\linewidth]
    {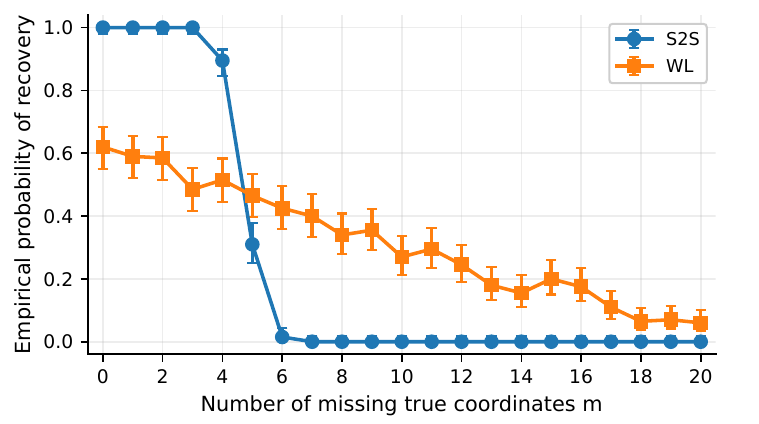}
    \setlength{\abovecaptionskip}{0pt}
    \setlength{\belowcaptionskip}{0pt}
    \caption{
    Empirical probability of exact support recovery as a function of the
    number $m$ of missing true coordinates for a clean initialization
    ($d=260$, $n=140$, $s=20$, $\sigma=0.005$).
    Probabilities are estimated over $200$ independent trials.
    S2S succeeds with high probability when few true coordinates remain to
    be learned, while its recovery probability decreases sharply as $m$
    increases. Weighted Lasso degrades more gradually.
    }
    \label{fig:recovery-m}
    \ificlr\vspace{-0.5em}\fi
\end{figure}

\paragraph{Recovery as a function of initialization quality.}
For \Cref{fig:recovery-m}, we use a clean initialization, so $F_0=\emptyset$, with $d=260$, $n=140$, $s=20$, and $\sigma=0.005$. The nonzero entries of $\beta^\star$ take values in $\{-1,+1\}$. For each value of $m$, the initialization contains $s-m$ true coordinates with their correct signs and no false positive. We use the S2S stopping rule with $\eta=10^{-3}$. The weighted-Lasso baseline uses the balanced weight $\alpha_\star^2= \frac{\log(4|S_{\rm init}^c|/\delta)} {\log(4|S_{\rm init}|/\delta)}$ described in \Cref{sec:early-stopped}. Recovery is declared when the estimated support equals $S^\star$; since the initialization is clean, this is equivalent to $S^\star\subseteq S_{\rm final}\subseteq S^\star\cup F_0$. Each point is averaged over $200$ independent instances and the error bars report Monte Carlo standard errors. Both methods are evaluated on the same generated instances.

This experiment isolates the effect of the number of true coordinates that remain to be learned. As $m$ increases, S2S becomes more likely to stop before all true coordinates have entered: the stopping rule prevents null coordinates from being accepted, but does not by itself guarantee full recovery. Weighted Lasso performs a single global support selection and degrades more gradually in this experiment. Its regularization parameter is set at the theoretical scale rather than tuned by cross-validation, so the comparison concerns the theoretically calibrated procedures rather than optimal predictive performance.

\end{document}